\documentclass[accepted]{arxiv} % for initial submission
% \documentclass[accepted]{uai2021} % after acceptance, for a revised                  % version; also before submission to
                                    % see how the non-anonymous paper
                                    % would look like
% NOTE: Only comment/uncomment the lines above as appropriate,
%       as they will be replaced automatically for papers to be published.
%       Do not make any other change above this note for an accepted
%       version.

%% Choose your variant of English; be consistent
\usepackage[american]{babel}
% \usepackage[british]{babel}
%% Some suggested packages, as needed:
\usepackage{natbib} % has a nice set of citation styles and commands
    \bibliographystyle{plainnat}

\hypersetup{
    colorlinks,
    linkcolor={red!50!black},
    citecolor={blue!50!black},
    urlcolor={blue!80!black}
}
% \usepackage{mathtools} % amsmath with fixes and additions
% \usepackage{siunitx} % for proper typesetting of numbers and units
% \usepackage{booktabs} % commands to create good-looking tables
% \usepackage{tikz} % nice language for creating drawings and diagrams

%% Provided macros
% \smaller: Because the class footnote size is essentially LaTeX's \small,
%           redefining \footnotesize, we provide the original \footnotesize
%           using this macro.
%           (Use only sparingly, e.g., in drawings, as it is quite small.)

%% Self-defined macros
% \usepackage{newtxmath}
\usepackage{microtype}
\usepackage{graphicx}
\usepackage{booktabs} % for professional tables
\usepackage{amsmath, amssymb}
\usepackage{mathtools}
\usepackage{amsthm}
\usepackage{xcolor}
\usepackage{mathrsfs} 
\usepackage{xfrac} 
\usepackage{dsfont}
\usepackage{mleftright}
\usepackage{subcaption}
\usepackage{multicol}
\usepackage[ruled]{algorithm2e}
\usepackage[noend]{algorithmic}
% \usepackage{widetext}

% \renewcommand{\thealgorithm}{}

%%%%%%%%%%%%%%%%%%%%%%%%%%%%%%%%%%%%%%%%%%%%%%%%%%%%%%%%%%%%%%%%%%%%%%%%%%%%%%%
% bs symbols
%%%%%%%%%%%%%%%%%%%%%%%%%%%%%%%%%%%%%%%%%%%%%%%%%%%%%%%%%%%%%%%%%%%%%%%%%%%%%%%
\newcommand{\bsX}{\boldsymbol{X}}
\newcommand{\bse}{\boldsymbol{e}}
\newcommand{\bszero}{\boldsymbol{0}}
\newcommand{\bsalpha}{\boldsymbol{\alpha}}

\newcommand{\bsc}{\boldsymbol{c}}
\newcommand{\bsZ}{\boldsymbol{Z}}

\newcommand{\bsone}{\boldsymbol{1}}
\newcommand{\bsgamma}{\boldsymbol{\gamma}}
\newcommand{\hbsgamma}{\boldsymbol{\hat\gamma}}
\newcommand{\hbslambda}{\boldsymbol{\hat\lambda}}
\newcommand{\bslambda}{\boldsymbol{\lambda}}
\newcommand{\baralpha}{\bar{\alpha}}

\newcommand{\bsy}{\boldsymbol{y}}

\newcommand{\bsx}{\boldsymbol{x}}
\newcommand{\bsb}{\boldsymbol{b}}

%%%%%%%%%%%%%%%%%%%%%%%%%%%%%%%%%%%%%%%%%%%%%%%%%%%%%%%%%%%%%%%%%%%%%%%%%%%%%%%
% bf symbols
%%%%%%%%%%%%%%%%%%%%%%%%%%%%%%%%%%%%%%%%%%%%%%%%%%%%%%%%%%%%%%%%%%%%%%%%%%%%%%%

\newcommand{\Prob}{\mathbb{P}}
\newcommand{\excess}{\mathcal{E}}
\newcommand{\lag}{\mathcal{L}}

%%%%%%%%%%%%%%%%%%%%%%%%%%%%%%%%%%%%%%%%%%%%%%%%%%%%%%%%%%%%%%%%%%%%%%%%%%%%%%%
% Math stuff
%%%%%%%%%%%%%%%%%%%%%%%%%%%%%%%%%%%%%%%%%%%%%%%%%%%%%%%%%%%%%%%%%%%%%%%%%%%%%%%

\newcommand{\Exp}{\mathbb{E}}

\newcommand{\bbR}{\mathbb{R}}
\newcommand{\bbN}{\mathbb{N}}
\newcommand{\risk}{\mathcal{R}}
\newcommand{\class}[1]{\mathcal{#1}}

\newcommand{\eqdef}{\vcentcolon=}

\newcommand{\bszeta}{\boldsymbol{\zeta}}

\newcommand{\parent}[1]{\left( #1 \right)}
\newcommand{\CPs}[1]{\Prob_{\bsX | S = s} \left( #1\right)}
\newcommand{\ECPs}[1]{\hat\Prob_{\bsX | S = s} \left( #1\right)}
\newcommand{\ens}[1]{\left\{ #1\right\}}
\newcommand{\norm}[1]{\left\lVert#1\right\rVert}

\newcommand{\abs}[1]{\left\lvert #1\right\rvert}
\newcommand{\absin}[1]{\lvert #1\rvert}

\newcommand{\ind}[1]{\mathds{1}\left(#1\right)}
\newcommand{\scalar}[2]{\left\langle #1, #2\right\rangle}

%%%%%%%%%%%%%%%%%%%%%%%%%%%%%%%%%%%%%%%%%%%%%%%%%%%%%%%%%%%%%%%%%%%%%%%%%%%%%%%
% HAT symbols
%%%%%%%%%%%%%%%%%%%%%%%%%%%%%%%%%%%%%%%%%%%%%%%%%%%%%%%%%%%%%%%%%%%%%%%%%%%%%%%
\newcommand{\hProb}{\hat{\mathbb{P}}}

\newcommand{\heta}{\hat{\eta}}

%%%%%%%%%%%%%%%%%%%%%%%%%%%%%%%%%%%%%%%%%%%%%%%%%%%%%%%%%%%%%%%%%%%%%%%%%%%%%%%
% Math operators
%%%%%%%%%%%%%%%%%%%%%%%%%%%%%%%%%%%%%%%%%%%%%%%%%%%%%%%%%%%%%%%%%%%%%%%%%%%%%%%

\DeclareMathOperator*{\test}{test}
\DeclareMathOperator*{\acc}{acc}
\DeclareMathOperator*{\pos}{pos}

\DeclareMathOperator*{\clf}{clf}

\DeclarePairedDelimiter\ceil{\lceil}{\rceil}

\DeclareMathOperator{\PT}{PT}
\DeclareMathOperator{\NAB}{NAb}
\DeclareMathOperator{\nnZ}{nnz}
%%%%%%%%%%%%%%%%%%%%%%%%%%%%%%%%%%%%%%%%%%%%%%%%%%%%%%%%%%%%%%%%%%%%%%%%%%%%%%%
% Latin
%%%%%%%%%%%%%%%%%%%%%%%%%%%%%%%%%%%%%%%%%%%%%%%%%%%%%%%%%%%%%%%%%%%%%%%%%%%%%%%
\newcommand{\ie}{{\em i.e.,~}}

\newcommand{\eg}{{\em e.g.,}}

\newcommand{\resp}{{\em resp.~}}

\newcommand{\wrt}{{\em w.r.t.~}}
\newcommand{\iid}{{\rm i.i.d.~}}

%%%%%%%%%%%%%%%%%%%%%%%%%%%%%%%%%%%%%%%%%%%%%%%%%%%%%%%%%%%%%%%%%%%%%%%%%%%%%%%
% Theorems
%%%%%%%%%%%%%%%%%%%%%%%%%%%%%%%%%%%%%%%%%%%%%%%%%%%%%%%%%%%%%%%%%%%%%%%%%%%%%%%
\newtheorem{theorem}{Theorem}[section]

\newtheorem{proposition}[theorem]{Proposition}
\newtheorem{lemma}[theorem]{Lemma}
\newtheorem{assumption}[theorem]{Assumption}
\newtheorem{remark}[theorem]{Remark}

\newtheorem*{theorem*}{Theorem}
\newtheorem*{proposition*}{Proposition}
\newtheorem*{lemma*}{Lemma}

\usepackage[textwidth=2.0cm, textsize=tiny]{todonotes} % for writting

% \pagenumbering{roman}

\title{Classification with abstention but without disparities}

% The standard author block has changed for UAI 2021 to provide
% more space for long author lists and allow for complex affiliations
%
% All author information is authomatically removed by the class for the
% anonymous submission version of your paper, so you can already add your
% information below.
%
% Add authors in order of decreasing contribution
\author[1]{Nicolas Schreuder} % Lead author
\author[2]{Evgenii Chzhen}
% Add affiliations after the authors
\affil[1]{%
   CREST, ENSAE, IP Paris
}
\affil[2]{%
    LMO, Université Paris-Saclay, CNRS, Inria
}

\begin{document}
\onecolumn
\maketitle

\begin{abstract}
Classification with abstention has gained a lot of attention in recent years as it allows to incorporate human decision-makers in the process. Yet, abstention can potentially amplify disparities and lead to discriminatory predictions. 
The goal of this work is to build a general purpose classification algorithm, which is able to abstain from prediction, while avoiding disparate impact.
We formalize this problem as risk minimization under fairness and abstention constraints for which we derive the form of the optimal classifier.
Building on this result, we propose a post-processing classification algorithm, which is able to modify any off-the-shelf score-based classifier using only unlabeled sample.
We establish finite sample risk, fairness, and abstention guarantees for the proposed algorithm.
In particular, it is shown that fairness and abstention constraints can be achieved independently from the initial classifier as long as sufficiently many unlabeled data is available.
The risk guarantee is established in terms of the quality of the initial classifier.
Our post-processing scheme reduces to a sparse linear program allowing for an efficient implementation, which we provide.
Finally, we validate our method empirically showing that moderate abstention rates allow to bypass the risk-fairness trade-off.
\end{abstract}

\section{Introduction}
In recent years classification with abstention or with reject option has gained a considerable amount of attention from both statistical and machine learning communities. Probably the earliest appearance of classification with reject option can be found in the works of~\citet{Chow57, Chow70} in the context of information retrieval and an 
initial statistical treatment was given in~\citep{gyorfi_gyorfi_vajda79}.
%Much later,~\citet{Herbei_Wegkamp06}, in the spirit of~\citep{audibert2007fast}, provided non-parametric analysis for the problem of binary classification with a fixed rejection cost. Several extensions followed later, all working with fixed cost rejection~\citep{YW10,Wegkamp_Yuan11,Bartlett_Wegkamp08,Herbei_Wegkamp06}.
Much later,~\citet{Herbei_Wegkamp06} provided non-parametric analysis for the problem of binary classification with a fixed rejection cost in the spirit of~\citet{audibert2007fast}. Several extensions followed later, all working with fixed cost of rejection~\citep{YW10,Wegkamp_Yuan11,Bartlett_Wegkamp08}.

Following the conformal prediction literature~\citep[see,  \eg][]{vovk2005algorithmic}, \citet{Lei14} considers a framework where ones wants to minimize the reject rate under a pre-specified accuracy constraint, meanwhile~\citet{denis2020consistency} target its reversed formulation. Both derive finite sample guarantees for plug-in type classification procedures and instanciate their analysis to standard non-parametric class of distributions.
In a similar direction, several practical methods~\citep{grandvalet2008support,nadeem2009accuracy} have been proposed in the machine learning community to address the problem of classification with abstention. Recently, \citet{bousquet2019fast,neu2020fast,puchkin2021exponential} show that abstention can significantly improve regret bounds and convergence rates for the problems of online and batch classification.

Crucially, in our work we view abstention as a mechanism to lighten the burden of fairness constraints and bypass the risk-fairness trade-off
% Crucially, rejection can be seen as a mechanism to avoid the natural trade-off between accuracy and fairness
\citep{agarwal2018reductions,pmlr-v81-menon18a,chzhen2020minimax}: one can enjoy the best of both worlds -- a simultaneously fair and accurate classifier -- at the cost of rejection. A majority of observations are still classified in an automatic manner, while the rejected ones can be handled by, \eg~human experts.
Importantly, in our setting, the rejection rate is rigorously controlled by the practitioner depending on the number of available experts.
In addition, since it is illusory to assume that a data-dependent classifier can make error-less and trustworthy decisions, it is desirable to put human experts back in the loop for sensitive tasks. The rejection mechanism partially transfers the burden of optimizing those conflicting quantities to human experts, who can eventually have access to more information to make a better informed decision (\eg~a doctor can ask for extra medical examination for its final diagnosis).

Fairness in binary classification is a very popular topic with various types of algorithmic and statistical contributions~\citep[see,  \eg][]{hardt2016equality,barocas-hardt-narayanan}.
However, abstention framework has not yet received a lot of attention in the context of fair learning. Notable exceptions are work of~\citet{madras2018defer, jones2020selective}.
The latter demonstrates that an imprudent use of abstention might amplify potential disparities already present in the data. In particular, they show that in the framework of prediction without disparate treatment~\citep{zafar2017fairness} the use of the same rejection threshold across sensitive groups might result in a large group-wise risks disparities.
As a potential remedy, our work offers a theoretically grounded way to enforce fairness constraints as well as a desired group-dependent reject rates.
The idea of relying on a reject mechanism to enforce fairness has only been explored once, in \citet{madras2018defer}. The authors introduce ``learning to defer'' framework -- an extension of classification with abstention -- where the cost of rejection is allowed to depend on the prediction of an external decision-maker (\eg~a human expert). The authors argue that by making the automated model aware of the potential biases and weaknesses of the external decision-maker, it can globally optimize for accuracy and fairness. The authors enforce Equalized Odds~\citep{hardt2016equality} through regularization of the risk and thus cannot control explicitly the reject rate, which might potentially lead to a huge external decision-maker costs.
While the authors provide empirical evidences of their claims, theoretical justification of their results remains open.
Our work offers a completely theory-driven way to enforce both fairness and rejection constraints while optimizing for accuracy, leading to a computationally efficient post-processing algorithm.
% for future works. 
% Furthermore the authors do not provide any theoretical analysis of their claims regarding fairness of their procedures. It is also not clear how to get data on performance of external decision-maker.

% Crucially, rejection can be seen as a mechanism to avoid the natural trade-off between accuracy and fairness \citep{agarwal2018reductions,pmlr-v81-menon18a}: one can enjoy the best of both worlds -- a simultaneously fair and accurate classifier -- at the cost of rejection. A majority of observations are still classified in an automatic manner. The human experts are only required to classify a small amount of observations, which can be tuned exactly in our setting, depending on the number of available experts.
% Since it is illusory to assume that a data-dependent classifier can make error-less and trustworthy decisions, it is reasonable to put human experts back in the loop for sensitive task. Intuitively, the rejection mechanism partially transfers the burden of optimizing those conflicting quantities to human experts, who can eventually have access to more information to make a better informed decision (\eg a doctor can ask for extra medical examination for its final diagnosis).

\textbf{Contributions.} Our work combines and extends previous results in abstention framework with recent results on fair binary classification. Namely, similarly to \citep{denis2020consistency}, we aim at minimizing misclassification risk under a control over \emph{group-wise} reject rates. As we would like to avoid disparate impact, we explicitly add this as a constraint to our framework.
We derive the optimal form of a reject classifier, which minimizes the misclassification risk under the discussed constraints. Our explicit characterization of the optimal reject classifier provides a better understating of the interplay between, on one side, the fairness and rejection constraints and, on the other side, the accuracy.
We propose a data-driven post-processing algorithm which enjoys generic plug-and-play finite sample guarantees. An appealing feature of our post-processing algorithm is that it can be used on top of \emph{any} pre-trained classifier,  thus avoiding the -- potentially high -- cost of re-fitting a classifier from scratch. From numerical perspective, the proposed method reduces to a solution of a sparse linear program, allowing us to leverage efficient LP solvers.
Numerical experiments validate our theoretical result demonstrating that the proposed method successfully enforces fairness and rejection constraints in practice, while achieving a high level of accuracy.

\textbf{Notation.} For each $K \in \bbN$ we denote by $[K]$ the set of the first $K$ positive integers. The standard Euclidean inner product is denoted by $\scalar{\cdot}{\cdot}$. For a real number $a \in \bbR$ we write $(b)_+$ (\resp $(a)_-$) to denote the positive (\resp the negative) part of $a$. For two real numbers $a, b$ we denote by $a \vee b$ (\resp $a \wedge b$) the maximum (\resp the minimum) between the two.
% For any real valued function $G$ we write $G^+$ to denote $(G)_+$. 
We denote by $\bsone \in \bbR^K$ the vector composed of ones and by $\bse_s \in \bbR^K$ the $s^{\text{th}}$ basis vector of $\bbR^K$.

\section{Problem presentation}
Consider a triplet $(\bsX, S, Y) \sim \Prob$, where $\bsX \in \bbR^d$ is the feature vector, $S \in [K]$ is the sensitive attribute, and $Y \in \{0, 1\}$ is the binary label to be predicted. A classifier is a mapping $g : \bbR^d \times [K] \to \{0, 1, r\}$. That is, any classifier $g$ is able to provide a prediction in $\{0, 1\}$, or to abstain from prediction by outputting $r$.
With any classifier $g$, we associate the following quantities:
\begin{equation}
\label{eq:basics}
% \begin{small}
\begin{aligned}
    &\risk(g) \eqdef \Prob(Y \neq g(\bsX, S) \mid g(\bsX, S) \neq r)\enspace,\\
     %&\risk_s(g) \eqdef \Prob(Y \neq g(\bsZ) \mid S = s, g(\bsZ) \neq r)\\
    &\NAB_s(g) \eqdef \Prob(g(\bsX, S) \neq r \mid S = s)\enspace,\\
    &\NAB(g) \eqdef \Prob(g(\bsX, S) \neq r)\enspace,\\
    &\PT_s(g) \eqdef \mathbb{P}(g(\bsX, S) = 1 \mid S=s, g(\bsX, S)\neq r)\enspace,\\
    &\PT(g) \eqdef \mathbb{P}(g(\bsX, S) = 1 \mid g(\bsX, S)\neq r))\enspace.
\end{aligned}
% \end{small}
\end{equation}
The first one is the risk of a classifier, which measures the probability of incorrect prediction, given that an actual prediction was issued. The second two quantities measure the group-wise and marginal prediction rates. The last two quantities describe the group-wise and marginal rates of positive predictions given that the prediction was made.
Intuitively, a good classifier has low risk $\risk$, high $\NAB_s$, and low disparities between $\PT_s(g)$.

\paragraph{Fairness constraint.} We formalize fairness through the notion of Demographic Parity \citep[see for instance,][]{barocas-hardt-narayanan}. A predictor $g$ is said to satisfy Demographic Parity (or, equivalently, to avoid Disparate Impact) if the distribution of its prediction is independent from the sensitive attribute. Formally, in the standard binary classification framework it means that for any $z \in \{0, 1\}$ and for any $s, s' \in [K]$,
\begin{align*}
    \Prob(g(X,S) = z \mid S=s) = \Prob(g(X,S) = z \mid S=s') \enspace.
\end{align*}
In the setting of classification with abstention, we naturally want to condition on the fact that the classifier issues a prediction, that is, $g(X, S) \neq r$. Using the quantities introduced in Eq.~\eqref{eq:basics}, the latter reduces to
\begin{align*}
    \forall s \in [K],\quad\PT_s(g) = \PT(g)\enspace.
\end{align*}

\paragraph{Penalized version.}
There are various trade-offs that one can consider between the quantities in Eq.~\eqref{eq:basics}. For instance, 
adapting the approach of~\citet{Herbei_Wegkamp06} to the context of fairness,
one can target a prediction which avoids disparate impact and minimizes penalized risk.
Formally, it amounts to solving the following problem:
\begin{equation}
\label{eq:DPWA_pen}
\tag{\textbf{P-DPWA}}
\begin{aligned}
    &\min_{g : \mathbb{R}^d \times [K] \to \{0, 1, r\}} \risk(g) + \sum_{s = 1}^K\lambda_s\NAB_s(g)\\
    &\text{s.t. } \forall s \in [K],\quad
    \PT_s(g) = \PT(g)
\end{aligned}\enspace,
\end{equation}
for some $\lambda_s \geq 0$, $s \in [K]$.
This approach also resembles the one employed by~\cite{madras2018defer}, who additionally penalized for fairness violation instead of directly controlling it.
The main issue with the formulation~\eqref{eq:DPWA_pen} is connected with the choice of the penalization parameters $\lambda_s \geq 0$, $s \in [K]$, which do not have simple and intuitive interpretation.
Indeed, it is impossible to know beforehand which $\lambda_s \geq 0$, $s \in [K]$ will result in a usable reject rate, forcing the practitioner to explore the whole space of the hyperparameters $\lambda_s \geq 0$, $s \in [K]$.
Instead of the above formulation, we consider the problem in which one is able to \emph{explicitly} control the rejection rate. In particular, such an approach allows us to develop a \emph{parameter-free} post-processing method.
\paragraph{Explicit control of reject.}
Given $\bsalpha = (\alpha_1, \ldots, \alpha_K)^\top \in [0, 1]^K$, our goal is to find a solution of the following problem
\begin{equation}
\label{eq:DPWA}
\tag{\textbf{DPWA}}
\begin{aligned}
    &\min_{g : \mathbb{R}^d \times [K] \to \{0, 1, r\}} \risk(g)\\
    &\text{s.t. }, \forall s \in [K],
    \begin{cases}
    \NAB_s(g) = \alpha_s\\
    \PT_s(g) = \PT(g) \,
    \end{cases}
\end{aligned}\enspace.
\end{equation}
% We note that the above problem is feasible no matter the distribution $\Prob$. Indeed, a constant classifier $g(\bsx, s) \equiv 1$ naturally satisfies both constraints.
It will be shown later that, under a mild assumption on the distribution of the conditional expectation $\Exp[Y \mid \bsX, S]$, the above problem admits a global minimizer written in the form of group-wise thresholding.

The first constraint in~\eqref{eq:DPWA} specifies the abstention level accepted for each class while the second constraint, as before, demands the classifier $g$ to avoid disparate impact.
Notably, in this formulation, the parameter vector $\bsalpha \in [0, 1]^K$ has a simple and intuitive interpretation -- it allows to fix precisely \emph{different} levels of rejects for different groups. This, for instance, can be beneficial, if $g(\bsx, s) = r$ is followed by the intervention of a human decision-maker, who replaces the classifier. One can force a higher rejection rate (\ie a higher rate of human intervention) for disadvantaged groups by lowering the corresponding $\alpha_s \in [0, 1]$. Crucially, we implicitly assume that the practitioner is able to treat unclassified instances in an accurate and fair manner. While this assumption is void for the theoretical contributions of our paper, we \emph{warn} the practitioner that it must not be overlooked once our method is deployed in real world.

This formulation allows to bypass the usual trade-off between fairness and accuracy at the price of rejection. Indeed, note that a classifier that solves~\eqref{eq:DPWA} is fair for any parameters $(\alpha_s)_{s \in [K]}$. At the same time, setting $\alpha_1 = \ldots = \alpha_K = \tilde\alpha$ for some $\tilde\alpha \in (0, 1]$, one can observe that by varying $\tilde\alpha$ we can recover the accuracy of a classifier without constraints while still satisfying Demographic Parity. This will be later empirically confirmed in Section~\ref{sec:exps}. We again emphasize that the accuracy gain comes at a price of a possible reject region, which, depending on the application at hand might or might not constitute a reasonable price.

\section{Optimal classifier}

Our first theoretical contribution is the derivation of a classification strategy $g^*$, which is a solution of~\eqref{eq:DPWA}.
We define the conditional expectation of the label $Y$ knowing $(\bsX, S)$ as  
\begin{align*}
    \eta(\bsX, S) = \Exp[Y \mid \bsX,S] \enspace.
\end{align*}
It is known that the Bayes optimal rule for the problem of binary classification with misclassification risk is given by the point-wise thresholding of $\eta(\bsX, S)$ on the level $1/2$~\citep{devroye2013probabilistic}. In our case the classifier does not correspond to the Bayes decision. Instead, it is a solution of a constrained optimization problem with constraints that depend on the unknown data distribution $\Prob$. In several frameworks, which are also formulated  as risk minimization under distribution dependent constraints, it is possible to obtain a closed form expression of a minimizer under fairly mild assumptions. In particular, it is the case for the classification with reject option~\citep{Chow70,Lei14,denis2020consistency} as well as classification under various fairness constraints~\citep{hardt2016equality,chzhen2019leveraging,del2020review}.
Similarly to the above contributions, we will make a mild assumption on the behaviour of $\eta(\bsX, S)$, which is, for instance, naturally satisfied whenever $\eta(\bsX, S)$ admits a density \wrt the Lebesgue measure.
\begin{assumption}
    \label{as:continuous_eta}
    The random variables $(\eta(\bsX, S) \mid S = s)$ are non-atomic for all $s \in [K]$.
\end{assumption}
One can actually get rid of this assumption, as explained in~\citet{Lei14}, by switching from deterministic classification strategies, which are valued in $\{0, 1, r\}$, to randomized classifiers, which output a distribution over $\{0, 1, r\}$.

To present the main result of this section, we introduce the notations $p_s := \Prob(S = s)$, $\baralpha \coloneqq \sum_{s \in [K]} p_s \alpha_s$ and we define the following function
\begin{align*}
    G(\bsx, s, \bslambda, \bsgamma) =
    &\left\lvert  \frac{p_s}{2\baralpha}(1-2\eta(\bsx,s)-\scalar{\bsgamma}{\bsone}) + \frac{\scalar{\bsgamma}{\bse_s}}{2\alpha_s} \right\rvert 
    - \frac{p_s}{2\baralpha}(1-\scalar{\bsgamma}{\bsone}) - \scalar{\bslambda}{\bse_s} - \frac{\scalar{\bsgamma}{\bse_s}}{2\alpha_s}\enspace,
\end{align*}
which plays a key role in the derivation of an optimal classifier for the problem \eqref{eq:DPWA}.
We now state the first result of this work, which provides a form of $g^*$ -- solution for \eqref{eq:DPWA}.
\begin{theorem}
\label{thm:optimal}
Under Assumption~\ref{as:continuous_eta}, an optimal classifier for the problem \eqref{eq:DPWA} is given for all $(\bsx, s) \in \bbR^d \times [K]$ by
%\begin{align*}
%    \tilde{g}(\bsx, s) = \begin{cases}
%       r, \text{ if } \left\lvert \frac{p_s(\tilde{\gamma}-2\eta(\bsx, s))}{\bar\alpha} + \frac{\gamma_s}{\alpha_s}\right\rvert \leq 2\lambda_s + \frac{p_s\tilde{\gamma}}{\bar\alpha} + \frac{\gamma_s}{\alpha_s} \\
%       \mathds{1}(\frac{p_s}{\bar\alpha}(\tilde{\gamma}-2\eta(\bsx, s) ) + \frac{\gamma_s}{\alpha_s} \leq 0), \text{ otherwise}
%    \end{cases}
%\end{align*}
\begin{align*}
    g^*(\bsx,s) = \begin{cases}
       r &\text{if } G(\bsx, s, \bslambda^*, \bsgamma^*) \leq 0 \\
       \mathds{1}\left(\eta(\bsx,s)  \geq  \frac{1}{2} + c_{\bsgamma^*, s}\right) &\text{otherwise}
    \end{cases}\enspace,
\end{align*}
where $(\bslambda^*, \bsgamma^*)$ are solutions of
\begin{align*}
    \min_{(\bslambda, \bsgamma)}\ens{\scalar{\bslambda}{\bsalpha} + \sum_{s=1}^K  \Exp_{\bsX|S=s}[(G(\bsX, S, \bslambda, \bsgamma))_+]}\enspace,
\end{align*}
and $c_{\bsgamma^*, s} \eqdef \frac{1}{2}\big(\frac{\baralpha\gamma_s^*}{\alpha_sp_s} - \scalar{\bsone}{\bsgamma^*}\big)$.
\end{theorem}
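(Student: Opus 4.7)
The plan is to use Lagrangian duality. Under the constraint $\NAB_s(g)=\alpha_s$, the denominator of $\risk(g)$ equals the constant $\baralpha=\sum_s p_s\alpha_s$, so the primal problem~\eqref{eq:DPWA} is equivalent (up to the multiplicative constant $1/\baralpha$) to minimizing the linear objective $\Exp[(1-\eta(\bsX,S))\ind{g(\bsX,S)=1} + \eta(\bsX,S)\ind{g(\bsX,S)=0}]$ under the same constraints; and, once $\baralpha$ is frozen, the parity constraint $\PT_s(g)=\PT(g)$ can be rewritten as a linear identity in the quantities $\Exp_{\bsX|S=s}[\ind{g(\bsX,S)=1}]$. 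I would then introduce multipliers $\bslambda\in\bbR^K$ and $\bsgamma\in\bbR^K$ for the two families of equalities, and write the Lagrangian as a single expectation whose integrand depends on $(\bsx,s)$ only through $\eta(\bsx,s)$ and the ternary choice $g(\bsx,s)\in\{0,1,r\}$.

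The second step is a pointwise minimization. For fixed $(\bsx,s)$ the three options $g\in\{0,1,r\}$ yield three affine quantities in $\eta(\bsx,s)$, with the reject option contributing $0$. Combining the $\{0,1\}$ pair via the identity $\min(a,b)=(a+b)/2-|a-b|/2$ and comparing the resulting minimum against $0$ produces exactly the function $G$ of the statement: the reject option is optimal iff $G(\bsx,s,\bslambda,\bsgamma)\leq 0$, and otherwise the sign of the expression inside the absolute value selects $g=1$ over $g=0$ precisely when $\eta(\bsx,s)\geq 1/2 + c_{\bsgamma,s}$. Collecting the $\bslambda$-linear constants shows that the value of the minimized Lagrangian equals $-\scalar{\bslambda}{\bsalpha} - \sum_s \Exp_{\bsX|S=s}[(G(\bsX,S,\bslambda,\bsgamma))_+]$, so the dual problem is exactly the minimization displayed in the theorem.

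Finally, I would close the argument by the standard Lagrangian sandwich: if $(\bslambda^*,\bsgamma^*)$ is a dual minimizer and $g^*$ is the classifier associated with it as in the statement, then, provided $g^*$ is primal-feasible, one has $\risk(g^*)=L(g^*,\bslambda^*,\bsgamma^*)\leq L(g,\bslambda^*,\bsgamma^*)=\risk(g)$ for every feasible $g$. The core difficulty is showing feasibility of the deterministic $g^*$, and this is where Assumption~\ref{as:continuous_eta} enters: non-atomicity of $\eta(\bsX,S)\mid S=s$ implies that the random variable $G(\bsX,S,\bslambda,\bsgamma)$ has no atom at $0$, which ensures that the dual objective is differentiable in $(\bslambda,\bsgamma)$; an envelope-type computation then shows that its first-order optimality conditions at $(\bslambda^*,\bsgamma^*)$ read exactly as $\NAB_s(g^*)=\alpha_s$ and $\PT_s(g^*)=\PT(g^*)$. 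Existence of a dual minimizer itself follows from convexity of the dual objective together with coercivity in $\bslambda$. Without Assumption~\ref{as:continuous_eta} a positive mass on $\{G=0\}$ could force randomization on that set, which is precisely the caveat pointed out after the assumption.
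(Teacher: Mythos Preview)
Your approach mirrors the paper's proof almost exactly: freeze the denominator $\baralpha$ via the reject constraints, write the Lagrangian with multipliers $(\bslambda,\bsgamma)$, minimize pointwise using $\min(a,b)=(a+b)/2-|a-b|/2$, read off the dual as the displayed minimization, and close the loop by showing that the first-order optimality conditions of the dual---which are well defined under Assumption~\ref{as:continuous_eta}---encode precisely primal feasibility of the associated classifier.

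The one place where your sketch is imprecise is the existence of a dual minimizer. Convexity plus coercivity in $\bslambda$ alone does not suffice, and in fact the dual objective is \emph{not} coercive in $\bsgamma$: the shift $\gamma_s\mapsto\gamma_s+\tfrac{p_s\alpha_s}{\baralpha}c$ leaves it invariant for every $c\in\bbR$, so there is a whole line of candidate minimizers in the $\bsgamma$ direction. The paper handles this by using the invariance to restrict to the slice $\sum_s\gamma_s=0$, and then showing that any minimizing sequence on that slice is bounded (separately in $\bsgamma$ via a Jensen/triangle-inequality lower bound, and in $\bslambda$ via a direct lower bound), so a convergent subsequence yields a minimizer. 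Everything else in your outline matches the paper's argument.
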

Let us mention that unlike other similar results described above, the main difficulty in the proof of Theorem~\ref{thm:optimal} lies in the fact the misclassification risk in our case involves conditioning on the event which itself depends on the classifier that we want to find. 
Theorem~\ref{thm:optimal} is instructive and allows to develop an intuition which is similar to that of the original rule derived by~\cite{Chow57,Chow70}.
To be more precise, denoting by 
\begin{align*}
    t_{\bsgamma^*, s} \coloneqq (1-\scalar{\bsgamma}{1}) + \frac{\baralpha \scalar{\bsgamma}{\bse_s}}{p_s \alpha_s}\enspace,
\end{align*}
the reject region is expressed as a strip around $t_{\bsgamma, s}$:
%\begin{align*}
%    \frac{\alpha \lambda_s}{p_s} \leq \eta(\bsx, s) \leq w_{\bsgamma, s} + \frac{\alpha \lambda_s}{p_s}
%\end{align*}
\begin{align*}
    \lvert \eta(\bsx, s) - t_{\bsgamma, s}  \rvert \leq  t_{\bsgamma, s} + \frac{\baralpha \bslambda_s}{p_s}\enspace.
\end{align*}
We highlight that the center as well as the size of this strip is group-dependent.
Interestingly, the position of the strip only depends on the Lagrange multiplier controlling for the fairness constraint, while its width is determined by both constraints.

\section{Empirical method}
The form of the optimal classifier suggests to develop a post-processing algorithm, which receives an estimator $\heta(\bsx, s)$ of $\eta(\bsx, s)$ and an additional \emph{unlabeled} set of samples to estimate $(\bsgamma^*, \bslambda^*)$.
Indeed, observe that the optimal classifier $g^*$ is know up to the quantities $\eta(\bsx, s), \bsgamma^*, \bslambda^*$.
\begin{remark}
For simplicity of exposition we assume that the marginal distribution of $S$ is known, that is, we have access to $p_s := \Prob(S = s)$. Note that $S$ follows multinomial distribution, and, in practice, we can estimate these probabilities by their empirical counterparts, which is the direction that we take in our experimental section. Our proofs generalize straightforwardly for the case of unknown $p_s$, but such modification results in additional, unnecessary, complications.
\end{remark}

We denote by $\hat\eta(\bsX,S)$ any off-the-shelf estimator of $\eta(\bsX,S)$.
For instance, one can take k-NN~\citep{stone1977consistent,devroye2013probabilistic}, locally polynomial estimator~\citep{korostelev2012minimax}, logistic regression~\citep{buhlmann2011statistics}, random forest~\citep{breiman2001random,biau2016random,mourtada2020minimax} to name a few.
% These methods are supported by statistical guarantees, leading to a complete theory-driven approach in terms of risk guarantees and constraint satisfaction.
Our theoretical guarantees on the misclassification risk will explicitly depend on the quality of this off-the-shelf estimator, hence it is advisable to use those methods which are supported by statistical guarantees.
Yet, our algorithm remains valid even for inconsistent estimators $\heta$ in the sense that the resulting classifier after post-processing will (nearly) satisfy the prescribed constraints independently from $\heta$.

\begin{remark}
In what follows we assume that the estimator $\heta(X, S)$ is independent from the \emph{unlabeled} sample (introduced below) and is valued in $[0, 1]$.
In other words, we require a new \emph{unseen} unlabeled sample for the post-processing. As it will be seen from our bound, the assumption that $\heta(X, S)$ is valued in $[0, 1]$ is not restrictive, since we can always perform clipping without damaging statistical properties. 
On a more technical note, we require that $\Prob(\heta(X, S) = c \mid \heta) = 0$ almost surely for any $c \in [0, 1]$. Again, this assumption is not restrictive, since we can always randomize the output of $\heta(X, S)$
by adding a negligible noise coming from a continuous distribution. In Algorithm~\ref{alg:1} we use uniformly distributed noise supported on $[0, \sigma]$, with $\sigma$ being a small parameter. One can take this parameter $\sigma$ arbitrarily small, preserving the statistical properties of $\heta$.
\end{remark}

% . We refer to~\citep{chzhen2020fair}, where the randomization step is described in details.

As mentioned before, to build the post-processing scheme, we will use only \emph{unlabeled} sample. We also do not restrict ourselves to sampling from $\Prob_{(\bsX, S)}$. Instead, we assume that for all $s \in [K]$ we observe $\{X_i\}_{i \in \class{I}_s}$ sampled \iid from $\Prob_{\bsX | S = s}$. In the above notation, $\class{I}_s$ have cardinality $n_s$ and they form a partition of $[n]$. That is, we have that $n_1 + \ldots + n_K = n$.
The described sampling scheme is potentially appealing in situations when it is possible to gather a lot of data about the minority group without the need of labeling them.
In particular, this sampling scheme allows to set $n_1 = \ldots = n_K$, which, since we do not require labeling, is more realistic.
    % \item $p_s = \Prob(S=s)$ is estimated based on the empirical frequency of class $s$ : 
    % \begin{align*}
    %     \hat{p}_s = \frac{1}{n}\sum_{i=1}^n \mathds{1}(S_i=s)\enspace.
    % \end{align*}
The conditional expectation $\Exp_{\bsX \mid S=s}$ is estimated based on the following empirical measure
    \begin{align*}
        \hat{\Prob}_{\bsX \mid S=s} = \frac{1}{n_s}\sum_{i \in \class{I}_s}\delta_{\bsX_i}\enspace.
    \end{align*}

% % \begin{figure}[t!]{\linewidth}
% \begin{algorithm}[t!]

%   \DontPrintSemicolon
%   \caption{Procedure to evaluate estimator in Eq.~\eqref{eq:proposed_estimator}}

%   \SetKwInput{Input}{Input}
%   \SetKwInOut{Output}{Output}

%   \Input{new point: $({x}, {s})$; base estimator $\hat f$; unlabeled data $\class{U}^1, \ldots, \class{U}^{|\class{S}|}$;\hspace{1cm} \\  jitter parameter $\sigma$; empirical frequencies $\hat p_1, \ldots, \hat p_{|S|}$}
%   \Output{fair prediction $\hat g( x,  s)$ for the point $(x, s)$}
%   \For(\tcp*[f]{data structure for Eq.\eqref{eq:empirical_jittered_cdf}}){ $s' \in \class{S}$}{
%   $\class{U}_0^{s'},\class{U}_1^{s'} \leftarrow \mathtt{split\_in\_two}(\class{U}^{s'})$
%   \tcp*{split unlabeled data into two equal parts}
   
%     $\texttt{ar}_0^{s'} \leftarrow \big\{{\hat f(X, {s'}) {+} U([-\sigma, 
%     \sigma])}\big\}_{X \in \class{U}_{0}^{s'}}$,\quad
%     $\texttt{ar}_1^{s'} \leftarrow \big\{\hat f(X, {s'}) {+} U([-\sigma, \sigma])\big\}_{X \in \class{U}_{1}^{s'}}$

%   $\texttt{ar}_0^{s'} \leftarrow \mathtt{sort}\big(\texttt{ar}_0^{s'}\big)$,\quad$\texttt{ar}_1^{s'} \leftarrow \mathtt{sort}\big(\texttt{ar}_1^{s'}\big)$
%   \tcp*{for fast evaluation of Eq.\eqref{eq:empirical_jittered_cdf1}
%   }
%   }
% \end{algorithm}

\begin{algorithm}[t!]
   \caption{\texttt{Post-processing}}
   \label{alg:1}
%  \begin{multicols}{2}
\begin{algorithmic}[1]
   \STATE {\bfseries Input:} base estimator $\heta$, unlabeled data $\{\bsX_i\}_{i \in \class{I}_s}$ for $s \in [K]$, noise magnitude $\sigma$
   
   %\STATE \texttt{Randomize}: $\heta(\bsX_i, s) \leftarrow \heta(\bsX_i, s) + \zeta_i$, $i \in \class{I}_s, s\in[K]$
   
   \STATE \texttt{Randomize}: 
   \FOR{$i \in \class{I}_s, s\in[K]$}
    \STATE Sample independently $\zeta_i \sim \mathcal{U}([0, \sigma])$
    \STATE Set $\heta(\bsX_i, s) \leftarrow \heta(\bsX_i, s) + \zeta_i$
   \ENDFOR
   
   \STATE \texttt{Solve}: Eq.~\eqref{eq:emp_min} based on LP formulation to get $(\hbslambda, \hbsgamma)$
   
   \STATE {\bfseries Output:} $(\hbslambda, \hbsgamma)$
%   \caption{}
\end{algorithmic}
% \end{multicols}
\end{algorithm}

Before providing the proposed post-processing method, we define the empirical counterpart to the function $G$ as
\begin{align*}
    \hat{G}(\bsx, s, \bslambda, \bsgamma) =
    &\left\lvert  \frac{{p}_s}{2\baralpha}(1-2\hat\eta(\bsx,s)-\scalar{\bsgamma}{\bsone}) + \frac{\scalar{\bsgamma}{\bse_s}}{2\alpha_s} \right\rvert
    - \frac{{p}_s}{2\baralpha}(1-\scalar{\bsgamma}{\bsone}) - \scalar{\bslambda}{\bse_s} - \frac{\scalar{\bsgamma}{\bse_s}}{2\alpha_s}\enspace
\end{align*}

The post-processing classifier with abstention is given by
\begin{align}
\label{eq:classif_with_abstention}
    \hat{g}(\bsx,s) {=} \begin{cases}
       r &\text{if } \hat{G}(\bsx, s, \hbslambda, \hbsgamma) \leq 0 \\
       \mathds{1}\left(\hat\eta(\bsx,s)  {>}  \frac{1}{2} + c_{\hbsgamma, s}\right) &\text{otherwise}
    \end{cases},
\end{align}
where $c_{\hbsgamma, s} \eqdef \frac{1}{2}\big(\frac{\baralpha\hat{\gamma}_s}{\alpha_s {p}_s} - \scalar{\bsone}{\hbsgamma}\big)$ and $(\hbslambda, \hbsgamma)$ is a solution of
\begin{align}
    \label{eq:emp_min}
     \min_{(\bslambda, \bsgamma)} \ens{\scalar{\bslambda}{\bsalpha} + \sum_{s=1}^K \hat{\Exp}_{\bsX \mid S = s} 
     (\hat{G}(\bsX, s,
     \bslambda,
     \bsgamma ))_+}\enspace.
\end{align}
We summarize the proposed procedure in Algorithm~\ref{alg:1} incorporating the randomization step.
Note that there is a clear analogy between the result of Theorem~\ref{thm:optimal} and the constructed algorithm. Indeed, the latter is an empirical version of the former built via the plug-in approach. \begin{figure*}[t!]
  \centering
%   \begin{subfigure}[t!]{.24\linewidth}
%     \includegraphics[width=\linewidth]{}%
%     \caption{Accuracy}\label{fig:adult1}
%   \end{subfigure}
%   \hfill
  \begin{subfigure}[t!]{.37\linewidth}
    \includegraphics[width=\linewidth]{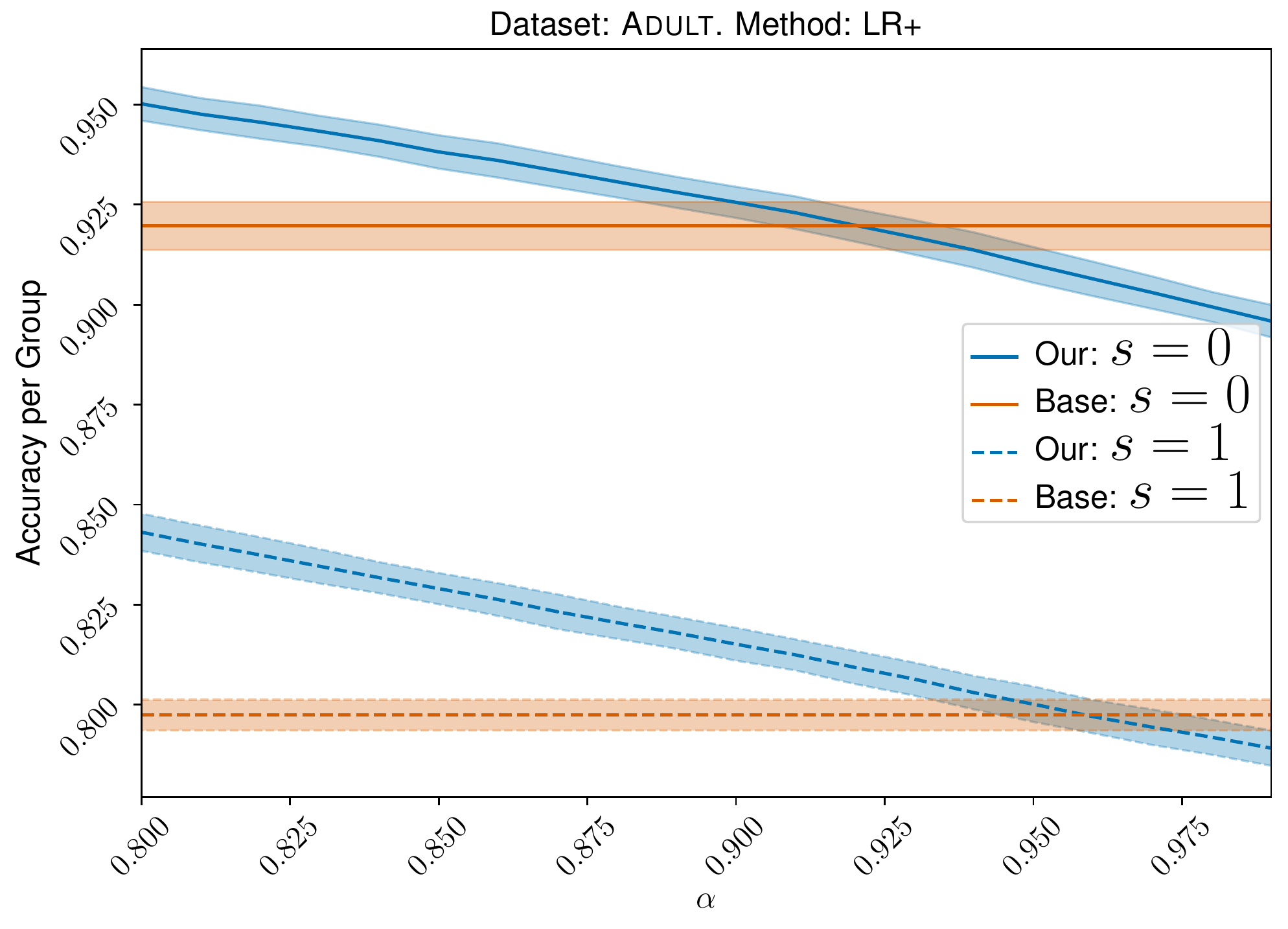}%
    \caption{Accuracy per group}\label{fig:adult2}
  \end{subfigure}%
  \begin{subfigure}[t!]{.37\linewidth}
    \includegraphics[width=\linewidth]{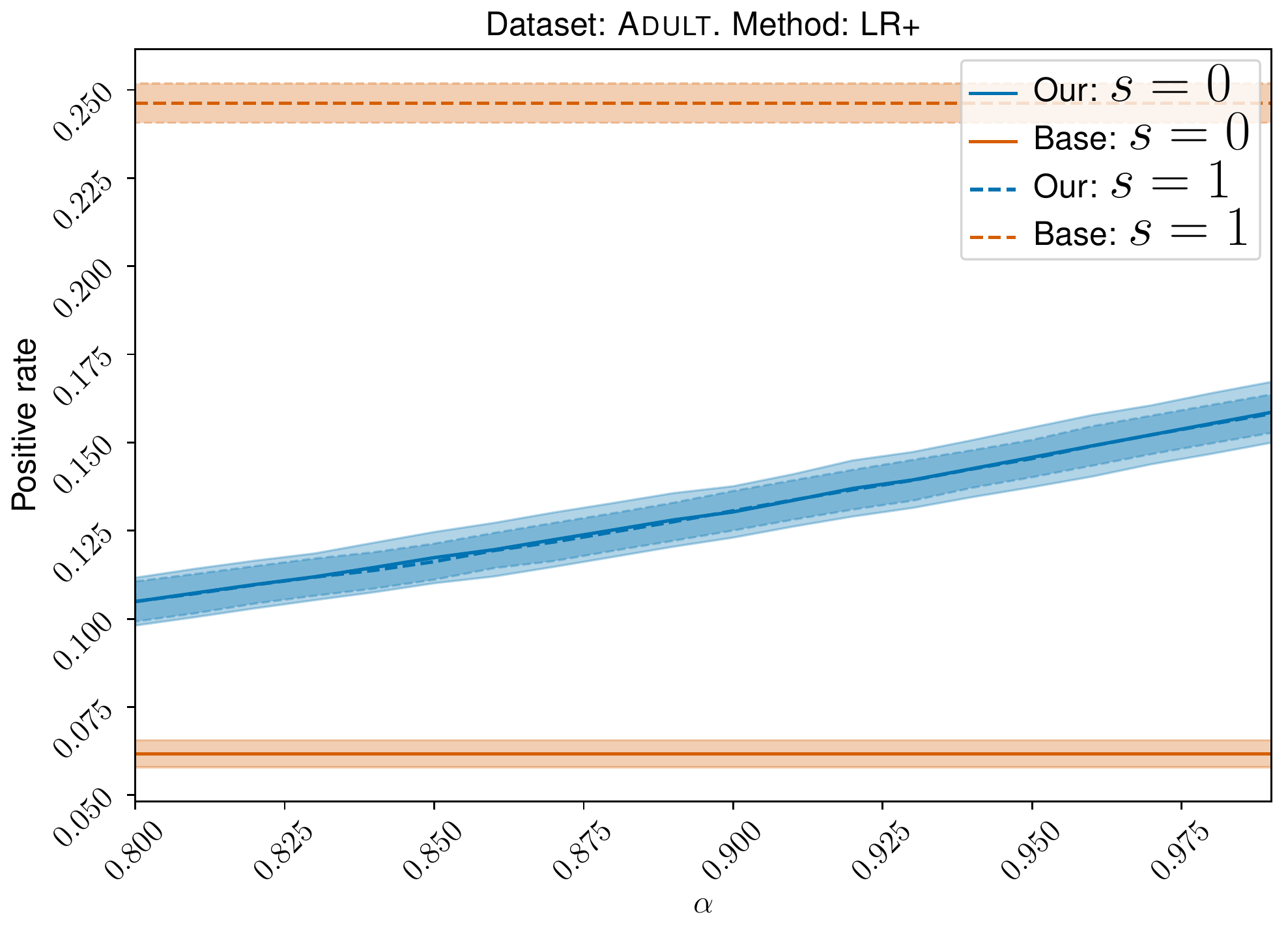}%
    \caption{Positive rate}\label{fig:adult3}
  \end{subfigure}%
  \begin{subfigure}[t!]{.26\linewidth}
    \includegraphics[width=\linewidth]{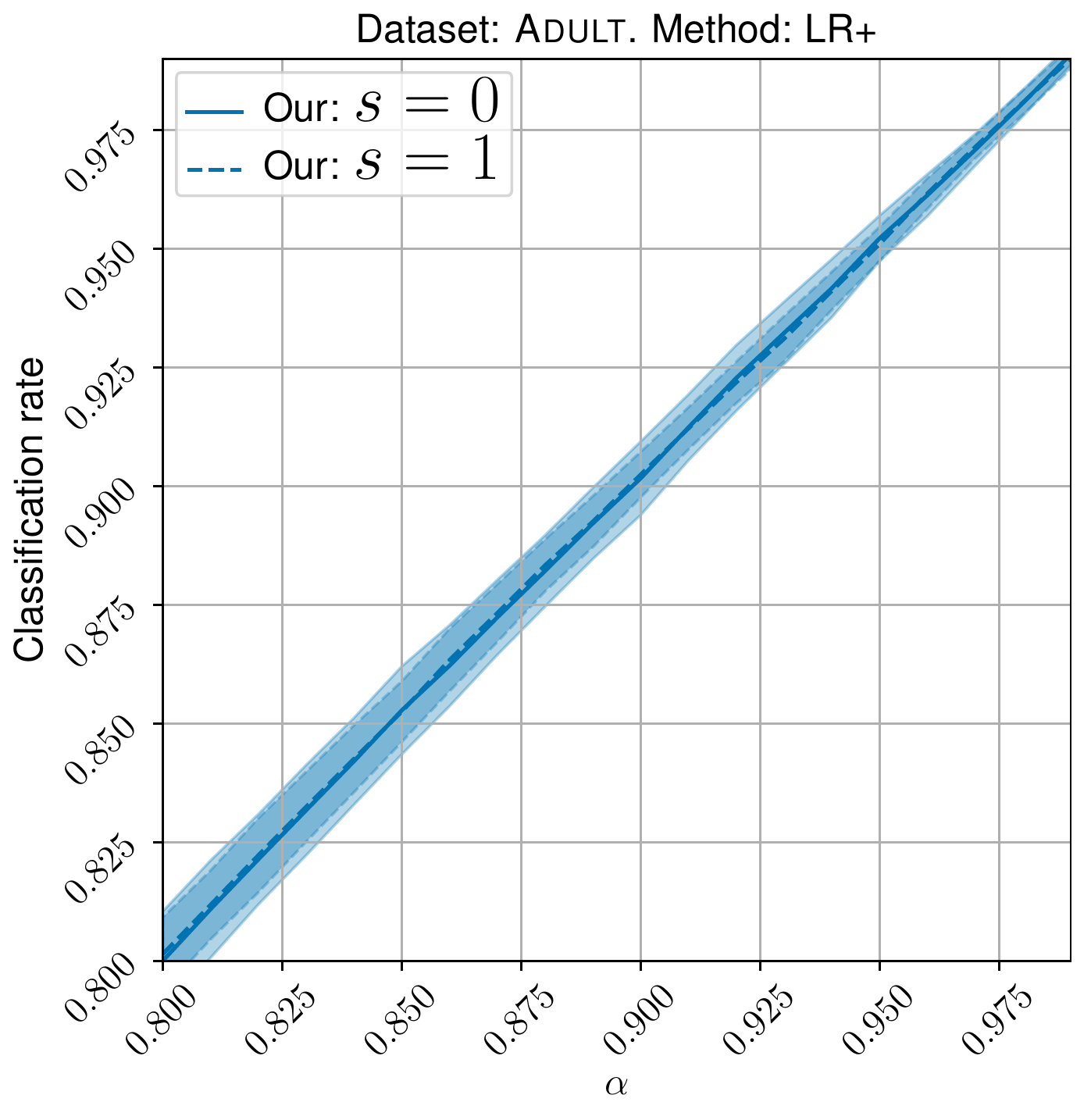}%
    \caption{Classification rate}\label{fig:adult4}
  \end{subfigure}%
  \caption{Results on \textsc{Adult} dataset with Logistic Regression (LR) as the base estimator. Blue lines correspond to our post-processing method; Orange lines correspond to the base classifier. Dashed line correspond to $s = 1$ and solid line to $s = 0$. Shaded areas correspond to the variance of the result over $20$ repetitions.}\label{fig:adult}
\end{figure*}

\begin{lemma}
The minimization problem in Eq.~\eqref{eq:emp_min} is convex and it admits a global minimizer.
\end{lemma}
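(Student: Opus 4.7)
The statement has two parts — convexity and existence of a global minimizer — which I handle separately.

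\emph{Convexity.} For any fixed $\bsX_i$ and group $s$, the map $(\bslambda, \bsgamma) \mapsto \hat{G}(\bsX_i, s, \bslambda, \bsgamma)$ is the sum of $|A_s(\bsX_i, \bsgamma)|$, where $A_s$ is affine in $(\bslambda, \bsgamma)$, and an affine function of $(\bslambda, \bsgamma)$; as a composition of the convex absolute value with an affine map plus an affine term, it is convex. Since $x \mapsto x_+$ is convex and non-decreasing, $(\hat G)_+$ is convex. Taking non-negatively weighted finite sums (the empirical expectations $\hat\Exp_{\bsX|S=s}$ and the outer sum over $s$) and adding the linear term $\scalar{\bslambda}{\bsalpha}$ preserves convexity, so the objective in~\eqref{eq:emp_min} is convex on $\bbR^{2K}$.

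\emph{Existence via LP reformulation.} Using $|a| = \max(a, -a)$ and $(\cdot)_+ = \max(\cdot, 0)$, the minimization~\eqref{eq:emp_min} is equivalent to the linear program with variables $(\bslambda, \bsgamma, \{t_{i,s}\})$, objective $\scalar{\bslambda}{\bsalpha} + \sum_s n_s^{-1} \sum_{i \in \class{I}_s} t_{i,s}$, and linear constraints $t_{i,s} \geq 0$ and $t_{i,s} \geq \pm A_s(\bsX_i, \bsgamma) - B_s(\bslambda, \bsgamma)$, where $B_s$ denotes the affine piece subtracted inside $\hat G$. At $(\bslambda, \bsgamma) = (0, 0)$ one has $\hat G(\bsX_i, s, 0, 0) = \frac{p_s}{2\baralpha}(|1 - 2\hat\eta(\bsX_i, s)| - 1) \leq 0$ because $\hat\eta \in [0, 1]$, so the LP is feasible (take $t_{i,s} = 0$). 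By the fundamental theorem of linear programming it then suffices to show that the objective is bounded below on the feasible set.

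\emph{Bounded-below via recession analysis.} A direct computation along a ray $(\bslambda + t\bsu, \bsgamma + t\bsv)$, $t \to \infty$, gives the recession function
\begin{equation*}
    \Phi(\bsu, \bsv) \;=\; \scalar{\bsu}{\bsalpha} + \sum_{s=1}^K \bigl(2 (\psi_s(\bsv))_+ - u_s\bigr)_+, \qquad \psi_s(\bsv) := \frac{p_s \scalar{\bsv}{\bsone}}{2\baralpha} - \frac{v_s}{2\alpha_s}.
\end{equation*}
I verify $\Phi \geq 0$ coordinate by coordinate: if $u_s \geq 0$, both $\alpha_s u_s$ and $(2(\psi_s)_+ - u_s)_+$ are non-negative; if $u_s < 0$, then $(2(\psi_s)_+ - u_s)_+ = 2(\psi_s)_+ - u_s$, so the $s$-th contribution equals $u_s(\alpha_s - 1) + 2(\psi_s(\bsv))_+ \geq 0$ since $\alpha_s \leq 1$ and $u_s < 0$. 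Summing over $s$ gives $\Phi \geq 0$, the LP is bounded below, and an optimal solution exists, which yields a global minimizer of~\eqref{eq:emp_min}. The subtle point — and the reason I go through recession analysis rather than a direct compactness argument — is that the objective is \emph{not} coercive: e.g., in the symmetric case $p_s \equiv 1/K$ and $\alpha_s \equiv \alpha$, the ray $\bsu = 0$, $\bsv = \bsone$ lies in the lineality space, so sub-level sets are unbounded and existence of a minimizer is the real content beyond convexity.
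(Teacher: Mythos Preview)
Your proof is correct. The convexity argument is essentially identical to the paper's (composition of absolute value with affine, then positive part, then non-negative sums and a linear term). For existence, you take a genuinely different route from the paper.

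\textbf{What the paper does.} The paper observes an explicit one-parameter invariance of the objective: the transformation $\gamma_s \mapsto \gamma_s + c\,p_s\alpha_s/\baralpha$ (with $\bslambda$ fixed) leaves $H(\bslambda,\bsgamma)$ unchanged. This lets them restrict to the affine slice $\sum_s \gamma_s = 0$, on which they prove coercivity by hand: using $H(\bszero,\bszero)=0$ and Jensen's inequality they derive the explicit a priori bounds $\|\bsgamma\|_1 \le 2(1+\varepsilon)$ and $\sum_s(\alpha_s\wedge(1-\alpha_s))|\lambda_s|$ bounded, then extract a convergent subsequence. Your lineality direction $\bsv=\bsone$ in the symmetric case is exactly (a special case of) this invariance.

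\textbf{What you do.} You compute the recession function $\Phi$ of the full objective and verify $\Phi\ge 0$ coordinatewise, then invoke the LP/polyhedral structure to conclude boundedness below and attainment. This is clean and systematic. One point worth making explicit: for general convex functions $\Phi\ge 0$ does not by itself guarantee the infimum is attained (or even finite); it is the polyhedral structure---which you have set up via the LP reformulation---that closes the argument, since for polyhedral convex functions a non-negative recession function implies the minimum is achieved.

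\textbf{Trade-off.} Your approach is shorter and more conceptual. The paper's approach is more laborious but yields, as a by-product, the explicit bounds $\|\hbsgamma\|_1\le 2$ and $|\hat\lambda_s|\le (p_s/\baralpha)\vee(|\hat\gamma_s|/\alpha_s)$, which are later essential in the excess-risk proof (Proposition~\ref{prop:main_control_risk}) to control the terms $\sum_s|\hat\lambda_s|\,u_{n_s}$ and $\sum_s|\hat\gamma_s|\,v_{n_s}$.
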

In Section~\ref{sec:lp} we will actually prove a stronger statement. Namely, it will be shown that the minimization problem in Eq.~\eqref{eq:emp_min} is equivalent to a linear program with sparse constraints, which will allow us to provide an efficient implementation of the proposed procedure.

\section{Finite sample guarantees}

In this section we provide finite sample guarantees on the behavior of the post-processing classifier with abstention regarding its performance, its reject rate and its fairness. In order to lighten the presentation of our results, let us define now the sequence
\begin{align*}
    u^{\delta, K}_{n} \eqdef \sqrt\frac{2\log(\sfrac{4K}{\delta})}{2n} + \frac{2}{n}\enspace, \quad \forall n \geq 1\enspace.
\end{align*}

The sequence $u^{\delta, K}_{n}$ behaves as $O(\sqrt{\log(K / \delta) / n})$, that is, it depends logarithmically on the number of sensitive attributes $K$, on the confidence parameter $\delta$ and goes to zero as $n^{-1/2}$ with the growth of $n$.
Our goal in this section is to derive constraint and risk guarantees.
Namely, we would like to show that when $n_s \rightarrow \infty$ we have for all $s \in [K]$ that
\begin{equation*}
    \begin{aligned}
    &\left\lvert \NAB_s(\hat{g}) - \alpha_s \right\rvert \rightarrow 0\\
    &\abs{\PT_s(\hat{g}) - \PT(\hat{g})} \rightarrow 0
    \end{aligned}\qquad\text{and}\qquad
    \excess(\hat g) := \risk(\hat g) - \risk(g^*) \rightarrow 0\enspace.
\end{equation*}
The first part ensures satisfaction of reject and fairness constraints, while the second part shows that the risk of the proposed method is similar to that of $g^*$. Importantly, both guarantees will be derived in the finite-sample regime and with high probability.

The next proposition provides a quantitative control on the violation of the reject and Demographic Parity constraints in the finite sample regime. 

\begin{proposition}
\label{prop:control_reject_and_dp}
Let $\delta \in (0, 1)$. The violation of the constraints by the post-processing classifier with abstention $\hat{g}$ defined in Eq.~\eqref{eq:classif_with_abstention} can be controlled, with probability at least $1-\delta$, for any $s \in [K]$, as
\begin{align*}
    &\left\lvert \NAB_s(\hat{g}) - \alpha_s \right\rvert \leq u_{n_s}^{\sfrac{\delta}{2}, K},\quad\text{and}\quad
    \abs{\PT_s(\hat{g}) - \PT(\hat{g})} \leq \frac{6}{\alpha_s}u_{n_s}^{\delta, K} + \frac{6}{\baralpha}\sum_{s = 1}^K p_s u_{n_s}^{\delta, K}\enspace.
\end{align*}
\end{proposition}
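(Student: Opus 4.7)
The plan is to exploit first-order optimality of the convex minimization~\eqref{eq:emp_min} to show that $\hat g$ satisfies the \emph{empirical} versions of the reject and demographic parity constraints \emph{exactly}, and then transfer those identities to their population counterparts via uniform concentration over a one-dimensional VC class.

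\textbf{Empirical constraints via KKT.} The randomization in Algorithm~\ref{alg:1} makes $\hat\eta(\bsX,s)$ atomless under $\hat{\Prob}_{\bsX\mid S=s}$, so $\{\hat G = 0\}$ has empirical measure zero almost surely and the objective $F(\bslambda, \bsgamma) \eqdef \scalar{\bslambda}{\bsalpha} + \sum_{s} \hat{\Exp}_{\bsX\mid S=s}[(\hat G)_+]$ is differentiable at any minimizer $(\hbslambda, \hbsgamma)$. Using $\partial_{\lambda_{s'}} \hat G = -\mathbf{1}(s = s')$, the condition $\partial_{\lambda_{s'}} F = 0$ becomes $\alpha_{s'} = \hat{\Prob}_{\bsX\mid S=s'}(\hat g(\bsX, s') \neq r)$. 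Computing $\partial_{\gamma_{s'}} F$ and observing that, on $\{\hat G > 0\}$, the sign of the expression inside the absolute value in $\hat G$ equals $1 - 2\hat g(\bsx, s)$ (this is what the threshold $\frac{1}{2} + c_{\bsgamma, s}$ encodes), one obtains after combining with the previous identity the \emph{empirical demographic parity}:
\begin{align*}
\frac{\hat{\Prob}_{\bsX\mid S=s'}(\hat g(\bsX, s') = 1)}{\alpha_{s'}} = \frac{\sum_{s=1}^K p_s \hat{\Prob}_{\bsX\mid S=s}(\hat g(\bsX, s) = 1)}{\baralpha}\enspace,\quad \forall s' \in [K]\enspace.
\end{align*}

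\textbf{Transferring to the population.} The key observation is that both events $\{\hat g(\cdot, s) \neq r\}$ and $\{\hat g(\cdot, s) = 1\}$ are preimages under $\hat\eta(\cdot, s)$ of an interval of $\bbR$. Conditionally on $\hat\eta$ (independent of the unlabeled sample by assumption), the samples $\{\hat\eta(\bsX_i, s)\}_{i \in \mathcal{I}_s}$ are i.i.d., so the Dvoretzky--Kiefer--Wolfowitz inequality yields
\begin{align*}
\sup_{I \subset \bbR} \left| \hat{\Prob}_{\bsX\mid S=s}\!\left(\hat\eta(\bsX, s) \in I\right) - \Prob_{\bsX\mid S=s}\!\left(\hat\eta(\bsX, s) \in I\right)\right| \leq u_{n_s}^{\delta/2, K}
\end{align*}
uniformly over $s \in [K]$ with probability at least $1 - \delta$, after a union bound over the $K$ groups and over the two events (reject and positive). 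The $2/n_s$ summand in $u_{n_s}^{\delta/2, K}$ absorbs the open/closed boundary corrections for intervals. Combined with the first identity in the KKT step, this immediately gives the abstention bound $|\NAB_s(\hat g) - \alpha_s| \leq u_{n_s}^{\delta/2, K}$.

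\textbf{DP bound via ratio decomposition.} Write $A_s \eqdef \Prob_{\bsX\mid S=s}(\hat g = 1)$, $\hat A_s \eqdef \hat{\Prob}_{\bsX\mid S=s}(\hat g = 1)$, and $B_s \eqdef \NAB_s(\hat g)$. The empirical DP cancels the cross-terms in
\begin{align*}
\PT_s(\hat g) - \PT(\hat g) = \left(\frac{A_s}{B_s} - \frac{\hat A_s}{\alpha_s}\right) - \left(\frac{\sum_{s'} p_{s'} A_{s'}}{\sum_{s'} p_{s'} B_{s'}} - \frac{\sum_{s'} p_{s'} \hat A_{s'}}{\baralpha}\right)\enspace.
\end{align*}
Applying the elementary inequality $|a/b - a'/b'| \leq |a - a'|/b + a'|b - b'|/(bb')$ to each bracket, using $|A_s - \hat A_s|, |B_s - \alpha_s| \leq u_{n_s}^{\delta, K}$ from the concentration step, together with the bound $B_s \geq \alpha_s/2$ (otherwise $u_{n_s}^{\delta, K} \geq \alpha_s/2$ and the stated inequality is trivial since $\PT_s, \PT \in [0,1]$), yields per-group contribution of order $u_{n_s}^{\delta, K}/\alpha_s$ and marginal contribution of order $\sum_s p_s u_{n_s}^{\delta, K}/\baralpha$; tracking the constants delivers the factor $6$.

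\textbf{Main obstacle.} The principal difficulty is that $\hat g$ depends on the data both through $\hat\eta$ and through $(\hbslambda, \hbsgamma)$, forbidding a naive application of Hoeffding to the fixed classifier. The resolution is that, conditionally on $\hat\eta$ (which is independent of the unlabeled sample by the working assumption of the paper), the relevant sets are always intervals of a single real variable, a VC-$2$ class, so uniform concentration has the same order as Hoeffding for a single set. Careful handling of the non-differentiability of $|\cdot|$ in $\hat G$, made negligible by the $\sigma$-randomization, is what allows the KKT conditions to be written as equalities rather than set-valued sub-differential inclusions.
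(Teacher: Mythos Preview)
Your overall architecture---first-order optimality for empirical constraints, then DKW-type concentration to pass to the population, then a ratio decomposition for the DP term---matches the paper's proof. However, the first step contains a genuine gap.

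You assert that the randomization makes $\hat\eta(\bsX,s)$ atomless under the \emph{empirical} measure $\hat\Prob_{\bsX\mid S=s}$, so that $\{\hat G=0\}$ has empirical mass zero and the objective is differentiable at the minimizer. This is false: $\hat\Prob_{\bsX\mid S=s}$ is a finite sum of Dirac masses and is always purely atomic; the randomization only guarantees that the $n_s$ atoms sit at pairwise distinct locations. More importantly, the empirical objective in~\eqref{eq:emp_min} is piecewise linear in $(\bslambda,\bsgamma)$, and the minimizer of a piecewise linear program generically lies at a vertex, i.e.\ exactly on a kink where some $\hat G(\bsX_i,s,\hbslambda,\hbsgamma)=0$. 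Concretely, the derivative in $\lambda_s$ equals $\alpha_s-\hat\Prob_{\bsX\mid S=s}(\hat G>0)$, and since $\alpha_s$ is typically not a multiple of $1/n_s$, this can vanish only at a point where the empirical fraction jumps, i.e.\ where at least one sample satisfies $\hat G=0$. So the KKT conditions are \emph{not} exact equalities; they are subdifferential inclusions with a residual term $\rho_s\,\hat\Prob_{\bsX\mid S=s}(\hat G=0)$ for some $\rho_s\in[0,1]$.

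The paper's fix is a pigeonhole argument (its Lemmas~\ref{lem:pigh} and~\ref{lem:pigh2}): because the perturbed $\hat\eta(\bsX_i,s)$ are almost surely distinct and $\hat G(\cdot,s,\bslambda,\bsgamma)=0$ is the union of two affine level sets in $\hat\eta$, at most two sample points per group can hit it, giving $\hat\Prob_{\bsX\mid S=s}(\hat G=0)\le 2/n_s$. This $2/n_s$ is precisely the additive term in $u_{n_s}^{\delta,K}$; it does \emph{not} come from ``open/closed boundary corrections for intervals'' in DKW, as you suggest. Once you replace your exact empirical identities by approximate ones with this $2/n_s$ slack, the rest of your argument goes through and is essentially the paper's proof (your ratio decomposition for DP is a repackaging of its five-term splitting).
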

The proof for the control of the reject rate is postponed to Section~\ref{app:control_reject} while the proof for the control of the Demographic Parity constraint can be found in Section~\ref{app:control_dp}.

Remarkably Proposition~\ref{prop:control_reject_and_dp} is assumption-free. In particular it does not depend on the conditional expectation $\eta$ as well as it does not depend on the initial estimator $\heta$. If one has enough \emph{unlabeled} data than one can get arbitrarily close to exact satisfaction of the constraints. Intuitively, this is the case because the fairness and reject constraints only depend on the conditional distribution of the feature vector $\bsX$ given the sensitive attribute $S$, not on the relation between the features and the label $Y$.

We also remark that both bounds of Proposition~\ref{prop:control_reject_and_dp} depend on the amount of observation available for each group $s \in [K]$ -- it is easier to satisfy constraints for well-represented groups.
% We also remark that the first bound of Proposition~\ref{prop:control_reject_and_dp} depends on the \emph{total} amount of unlabeled samples $n$, while the second bound depends on the amount of observation available for each group.
In particular, it is advisable to collect an unlabeled sample which is balanced in terms of the sensitive attributes. Note that it is explicitly allowed in our framework, since we require samples from $\Prob_{X \mid S =s}$ and not from $\Prob_{(X, S)}$.

The next result establishes excess risk guarantees for the proposed method.
\begin{proposition}
\label{prop:main_control_risk}
Assume that $2u_{n_s}^{\delta, K}< \alpha_s <1-\sfrac{2}{n_s}$ for any $s \in [K]$ and that Assumption~\ref{as:continuous_eta} holds.
Then, for any $\delta \in (0, 1)$, the excess risk of the post-processing classifier with abstention $\hat{g}$ defined in Eq.~\eqref{eq:classif_with_abstention} satisfies with probability at least $1-\delta$, 
\begin{equation}
\label{eq:main_control_risk}
\begin{aligned}
    & \excess(\hat g) \leq
    % \left(\frac{1}{\bar\alpha} + \frac{1}{\baralpha - \sum_s p_s u_{n_s}^{\delta, K}}\right)
    \frac{3}{\baralpha}
    \|\eta - \heta\|_1 + 6 \sum_{s = 1}^K  \parent{\frac{p_s}{\bar\alpha} + \frac{1}{\alpha_s}} u_{n_s}^{\delta, K} \enspace.
\end{aligned}
\end{equation}
%where 
%\begin{align*}
%v_n^{\delta, K} \eqdef \frac{31(1+\sqrt{2})}{\sqrt{n}} + \sqrt{\frac{\log(\sfrac{2K}{\delta})}{2n}} + \frac{2}{n},\quad\forall n\geq 1\enspace.    
%\end{align*}
\end{proposition}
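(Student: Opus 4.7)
My plan is to first reduce the conditional excess risk to an unnormalized risk, then exploit the Lagrangian structure of Theorem~\ref{thm:optimal}, and finally control the resulting terms by concentration and the base-estimator error $\|\eta - \hat\eta\|_1$. Define $\tilde\risk(g) \eqdef \Prob(Y\neq g(\bsX,S),\,g(\bsX,S)\neq r) = \NAB(g)\risk(g)$. Since $\NAB(g^*)=\baralpha$ and $\tilde\risk(g^*)\le\baralpha$, the identity
\begin{align*}
    \risk(\hat g) - \risk(g^*) = \frac{\tilde\risk(\hat g) - \tilde\risk(g^*)}{\NAB(\hat g)} + \tilde\risk(g^*)\cdot\frac{\baralpha - \NAB(\hat g)}{\NAB(\hat g)\,\baralpha},
\end{align*}
together with the lower bound $\NAB(\hat g)\ge\baralpha/2$ (which follows from Proposition~\ref{prop:control_reject_and_dp} and the hypothesis $2u_{n_s}^{\delta,K}<\alpha_s$), reduces everything to bounding $\tilde\risk(\hat g)-\tilde\risk(g^*)$, up to an additive error of order $\baralpha^{-1}\sum_s p_s u_{n_s}^{\delta,K}$ that already accounts for part of the concentration contribution in \eqref{eq:main_control_risk}.

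The heart of the argument is a Lagrangian decomposition. Let $\Lambda(g,\bslambda,\bsgamma)$ denote the Lagrangian of \eqref{eq:DPWA}, with the DP constraint rewritten in the linear unnormalized form $\Prob(g(\bsX,S)=1,S=s) - \alpha_s p_s\PT(g) = 0$ tacitly used in the definition of $G$, and let $\hat\Lambda, \hat D$ be its empirical counterparts built with $\hat\eta$ and $\hat\Exp_{\bsX|S=s}$. The proof of Theorem~\ref{thm:optimal} shows that for every $(\bslambda,\bsgamma)$ the pointwise minimizer of $\Lambda(\cdot,\bslambda,\bsgamma)$ is the threshold rule in the statement, and its minimum value equals $D(\bslambda,\bsgamma)\eqdef\scalar{\bslambda}{\bsalpha}+\sum_s\Exp_{\bsX|S=s}[(G(\bsX,s,\bslambda,\bsgamma))_+]$; the analogous statement holds for $\hat\Lambda, \hat D$, and by construction $(\bslambda^*,\bsgamma^*)$ minimizes $D$ while $(\hbslambda,\hbsgamma)$ minimizes $\hat D$. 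Writing $R(\hat g)$ for the linear combination (with coefficients $\lambda_s^*,\gamma_s^*$) of the constraint violations of $\hat g$ that Proposition~\ref{prop:control_reject_and_dp} already bounds, we get the identity $\tilde\risk(\hat g) - \tilde\risk(g^*) = \Lambda(\hat g,\bslambda^*,\bsgamma^*) - D(\bslambda^*,\bsgamma^*) - R(\hat g)$; inserting $\hat\Lambda, \hat D$ yields
\begin{align*}
    \tilde\risk(\hat g) - \tilde\risk(g^*)
    &\le \bigl[\Lambda-\hat\Lambda\bigr](\hat g,\bslambda^*,\bsgamma^*) + \underbrace{\bigl[\hat D(\hbslambda,\hbsgamma)-\hat D(\bslambda^*,\bsgamma^*)\bigr]}_{\le 0} \\
    &\quad + \bigl[\hat D-D\bigr](\bslambda^*,\bsgamma^*) - R(\hat g),
\end{align*}
where the middle bracket is non-positive by empirical optimality of $(\hbslambda,\hbsgamma)$. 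Each remaining $\Lambda-\hat\Lambda$ or $D-\hat D$ discrepancy splits, via linearity and the contraction $||a|-|b||\le|a-b|$ inside $G$, into (i) a piece proportional to $\Exp|\eta-\hat\eta| = \|\eta-\hat\eta\|_1$, producing the factor $3/\baralpha$ after dividing by $\NAB(\hat g)\ge\baralpha/2$, and (ii) a difference between $\Exp_{\bsX|S=s}$ and $\hat\Exp_{\bsX|S=s}$ of a function of $\hat\eta(\bsX,s)$ bounded by $1$, which concentrates at rate $u_{n_s}^{\delta,K}$ by Hoeffding's inequality and a union bound over $s\in[K]$.

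The main obstacle is avoiding a uniform-convergence penalty in the random pair $(\hbslambda,\hbsgamma)$: a naive supremum over $(\bslambda,\bsgamma)\in\bbR^{2K}$ would cost much more than $u_{n_s}^{\delta,K}$. I would bypass this by invoking concentration only at the two specific points $(\bslambda^*,\bsgamma^*)$ and $(\hbslambda,\hbsgamma)$, using the empirical-optimality inequality $\hat D(\hbslambda,\hbsgamma)\le\hat D(\bslambda^*,\bsgamma^*)$ to swap the multipliers without ever needing uniform control; the residual dependence of the thresholds defining $\hat g$ on $(\hbslambda,\hbsgamma)$ is absorbed by a DKW-type bound on the group-wise empirical CDFs of $\hat\eta(\bsX,s)$, which is also what produces the $2/n_s$ remainder in the definition of $u_{n_s}^{\delta,K}$. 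Collecting the resulting bounds and dividing by $\NAB(\hat g)\ge\baralpha/2$ yields the constants $3/\baralpha$ and $6(p_s/\baralpha+1/\alpha_s)$ appearing in \eqref{eq:main_control_risk}.
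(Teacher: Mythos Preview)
Your overall strategy (Lagrangian decomposition, Lipschitz passage from $\eta$ to $\hat\eta$, concentration for the constraint terms) coincides with the paper's, but the execution has two genuine gaps.

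\textbf{The swap of multipliers is in the wrong direction.} Expanding your displayed inequality and cancelling, it is equivalent to
\[
\hat\Lambda(\hat g,\bslambda^*,\bsgamma^*)\ \le\ \hat D(\hbslambda,\hbsgamma)\enspace,
\]
which is false in general: $\hat g$ is the pointwise minimizer of $\hat H_{(\bsx,s)}(\cdot,\hbslambda,\hbsgamma)$, not of $\hat H_{(\bsx,s)}(\cdot,\bslambda^*,\bsgamma^*)$, so one only has $\hat\Lambda(\hat g,\bslambda^*,\bsgamma^*)\ge\hat D(\bslambda^*,\bsgamma^*)\ge\hat D(\hbslambda,\hbsgamma)$. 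The empirical-optimality inequality $\hat D(\hbslambda,\hbsgamma)\le\hat D(\bslambda^*,\bsgamma^*)$ therefore points the wrong way for your purpose. The paper avoids this by taking the opposite route: it evaluates the Lagrangian of $\hat g$ at the \emph{empirical} multipliers $(\hbslambda,\hbsgamma)$ and lower-bounds $\risk(g^*)$ by the (population) dual at those same multipliers, using $\risk(g^*)=\max_{(\bslambda,\bsgamma)}D(\bslambda,\bsgamma)\ge D(\hbslambda,\hbsgamma)$. After replacing $\eta$ by $\hat\eta$ via the $1$-Lipschitz property of $(\cdot)_-$ and $|\cdot|$, the two expressions coincide because $\hat g$ is precisely the pointwise minimizer at $(\hbslambda,\hbsgamma)$; no empirical-to-population swap on the dual is ever needed.

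\textbf{Boundedness of the multipliers is missing.} Once the decomposition is fixed, the residual constraint-violation term carries the coefficients $\hat\lambda_s,\hat\gamma_s$ (not $\lambda_s^*,\gamma_s^*$), and Proposition~\ref{prop:control_reject_and_dp} only bounds the violations themselves. Turning $\sum_s|\hat\lambda_s|\,u_{n_s}^{\delta,K}+\sum_s|\hat\gamma_s|\,(\ldots)$ into the explicit $6\sum_s(\tfrac{p_s}{\bar\alpha}+\tfrac{1}{\alpha_s})u_{n_s}^{\delta,K}$ requires an a priori bound on the minimizers of the empirical dual. The paper proves this separately: using an invariance of the objective under $\gamma_s\mapsto\gamma_s+\tfrac{p_s\alpha_s}{\bar\alpha}c$ and the first-order conditions, it shows $\|\hbsgamma\|_1\le 2$ and $|\hat\lambda_s|\le\tfrac{p_s}{\bar\alpha}\vee\tfrac{|\hat\gamma_s|}{\alpha_s}$ (this is also where the hypothesis $\alpha_s<1-\tfrac{2}{n_s}$ is used). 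Your sketch does not supply this step, and without it the final constants cannot be obtained.
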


For convenience and clarity of exposition we stated separately the control on the constraint and on the excess risk. However, we remark that both Proposition~\ref{prop:control_reject_and_dp} and Proposition~\ref{prop:main_control_risk} hold on the same high-probability event.

We naturally conclude from Proposition~\ref{prop:main_control_risk} that if one has access to a consistent estimator $\hat{\eta}$ of $\eta$, \ie such that $\norm{\eta - \hat\eta}_1$ goes to $0$ as the sample sizes $(n_s)_{s = 1}^K$ go to infinity, then the excess risk can be made arbitrarily small by getting more labeled and unlabeled data.

The only assumption, constraining the reject rates $(\alpha_s)_{s=1}^K$, is quite benign. Recall that $\alpha_s$ is the rate at which the classifier is asked to give a prediction thus, in practice, it is expected to be at least greater than a half. Furthermore, note that it only depends on the size of the unlabeled dataset thus, if one has enough samples, this assumption essentially holds for free. If the sample size is small, than one has to allow the classifier to reject more often in order to satisfy the constraints.
Similar constraints are present in other contributions~\citep[see \eg][]{agarwal2018reductions,agarwal2019fair}.

Our theoretical analysis is inspired by that of \cite{chzhen2020fairplugin}. However, their results hold only in expectation while ours hold with high-probability. Moreover, due to the interplay of the reject and demographic parity constraints, their proof technique requires a non-trivial adaptation to our context. %Yet, the main proof idea is similar -- both proofs rely on the fact that once the base estimator is fixed, the problem reduces to threshold learning, which is known to be a VC class of classifiers.

\begin{figure*}[t!]
  \centering
%   \begin{subfigure}[t!]{.24\linewidth}
%     \includegraphics[width=\linewidth]{}%
%     \caption{Accuracy}\label{fig:ger1}
%   \end{subfigure}
% %   \hfill
  \begin{subfigure}[t!]{.37\linewidth}
    \includegraphics[width=\linewidth]{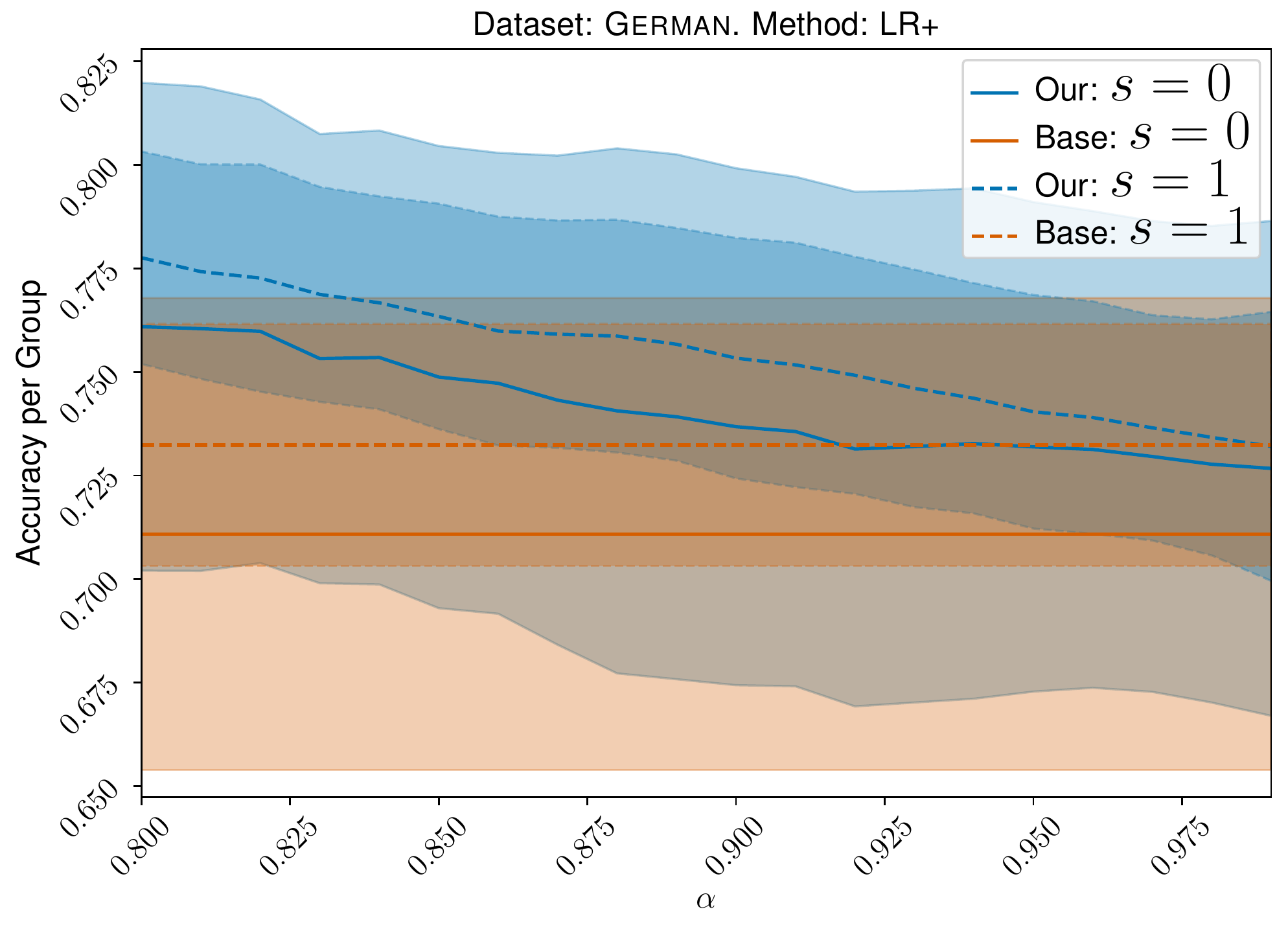}%
    \caption{Accuracy per group}\label{fig:ger2}
  \end{subfigure}%
  \begin{subfigure}[t!]{.37\linewidth}
    \includegraphics[width=\linewidth]{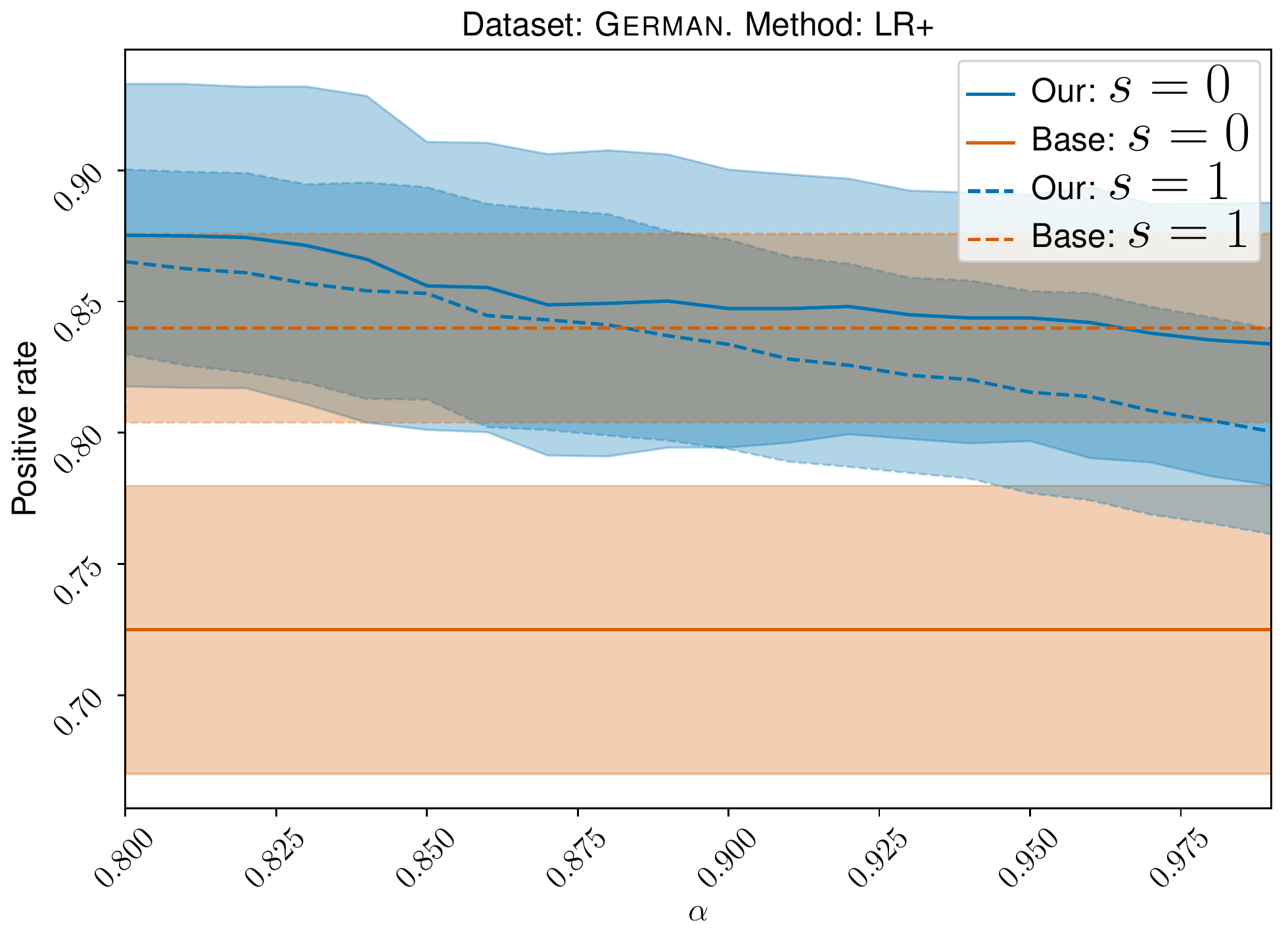}%
    \caption{Positive rate}\label{fig:ger3}
  \end{subfigure}%
  \begin{subfigure}[t!]{.26\linewidth}
    \includegraphics[width=\linewidth]{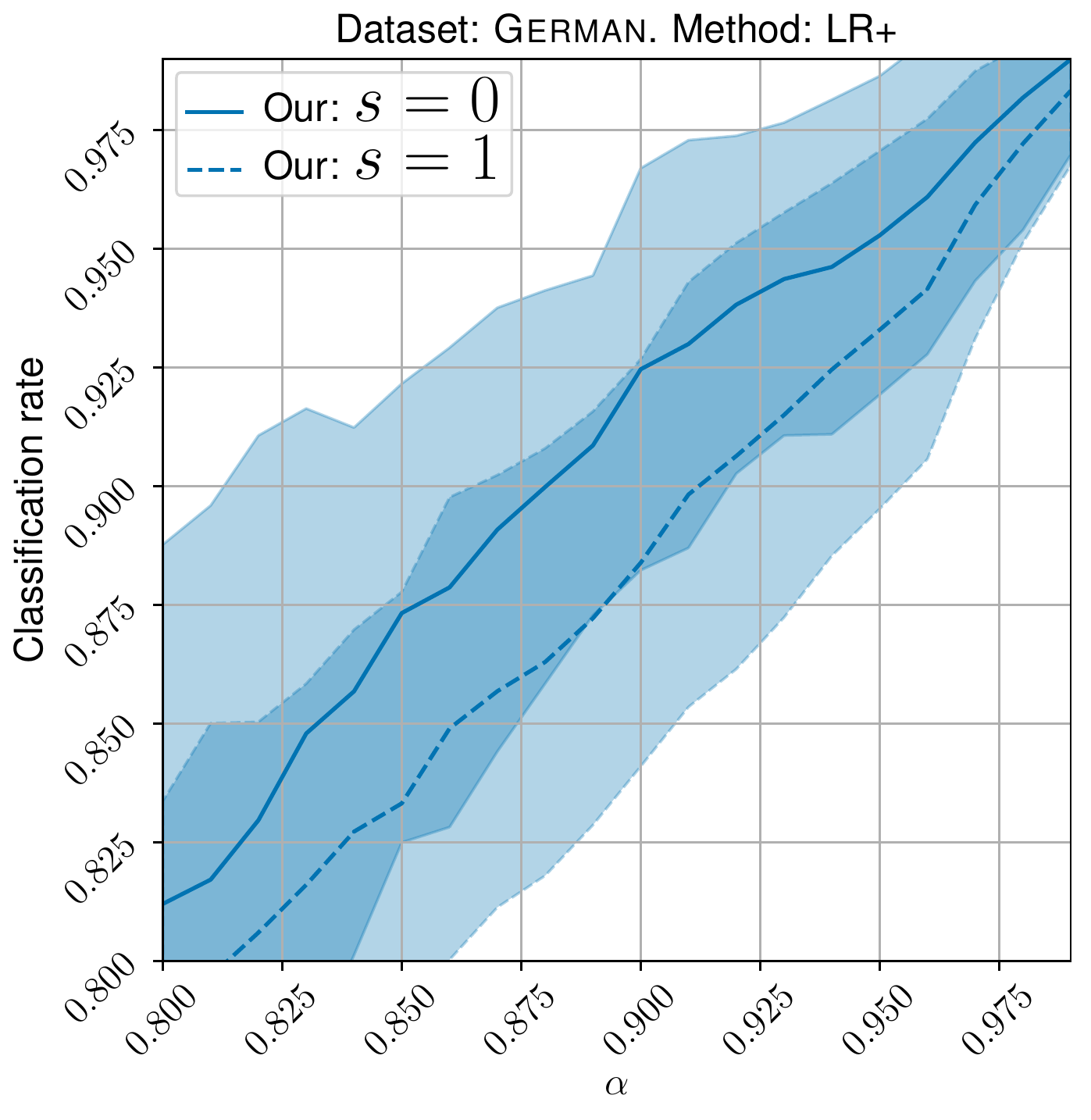}%
    \caption{Classification rate}\label{fig:ger4}
  \end{subfigure}
  \caption{Results on \textsc{German} dataset with Logistic Regression (LR) as the base estimator. Blue lines correspond to our post-processing method; Orange lines correspond to the base classifier. Dashed line correspond to $s = 1$ and solid line to $s = 0$. Shaded areas correspond to the variance of the result over $20$ repetitions.}\label{fig:german}
\end{figure*}

\section{LP reduction}
\label{sec:lp}
We recall that the proposed post-processing scheme involves solving convex non-smooth minimization problem in Eq.~\eqref{eq:emp_min}. While for low values of $K$ (few sensitive attributes) this problem can be solved via simple grid-search, which would be faster than sub-gradient methods, large values of $K$ can pose significant computational difficulties.

It turns out that the minimization problem in Eq.~\eqref{eq:emp_min} is equivalent to Linear Programming (LP)~\citep{matousek2007understanding} with sparse constraint matrix.
For any matrix $\mathbf{A} \in \bbR^{n \times m}$ we denote by $\nnZ(\mathbf{A})$ the number of non-zero elements of $\mathbf{A}$.
\begin{proposition}
\label{prop:optimization}
    There exist $\bsc \in \bbR^{n + 2K}$, $\bsb \in \bbR^{2n}$, $\mathbf{A} \in \bbR^{2n \times (n + 2K)}$ with $\nnZ(\mathbf{A}) \leq 4n + nK$, such that
    the minimization problem in Eq.~\eqref{eq:emp_min}, is equivalent to \begin{equation}
    \tag{\textbf{LP}}
    \label{eq:LP_main}
    \begin{aligned}
        &\min_{\bsy \in \bbR^{n + 2K}}\scalar{\bsc}{\bsy}\\
        &\text{s.t.}\qquad
        \begin{cases}
            \mathbf{A}\bsy \leq \bsb &\\
            y_i \geq 0 &i \in [n]
        \end{cases}\enspace.
    \end{aligned}
    \end{equation}
\end{proposition}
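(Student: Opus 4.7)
The plan is to recognize Eq.~\eqref{eq:emp_min} as the minimization of a piecewise-linear convex function of $(\bslambda,\bsgamma)$ and apply the standard epigraph reformulation that turns such problems into a linear program. The only non-routine ingredient is the sparsity count of the resulting constraint matrix.

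First, I would unfold the empirical expectation as $\hat{\Exp}_{\bsX|S=s}(\hat G)_+ = \frac{1}{n_s}\sum_{i\in\class{I}_s}(\hat G(\bsX_i,s,\bslambda,\bsgamma))_+$ so that the objective is a sum of $n$ piecewise-linear terms plus $\scalar{\bslambda}{\bsalpha}$. For each index $i\in[n]$ with group $s_i$ (the unique $s$ with $i\in\class{I}_s$), write $\hat G(\bsX_i,s_i,\bslambda,\bsgamma) = |A_i| - B_i$, where $A_i$ and $B_i$ are the two affine functions of $(\bslambda,\bsgamma)$ read off the definition of $\hat G$. Introduce epigraph variables $t_i\in\bbR$ and replace each $(\hat G_i)_+$ by $t_i$ in the objective, subject to $t_i\geq 0$, $t_i\geq A_i-B_i$, and $t_i\geq -A_i-B_i$. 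Since the objective is minimized in $t_i$ with positive coefficient $1/n_{s_i}$, the optimal $t_i$ equals $\max(A_i-B_i,\,-A_i-B_i,\,0)=(|A_i|-B_i)_+$, so the two formulations share the same minimum and minimizer projections.

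Then, stacking $\bsy = (t_1,\dots,t_n,\lambda_1,\dots,\lambda_K,\gamma_1,\dots,\gamma_K)^\top \in \bbR^{n+2K}$, the objective becomes $\scalar{\bsc}{\bsy}$ with $c_i=1/n_{s_i}$ for $i\in[n]$, with $\bsalpha$ on the $\bslambda$-block, and zeros on the $\bsgamma$-block. The two epigraph inequalities per $i$ assemble into a single system $\mathbf{A}\bsy\leq\bsb$ of $2n$ rows, and the remaining constraint $t_i\geq 0$ becomes $y_i\geq 0$ for $i\in[n]$, while the multipliers $(\bslambda,\bsgamma)$ stay unconstrained in sign, matching the form of the target LP in Eq.~\eqref{eq:LP_main}.

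The main obstacle — and the step that makes the sparsity bound achievable rather than quadratic in $K$ — is the direct calculation of $\mathbf{A}$. The key observation is the cancellation
\begin{align*}
    A_i - B_i = -\frac{p_{s_i}}{\bar\alpha}\,\hat\eta(\bsX_i,s_i) - \lambda_{s_i}\,,
\end{align*}
which collapses the first epigraph row $A_i-B_i-t_i\leq 0$ to only two non-zero entries (on $t_i$ and on $\lambda_{s_i}$), contributing $2n$ non-zeros in total. The companion row $-A_i-B_i-t_i\leq 0$ retains the terms from $B_i$, namely $\frac{p_{s_i}}{\bar\alpha}\scalar{\bsgamma}{\bsone}$, $-\gamma_{s_i}/\alpha_{s_i}$, $-\lambda_{s_i}$, and $-t_i$, for a total of at most $K+2$ non-zero entries per row and $n(K+2)$ overall. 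Summing gives $\nnZ(\mathbf{A})\leq 2n+n(K+2)=4n+nK$, matching the announced bound. The rest of the argument is bookkeeping to read off $\bsc$ and $\bsb$ from the calculation above.
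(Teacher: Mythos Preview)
Your proposal is correct and follows essentially the same approach as the paper: introduce one slack variable per sample, use the identity $(x)_+=\min\{\zeta:\zeta\geq x,\ \zeta\geq 0\}$, and exploit the cancellation in $A_i-B_i$ so that the first block of constraints has only two nonzeros per row while the second block has $K+2$, giving the $4n+nK$ bound. The paper presents the same construction by writing out $\mathbf{A}$, $\bsb$, $\bsc$ explicitly rather than arguing the sparsity count row by row, but the content is identical.
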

Due to the space considerations, the previous result is stated in existential form, however, all the parameters of the LP are explicit and are provided in the supplementary material.
Seminal works of~\citep{khachiyan1979polynomial,karmarkar1984new} confirmed that LP with rational coefficients can be solved in weakly polynomial time. Since then, extremely efficient solvers were developed based on the interior-point and simplex methods. 
The fact that the post-processing reduces to an LP problem allows us to use these fast solvers. In particular, most of the computational burden lies on the training of the base estimator $\heta$ while the post-processing can be performed almost instantly.
From theoretical perspective, one can leverage the sparse structure of the problem using, for instance, the result of~\citep{lee2015efficient} who provide an efficient solver to find an $\varepsilon$ solution of an LP in $\tilde{O}((\nnZ(\mathbf{A}) + n^2)\sqrt{n}\log(\varepsilon^{-1}))$ time. In particular, the previous guarantee scales only linearly with the number of sensitive attributes and logarithmically with the precision $\varepsilon$. However, in our practical implementation of the proposed method, we use interior point method available as a part of \texttt{scipy.optimize.linprog}~\citep{scipy}. 

\section{Experiments}
\label{sec:exps}

We provide an implementation of the proposed post-processing procedure described in Algorithm~\ref{alg:1} using \texttt{scipy.optimize.linprog}~\citep{scipy}, which implements interior point method for solving problem~\eqref{eq:LP_main}. The source code is available at~\url{https://github.com/evgchz/dpabst}.
We consider \textsc{Adult}~\citep{adult} and
% \textsc{Bank+Marketing}~\citep{bank_marketing}\\
% \textsc{Compas}\\
\textsc{German}~\citep{UCI} datasets, which are standard benchmark datasets in the fairness literature.

\textsc{Adult} dataset is fetched via \texttt{fairlearn.datasets} \citep{fairlearn}. This dataset contains $14$ features and around $48,000$ observations. We dropped those observations that contain missing values. This dataset consists of 1994 US Census entries. Each entry of this dataset corresponds to an individual who is described by $14$ characteristics, the binary target variable is equal to $1$ if the individual earns more than $\$50K$ per year and it is set to $0$ otherwise. In our experiments we take sex as a sensitive attribute.

\textsc{German} dataset is hosted on the UCI Machine Learning Repository \citep{UCI}. Each of the $1,000$ entries represents a person who takes a credit by a bank. The binary target variable is equal to one if the individual is considered as good credit risks based on $20$ categorical/symbolic attributes and is set to $0$ otherwise.
We use ordinal-encoding for ordinal variables and one-hot-encoding for other categorical variables which yields $46$ features in total. In our experiments we take sex as sensitive attribute.

We consider the following off-the-shelf methods: Random Forest (RF) and Logistic Regression (LR).
% Linear Support Vector Classifier (L-SVC).
We used the \texttt{sklearn}~\citep{sklearn} implementation of the aforementioned methods.

Each dataset of size $N$ we partition in three parts. The first labeled part ($60\%$ of $N$) is used to train the base classifier, the second unlabeled part ($20\%$ of $N$) is used to apply the proposed post-processing, and the third part ($20\%$ of $N$) is used for evaluation of various statistics, which describe performance of the algorithm.

The hyperparameters of each base algorithm are tuned via $5$-fold cross validation with accuracy as the performance measure. The regularization parameter of LR is searched among $30$ values, equally spaced in logarithmic scale between $10^{-4}$ and $10^{4}$. For RF the number of trees has been set to $1000$ and the size of the subset of features optimized at each node has been searched in $\{d, \ceil*{d^{\sfrac{15}{16}}}, \ceil*{d^{\sfrac{7}{8}}}, \ceil*{d^{\sfrac{3}{4}}}, \ceil*{d^{\sfrac{1}{2}}}, \ceil*{d^{\sfrac{1}{4}}}, \ceil*{d^{\sfrac{1}{8}}}, \ceil*{d^{\sfrac{1}{16}}}, 1\}$ where $d$ is the number of features in the dataset.
Recall that our post-processing algorithm is parameter-free, thus, the second step is performed without any tuning.
Our setup allows to set different reject rates for different groups. However, the exact values heavily depend on the domain specific knowledge and on the problem itself. Because of that, in our experiments, we set $\alpha_1 = \ldots = \alpha_K = \alpha$ for 20 values of $\alpha$ taking values in the uniform grid over $[.8, .99]$, which correspond to reject rate ranging from $20\%$ to $1\%$.

Given a classifier with reject option $g$ and a test data $\class{T} = \{(\bsx_i, s_i, y_i)\}_{i = 1}^{n_{\test}}$, we evaluate the following statistics
\begin{align*}
    % &\widehat\acc(g) = \frac{\sum_{i = 1}^{n_{\test}}\ind{g(\bsx_i, s_i) = y_i}}{\sum_{i = 1}^{n_{\test}}\ind{g(\bsx_i, s_i) \neq r}}\enspace,&&\\
    &\widehat\acc_s(g) = \frac{\sum_{i = 1}^{n_{\test}}\ind{g(\bsx_i, s_i) = y_i}\ind{s_i = s}}{\sum_{i = 1}^{n_{\test}}\ind{g(\bsx_i, s_i) \neq r}\ind{s_i = s}}\enspace,&&s = 1, \ldots, K\enspace,\\
    &\widehat\clf_s(g) = \frac{\sum_{i = 1}^{n_{\test}}\ind{g(\bsx_i, s_i) \neq r}\ind{s_i = s}}{\sum_{i = 1}^{n_{\test}}\ind{s_i = s}}\enspace, &&s = 1, \ldots, K\enspace,\\
    &\widehat\pos_s(g) = \frac{\sum_{i = 1}^{n_{\test}}\ind{g(\bsx_i, s_i) = 1}\ind{s_i = s}}{\sum_{i = 1}^{n_{\test}}\ind{g(\bsx_i, s_i) \neq r}\ind{s_i = s}}\enspace, &&s = 1, \ldots, K\enspace.
\end{align*}
The first statistic measures the accuracy of $g$, the second the group-wise classification rate of $g$, and the third one measures the group-wise predicted positive rate of $g$. It is important to keep in mind that a classifier $g$ which never rejects achieves $\clf_s(g) = 1$ on any dataset.

\begin{figure*}[t!]
  \centering
%   \begin{subfigure}[t!]{.24\linewidth}
%     \includegraphics[width=\linewidth]{}%
%     \caption{Accuracy}\label{fig:adultrf1}
%   \end{subfigure}
%   \hfill
  \begin{subfigure}[t!]{.37\linewidth}
    \includegraphics[width=\linewidth]{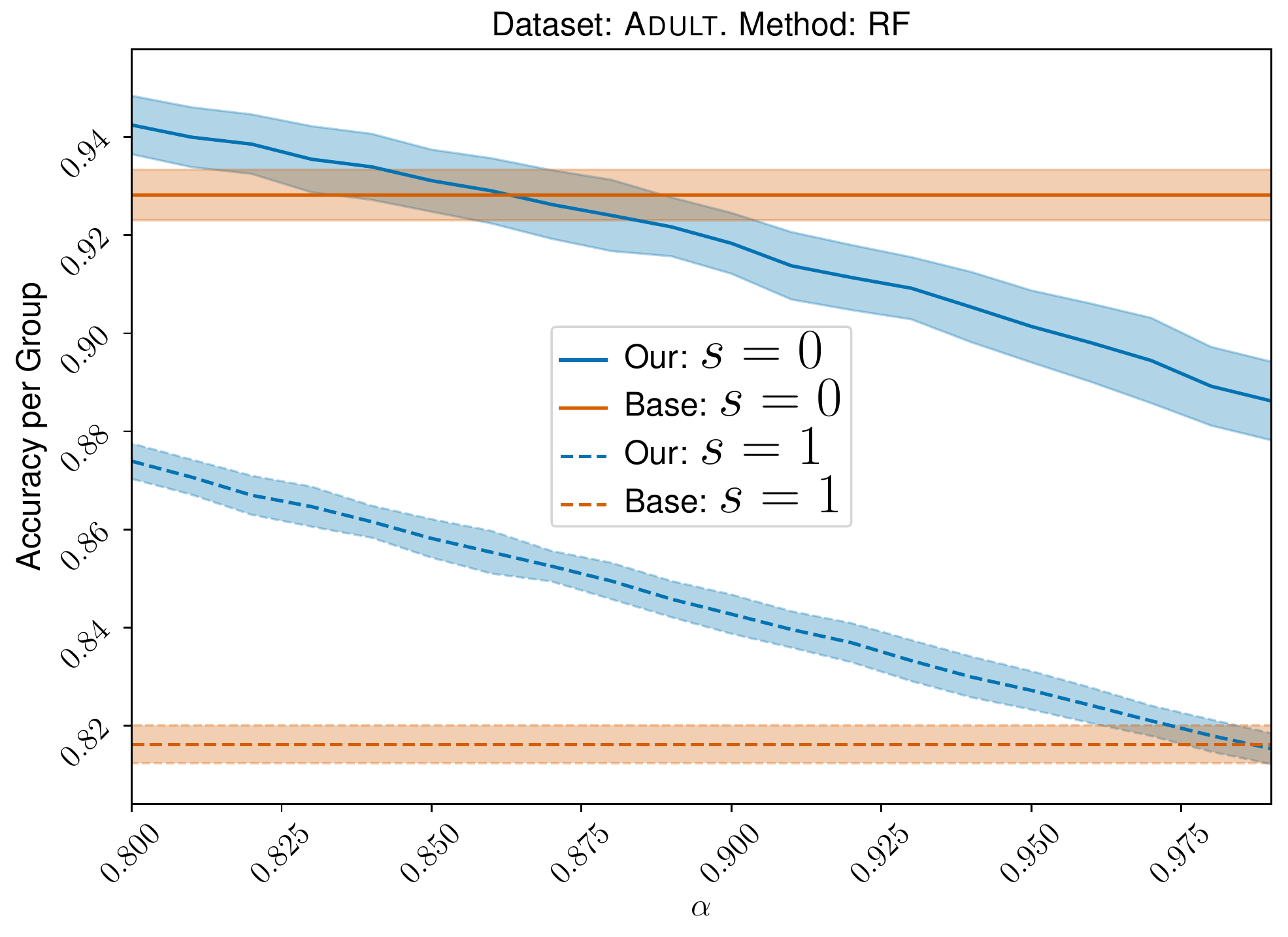}%
    \caption{Accuracy per group}\label{fig:adultrf2}
  \end{subfigure}%
  \begin{subfigure}[t!]{.37\linewidth}
    \includegraphics[width=\linewidth]{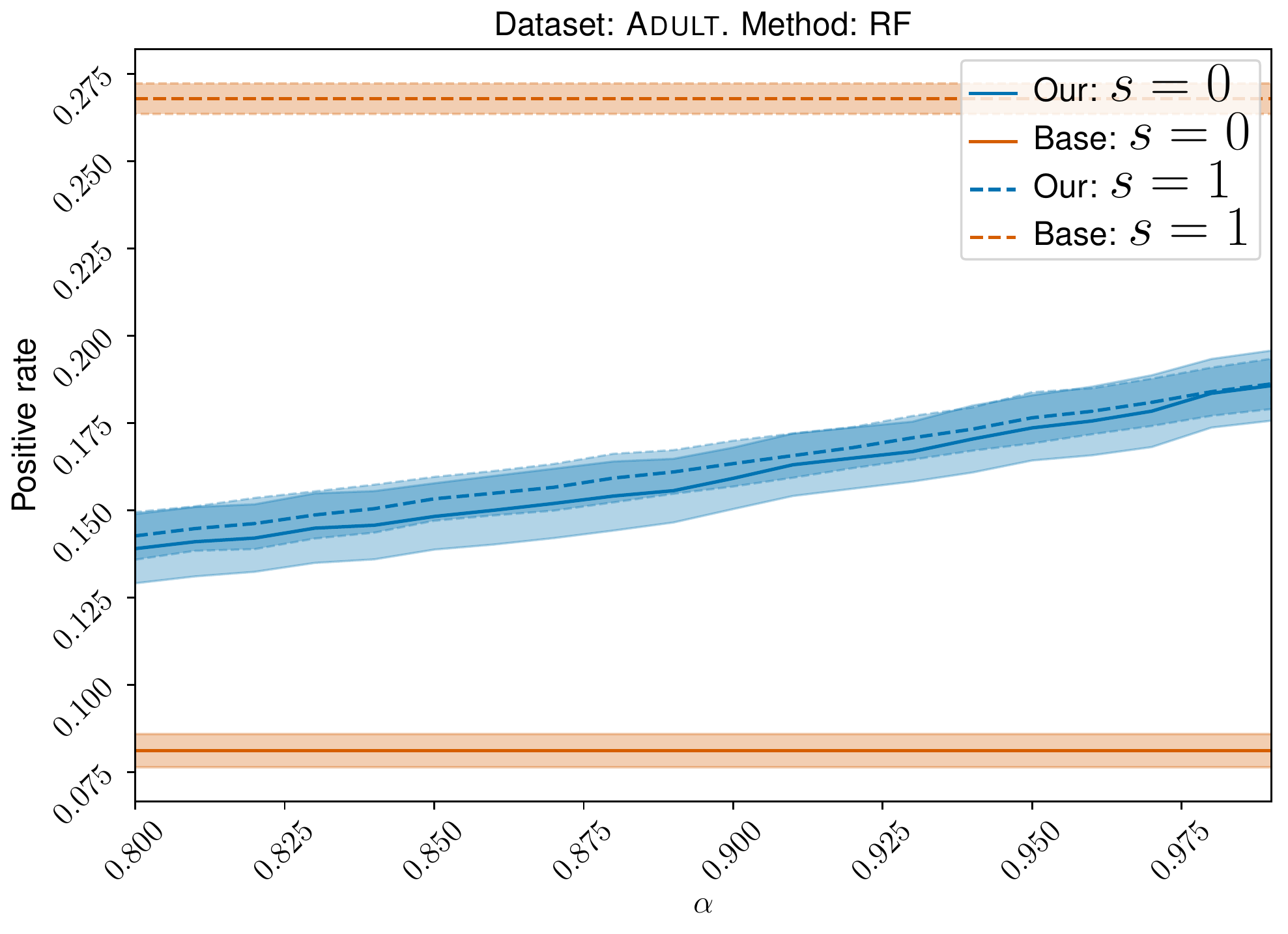}%
    \caption{Positive rate}\label{fig:adultrf3}
  \end{subfigure}%
  \begin{subfigure}[t!]{.26\linewidth}
    \includegraphics[width=\linewidth]{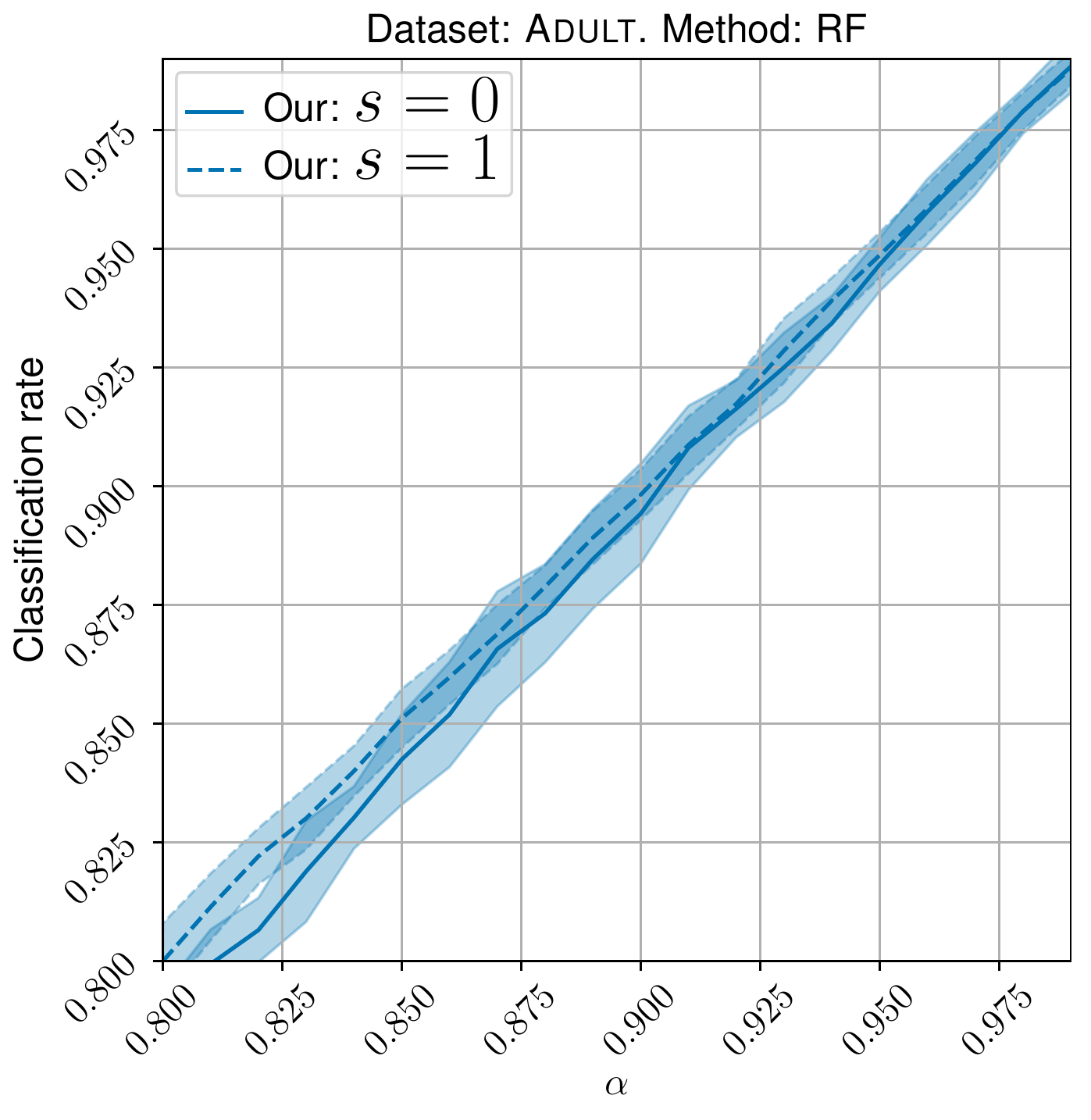}%
    \caption{Classification rate}\label{fig:adultrf4}
  \end{subfigure}
  \caption{Results on \textsc{Adult} dataset with Random Forest (RF) \textbf{without} additional randomization as the base estimator. Blue lines correspond to our post-processing method; Orange lines correspond to the base classifier. Dashed line correspond to $s = 1$ and solid line to $s = 0$. Shaded areas correspond to the variance of the result over $20$ repetitions.}\label{fig:adultrf}
\end{figure*}

\begin{figure*}[t!]
  \centering
%   \begin{subfigure}[t!]{.24\linewidth}
%     \includegraphics[width=\linewidth]{}%
%     \caption{Accuracy}\label{fig:adultrf+1}
%   \end{subfigure}
%   \hfill
  \begin{subfigure}[t!]{.37\linewidth}
    \includegraphics[width=\linewidth]{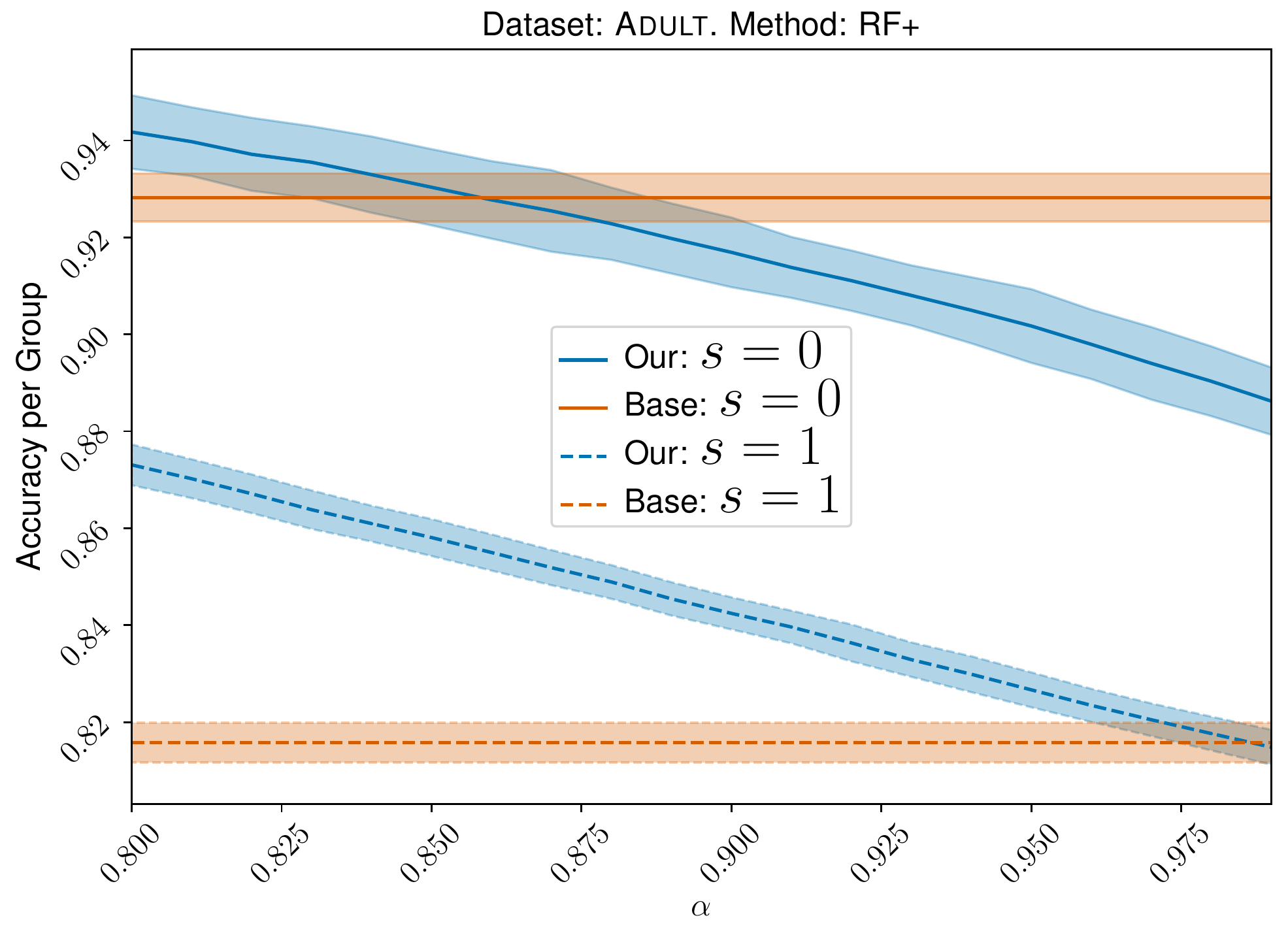}%
    \caption{Accuracy per group}\label{fig:adultrf+2}
  \end{subfigure}%
  \begin{subfigure}[t!]{.37\linewidth}
    \includegraphics[width=\linewidth]{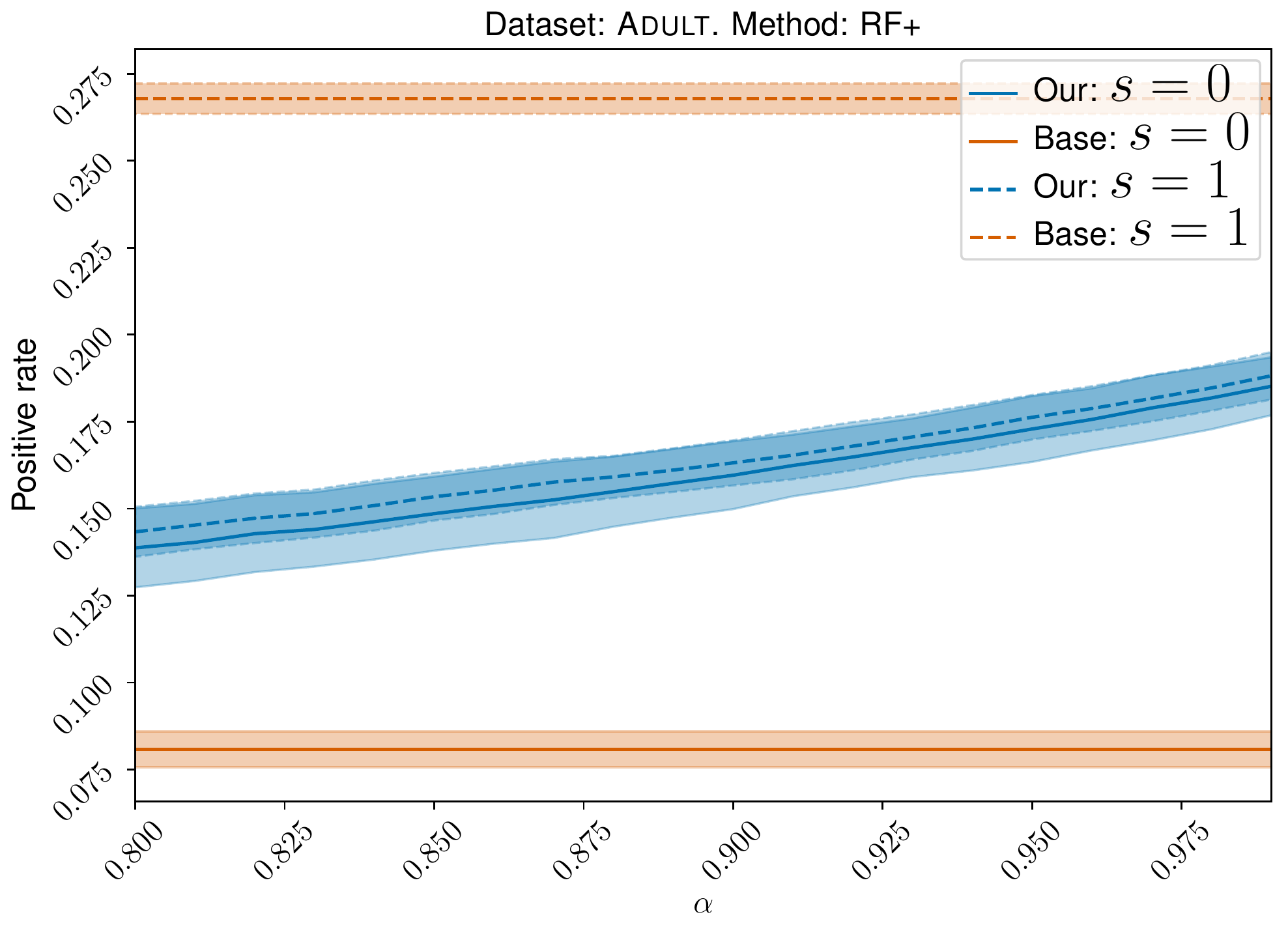}%
    \caption{Positive rate}\label{fig:adultrf+3}
  \end{subfigure}%
  \begin{subfigure}[t!]{.26\linewidth}
    \includegraphics[width=\linewidth]{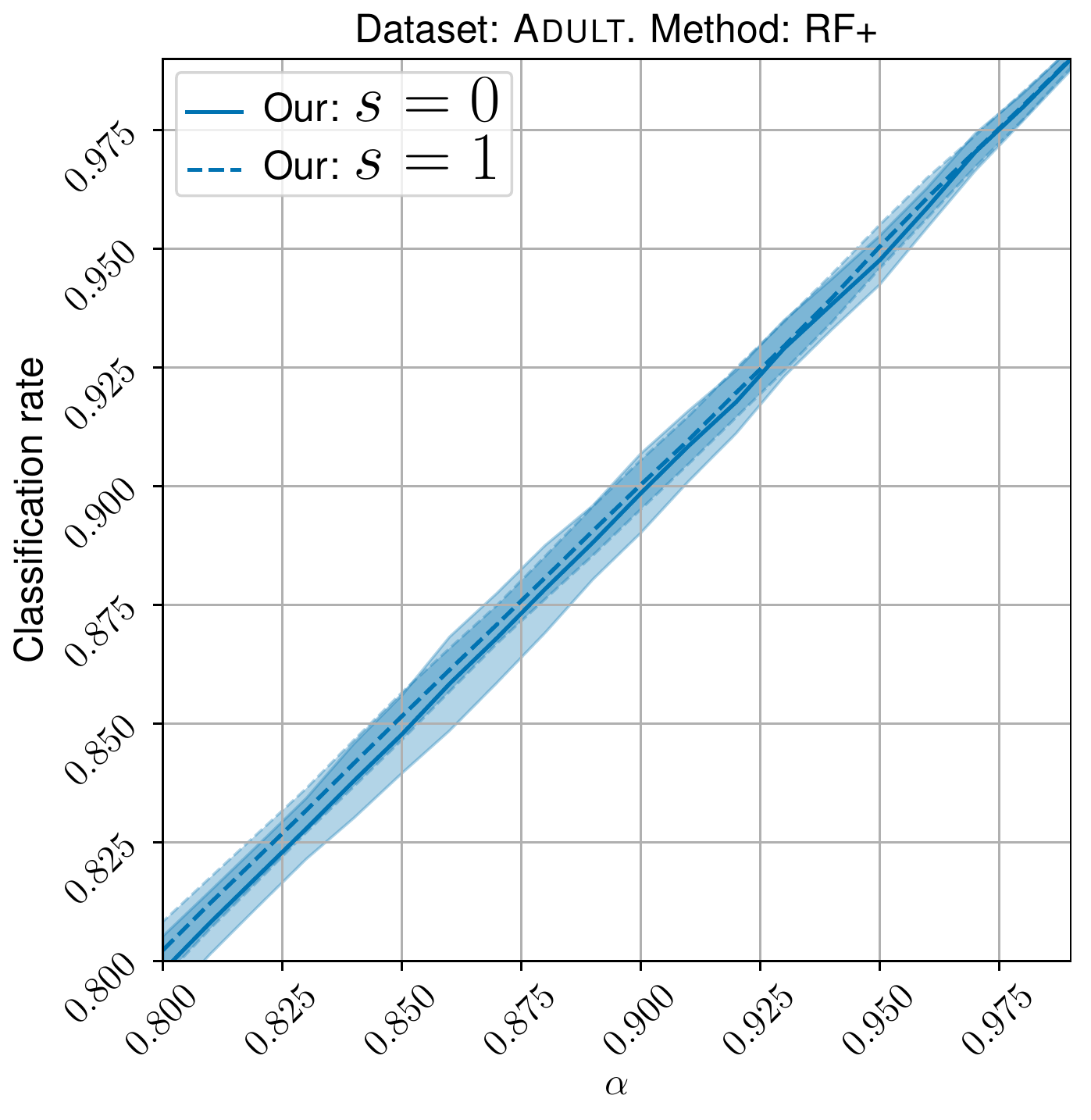}%
    \caption{Classification rate}\label{fig:adultrf+4}
  \end{subfigure}
  \caption{Results on \textsc{Adult} dataset with Random Forest (RF) \textbf{with} additional randomization as the base estimator. Blue lines correspond to our post-processing method; Orange lines correspond to the base classifier. Dashed line correspond to $s = 1$ and solid line to $s = 0$. Shaded areas correspond to the variance of the result over $20$ repetitions.}\label{fig:adultrf+}
\end{figure*}
Figure~\ref{fig:adult} presents results on \textsc{Adult} dataset. First of all we observe that the proposed post-processing is effective in imposing reject and fairness constraints as illustrated on Figures~\ref{fig:adult3}-\ref{fig:adult4}. Looking at Figure~\ref{fig:adult2}, we observe that for already moderately low values of rejection our classification algorithm equalizes and even exceeds the accuracy per groups and overall of the base classifier.
Figure~\ref{fig:german} presents result on \textsc{German} dataset. Overall conclusions remain the same as for the \textsc{Adult} dataset. The main difference is an increase in variance of the result. This effect should not be attributed to the method itself but rather to the size of the two datasets. Indeed, \textsc{Adult} contains around $40,000$ observation, while \textsc{German} contains only $1,000$ observations. Hence, it is simply a more difficult task to learn stable classification algorithms on the \textsc{German} dataset. Remarkably, already $1\%$ of reject rate allows to maintain the accuracy of the base classifier while significantly improving its fairness as illustrated on Figure~\ref{fig:ger2}.

We would also like to highlight the importance of the additive noise perturbation present in Algorithm~\ref{alg:1}. To this end, we consider RF classifier, which naturally does not lead to continuous estimator $\heta(\bsX, S)$ due to its partitioning nature.
On Figure~\ref{fig:adultrf} we display the performance of our algorithm without any additional randomization and on Figure~\ref{fig:adultrf+} follow Algorithm~\ref{alg:1} with $\sigma = 10^{-3}$. One can see that on Figure~\ref{fig:adultrf4} the behaviour of our procedure fails to satisfy rejection rate constraints for lower values of $\alpha$, even, considering the fact, that we have a rather large dataset. In contrast, this phenomenon disappears once the noise is added (see Figure~\ref{fig:adultrf+4}), confirming our theoretical findings.
It is important to emphasize that this additional randomization has only a little impact on the group-wise accuracy, which suggest that the randomization step is always advisable in practice.

\section{Conclusion}
We proposed a classification with abstention algorithm which is able to satisfy Demographic Parity and whose reject rate is controlled explicitly. Our procedure is based on a post-processing scheme of any base estimator and can be computed efficiently using LP solvers.
We derived distribution-free finite-sample guarantees demonstrating that the proposed method is able to achieve the prescribed constraints with high probability. Under additional mild assumption, we showed the risk of the proposed procedure nearly matches that of the theoretical  minimum, provided the initial estimator is consistent.
Our experimental results support the developed theory and suggest that by allowing small reject rate it is possible to avoid the accuracy-fairness trade-off.
% In future works, we plan to consider other fairness constraints (\eg Equal Opportunity).

\section{Acknowledgements}
This work was supported by a public grant as part of the Investissement d'avenir project, reference ANR-11-LABX-0056-LMH, LabEx LMH.

\bibliography{example_paper}

\begin{thebibliography}{45}
\providecommand{\natexlab}[1]{#1}
\providecommand{\url}[1]{\texttt{#1}}
\expandafter\ifx\csname urlstyle\endcsname\relax
  \providecommand{\doi}[1]{doi: #1}\else
  \providecommand{\doi}{doi: \begingroup \urlstyle{rm}\Url}\fi

\bibitem[Agarwal et~al.(2018)Agarwal, Beygelzimer, Dud{\'\i}k, Langford, and
  Wallach]{agarwal2018reductions}
Alekh Agarwal, Alina Beygelzimer, Miroslav Dud{\'\i}k, John Langford, and Hanna
  Wallach.
\newblock A reductions approach to fair classification.
\newblock In \emph{International Conference on Machine Learning}, pages 60--69.
  PMLR, 2018.

\bibitem[Agarwal et~al.(2019)Agarwal, Dudik, and Wu]{agarwal2019fair}
Alekh Agarwal, Miroslav Dudik, and Zhiwei~Steven Wu.
\newblock Fair regression: Quantitative definitions and reduction-based
  algorithms.
\newblock In \emph{International Conference on Machine Learning}, pages
  120--129. PMLR, 2019.

\bibitem[Audibert et~al.(2007)Audibert, Tsybakov, et~al.]{audibert2007fast}
Jean-Yves Audibert, Alexandre~B Tsybakov, et~al.
\newblock Fast learning rates for plug-in classifiers.
\newblock \emph{The Annals of statistics}, 35\penalty0 (2):\penalty0 608--633,
  2007.

\bibitem[Barocas et~al.(2019)Barocas, Hardt, and
  Narayanan]{barocas-hardt-narayanan}
Solon Barocas, Moritz Hardt, and Arvind Narayanan.
\newblock \emph{Fairness and Machine Learning}.
\newblock fairmlbook.org, 2019.
\newblock \url{http://www.fairmlbook.org}.

\bibitem[Bartlett and Wegkamp(2008)]{Bartlett_Wegkamp08}
P.~Bartlett and M.~Wegkamp.
\newblock Classification with a reject option using a hinge loss.
\newblock \emph{J. Mach. Learn. Res.}, 9:\penalty0 1823--1840, 2008.

\bibitem[Biau and Scornet(2016)]{biau2016random}
G{\'e}rard Biau and Erwan Scornet.
\newblock A random forest guided tour.
\newblock \emph{Test}, 25\penalty0 (2):\penalty0 197--227, 2016.

\bibitem[Bird et~al.()Bird, Dud{\'\i}k, Edgar, Horn, Lutz, Milan, Sameki,
  Wallach, and Walker]{fairlearn}
Sarah Bird, Miro Dud{\'\i}k, Richard Edgar, Brandon Horn, Roman Lutz, Vanessa
  Milan, Mehrnoosh Sameki, Hanna Wallach, and Kathleen Walker.
\newblock Fairlearn: A toolkit for assessing and improving fairness in ai.
\newblock Technical report.

\bibitem[Bousquet and Zhivotovskiy(2019)]{bousquet2019fast}
Olivier Bousquet and Nikita Zhivotovskiy.
\newblock Fast classification rates without standard margin assumptions.
\newblock \emph{arXiv preprint arXiv:1910.12756}, 2019.

\bibitem[Breiman(2001)]{breiman2001random}
Leo Breiman.
\newblock Random forests.
\newblock \emph{Machine learning}, 45\penalty0 (1):\penalty0 5--32, 2001.

\bibitem[B{\"u}hlmann and Van~de Geer(2011)]{buhlmann2011statistics}
Peter B{\"u}hlmann and Sara Van~de Geer.
\newblock \emph{Statistics for high-dimensional data: methods, theory and
  applications}.
\newblock Springer Science \& Business Media, 2011.

\bibitem[Chow(1957)]{Chow57}
C.~Chow.
\newblock An optimum character recognition system using decision functions.
\newblock \emph{IRE Transactions on Electronic Computers}, \penalty0
  (4):\penalty0 247--254, 1957.

\bibitem[Chow(1970)]{Chow70}
C.~Chow.
\newblock On optimum error and reject trade-off.
\newblock \emph{IEEE Trans. Inform. Theory}, 16:\penalty0 41--46, 1970.

\bibitem[Chzhen and Schreuder(2020)]{chzhen2020minimax}
Evgenii Chzhen and Nicolas Schreuder.
\newblock A minimax framework for quantifying risk-fairness trade-off in
  regression.
\newblock \emph{arXiv preprint arXiv:2007.14265}, 2020.

\bibitem[Chzhen et~al.(2019)Chzhen, Denis, Hebiri, Oneto, and
  Pontil]{chzhen2019leveraging}
Evgenii Chzhen, Christophe Denis, Mohamed Hebiri, Luca Oneto, and Massimiliano
  Pontil.
\newblock Leveraging labeled and unlabeled data for consistent fair binary
  classification.
\newblock In \emph{NeurIPS 2019-33th Annual Conference on Neural Information
  Processing Systems}, 2019.

\bibitem[Chzhen et~al.(2020)Chzhen, Denis, Hebiri, Oneto, and
  Pontil]{chzhen2020fairplugin}
Evgenii Chzhen, Christophe Denis, Mohamed Hebiri, Luca Oneto, and Massimiliano
  Pontil.
\newblock Fair regression via plug-in estimator and recalibration with
  statistical guarantees.
\newblock 2020.

\bibitem[del Barrio et~al.(2020)del Barrio, Gordaliza, and
  Loubes]{del2020review}
Eustasio del Barrio, Paula Gordaliza, and Jean-Michel Loubes.
\newblock Review of mathematical frameworks for fairness in machine learning.
\newblock \emph{arXiv preprint arXiv:2005.13755}, 2020.

\bibitem[Denis and Hebiri(2020)]{denis2020consistency}
Christophe Denis and Mohamed Hebiri.
\newblock Consistency of plug-in confidence sets for classification in
  semi-supervised learning.
\newblock \emph{Journal of Nonparametric Statistics}, 32\penalty0 (1):\penalty0
  42--72, 2020.

\bibitem[Devroye et~al.(2013)Devroye, Gy{\"o}rfi, and
  Lugosi]{devroye2013probabilistic}
Luc Devroye, L{\'a}szl{\'o} Gy{\"o}rfi, and G{\'a}bor Lugosi.
\newblock \emph{A probabilistic theory of pattern recognition}, volume~31.
\newblock Springer Science \& Business Media, 2013.

\bibitem[Dua and Graff(2017)]{UCI}
Dheeru Dua and Casey Graff.
\newblock {UCI} machine learning repository, 2017.
\newblock URL \url{http://archive.ics.uci.edu/ml}.

\bibitem[Grandvalet et~al.(2008)Grandvalet, Rakotomamonjy, Keshet, and
  Canu]{grandvalet2008support}
Yves Grandvalet, Alain Rakotomamonjy, Joseph Keshet, and St{\'e}phane Canu.
\newblock Support vector machines with a reject option.
\newblock \emph{Advances in neural information processing systems},
  21:\penalty0 537--544, 2008.

\bibitem[Gy\"orfi et~al.(1979)Gy\"orfi, Gy\"orfi, and
  Vajda]{gyorfi_gyorfi_vajda79}
L.~Gy\"orfi, Z.~Gy\"orfi, and I.~Vajda.
\newblock Bayesian decision with rejection.
\newblock \emph{Problems of Control and Information Theory}, 8, 01 1979.

\bibitem[Hardt et~al.(2016)Hardt, Price, and Srebro]{hardt2016equality}
Moritz Hardt, Eric Price, and Nathan Srebro.
\newblock Equality of opportunity in supervised learning.
\newblock In \emph{Proceedings of the 30th International Conference on Neural
  Information Processing Systems}, pages 3323--3331, 2016.

\bibitem[Herbei and Wegkamp(2006)]{Herbei_Wegkamp06}
R.~Herbei and M.~Wegkamp.
\newblock Classification with reject option.
\newblock \emph{Canad. J. Statist.}, 34\penalty0 (4):\penalty0 709--721, 2006.

\bibitem[Jones et~al.(2020)Jones, Sagawa, Koh, Kumar, and
  Liang]{jones2020selective}
Erik Jones, Shiori Sagawa, Pang~Wei Koh, Ananya Kumar, and Percy Liang.
\newblock Selective classification can magnify disparities across groups, 2020.

\bibitem[Karmarkar(1984)]{karmarkar1984new}
Narendra Karmarkar.
\newblock A new polynomial-time algorithm for linear programming.
\newblock In \emph{Proceedings of the sixteenth annual ACM symposium on Theory
  of computing}, pages 302--311, 1984.

\bibitem[Khachiyan(1979)]{khachiyan1979polynomial}
Leonid~Genrikhovich Khachiyan.
\newblock A polynomial algorithm in linear programming.
\newblock In \emph{Doklady Akademii Nauk}, volume 244, pages 1093--1096.
  Russian Academy of Sciences, 1979.

\bibitem[Kohavi()]{adult}
Ron Kohavi.
\newblock Scaling up the accuracy of naive-bayes classifiers: A decision-tree
  hybrid.

\bibitem[Korostelev and Tsybakov(2012)]{korostelev2012minimax}
Aleksandr~Petrovich Korostelev and Alexandre~B Tsybakov.
\newblock \emph{Minimax theory of image reconstruction}, volume~82.
\newblock Springer Science \& Business Media, 2012.

\bibitem[Lee and Sidford(2015)]{lee2015efficient}
Yin~Tat Lee and Aaron Sidford.
\newblock Efficient inverse maintenance and faster algorithms for linear
  programming.
\newblock In \emph{2015 IEEE 56th Annual Symposium on Foundations of Computer
  Science}, pages 230--249. IEEE, 2015.

\bibitem[Lei(2014)]{Lei14}
J.~Lei.
\newblock Classification with confidence.
\newblock \emph{Biometrika}, 101\penalty0 (4):\penalty0 755--769, 2014.

\bibitem[Madras et~al.(2018)Madras, Pitassi, and Zemel]{madras2018defer}
David Madras, Toni Pitassi, and Richard Zemel.
\newblock Predict responsibly: Improving fairness and accuracy by learning to
  defer.
\newblock In S.~Bengio, H.~Wallach, H.~Larochelle, K.~Grauman, N.~Cesa-Bianchi,
  and R.~Garnett, editors, \emph{Advances in Neural Information Processing
  Systems}, volume~31. Curran Associates, Inc., 2018.
\newblock URL
  \url{https://proceedings.neurips.cc/paper/2018/file/09d37c08f7b129e96277388757530c72-Paper.pdf}.

\bibitem[Massart(1990)]{massart1990tight}
Pascal Massart.
\newblock The tight constant in the dvoretzky-kiefer-wolfowitz inequality.
\newblock \emph{The annals of Probability}, pages 1269--1283, 1990.

\bibitem[Matousek and G{\"a}rtner(2007)]{matousek2007understanding}
Jiri Matousek and Bernd G{\"a}rtner.
\newblock \emph{Understanding and using linear programming}.
\newblock Springer Science \& Business Media, 2007.

\bibitem[Menon and Williamson(2018)]{pmlr-v81-menon18a}
Aditya~Krishna Menon and Robert~C Williamson.
\newblock The cost of fairness in binary classification.
\newblock In Sorelle~A. Friedler and Christo Wilson, editors, \emph{Proceedings
  of the 1st Conference on Fairness, Accountability and Transparency},
  volume~81 of \emph{Proceedings of Machine Learning Research}, pages 107--118,
  New York, NY, USA, 23--24 Feb 2018. PMLR.
\newblock URL \url{http://proceedings.mlr.press/v81/menon18a.html}.

\bibitem[Mourtada et~al.(2020)Mourtada, Ga{\"\i}ffas, Scornet,
  et~al.]{mourtada2020minimax}
Jaouad Mourtada, St{\'e}phane Ga{\"\i}ffas, Erwan Scornet, et~al.
\newblock Minimax optimal rates for mondrian trees and forests.
\newblock \emph{Annals of Statistics}, 48\penalty0 (4):\penalty0 2253--2276,
  2020.

\bibitem[Nadeem et~al.(2009)Nadeem, Zucker, and Hanczar]{nadeem2009accuracy}
Malik Sajjad~Ahmed Nadeem, Jean-Daniel Zucker, and Blaise Hanczar.
\newblock Accuracy-rejection curves (arcs) for comparing classification methods
  with a reject option.
\newblock In \emph{Machine Learning in Systems Biology}, pages 65--81, 2009.

\bibitem[Neu and Zhivotovskiy(2020)]{neu2020fast}
Gergely Neu and Nikita Zhivotovskiy.
\newblock Fast rates for online prediction with abstention.
\newblock In \emph{Conference on Learning Theory}, pages 3030--3048. PMLR,
  2020.

\bibitem[Pedregosa et~al.(2011)Pedregosa, Varoquaux, Gramfort, Michel, Thirion,
  Grisel, Blondel, Prettenhofer, Weiss, Dubourg, Vanderplas, Passos,
  Cournapeau, Brucher, Perrot, and Duchesnay]{sklearn}
F.~Pedregosa, G.~Varoquaux, A.~Gramfort, V.~Michel, B.~Thirion, O.~Grisel,
  M.~Blondel, P.~Prettenhofer, R.~Weiss, V.~Dubourg, J.~Vanderplas, A.~Passos,
  D.~Cournapeau, M.~Brucher, M.~Perrot, and E.~Duchesnay.
\newblock Scikit-learn: Machine learning in {P}ython.
\newblock \emph{Journal of Machine Learning Research}, 12:\penalty0 2825--2830,
  2011.

\bibitem[Puchkin and Zhivotovskiy(2021)]{puchkin2021exponential}
Nikita Puchkin and Nikita Zhivotovskiy.
\newblock Exponential savings in agnostic active learning through abstention.
\newblock \emph{arXiv preprint arXiv:2102.00451}, 2021.

\bibitem[Stone(1977)]{stone1977consistent}
Charles~J Stone.
\newblock Consistent nonparametric regression.
\newblock \emph{The annals of statistics}, pages 595--620, 1977.

\bibitem[Virtanen et~al.(2020)Virtanen, Gommers, Oliphant, Haberland, Reddy,
  Cournapeau, Burovski, Peterson, {Weckesser}, {Bright}, {van der Walt},
  {Brett}, {Wilson}, {Jarrod Millman}, {Mayorov}, {Nelson}, {Jones}, {Kern},
  {Larson}, {Carey}, {Polat}, {Feng}, {Moore}, {Vand erPlas}, {Laxalde},
  {Perktold}, {Cimrman}, {Henriksen}, {Quintero}, {Harris}, {Archibald},
  {Ribeiro}, {Pedregosa}, {van Mulbregt}, and {Contributors}]{scipy}
Pauli Virtanen, Ralf Gommers, Travis~E. Oliphant, Matt Haberland, Tyler Reddy,
  David Cournapeau, Evgeni Burovski, Pearu Peterson, Warren {Weckesser},
  Jonathan {Bright}, Stefan~J. {van der Walt}, Matthew {Brett}, Joshua
  {Wilson}, K.~{Jarrod Millman}, Nikolay {Mayorov}, Andrew R.~J. {Nelson}, Eric
  {Jones}, Robert {Kern}, Eric {Larson}, CJ~{Carey}, lhan {Polat}, Yu~{Feng},
  Eric~W. {Moore}, Jake {Vand erPlas}, Denis {Laxalde}, Josef {Perktold},
  Robert {Cimrman}, Ian {Henriksen}, E.~A. {Quintero}, Charles~R {Harris},
  Anne~M. {Archibald}, Antonio~H. {Ribeiro}, Fabian {Pedregosa}, Paul {van
  Mulbregt}, and SciPy 1.~0 {Contributors}.
\newblock Scipy 1.0: Fundamental algorithms for scientific computing in python.
\newblock \emph{Nature Methods}, 2020.

\bibitem[Vovk et~al.(2005)Vovk, Gammerman, and Shafer]{vovk2005algorithmic}
Vladimir Vovk, Alex Gammerman, and Glenn Shafer.
\newblock \emph{Algorithmic learning in a random world}.
\newblock Springer Science \& Business Media, 2005.

\bibitem[Wegkamp and Yuan(2011)]{Wegkamp_Yuan11}
M.~Wegkamp and M.~Yuan.
\newblock Support vector machines with a reject option.
\newblock \emph{Bernoulli}, 17\penalty0 (4):\penalty0 1368--1385, 2011.

\bibitem[Yuan and Wegkamp(2010)]{YW10}
M.~Yuan and M.~Wegkamp.
\newblock Classification methods with reject option based on convex risk
  minimization.
\newblock \emph{J. Mach. Learn. Res.}, 11:\penalty0 111--130, 2010.

\bibitem[Zafar et~al.(2017)Zafar, Valera, Gomez~Rodriguez, and
  Gummadi]{zafar2017fairness}
Muhammad~Bilal Zafar, Isabel Valera, Manuel Gomez~Rodriguez, and Krishna~P
  Gummadi.
\newblock Fairness beyond disparate treatment \& disparate impact: Learning
  classification without disparate mistreatment.
\newblock In \emph{Proceedings of the 26th international conference on world
  wide web}, pages 1171--1180, 2017.

\end{thebibliography}

\appendix

\section*{Structure of Appendix}

Appendix~\ref{app:optimal_classif} is devoted to the proof of Theorem~\ref{thm:optimal}. Appendix~\ref{app:aux_results} reminds and proves auxiliary results that are used in the rest of the supplementary material.
The proof of Proposition~\ref{prop:control_reject_and_dp} is split across Appendix~\ref{app:control_reject} for the control of the reject rate and Appendix~\ref{app:control_dp} for the control of the demographic parity violation. Appendix~\ref{app:control_risk} contains the proof of  Proposition~\ref{prop:main_control_risk}. Finally, Appendix~\ref{app:optimization} provides a constructive proof of Proposition~\ref{prop:optimization}.

\section{Derivation of the optimal prediction}
\label{app:optimal_classif}

Recall that we are interested in solving the following problem
\begin{align*}
    \min_{g: \bbR^d \times [K] \to \{0, 1, r\}}&\,\Prob(g(\bsX, S) \neq Y \mid g(\bsX, S)) \neq r)\\
    &\text{s.t.}
    \begin{cases}
       \Prob(g(\bsX, S)) \neq r \mid S = s) = \alpha_s,\quad \forall s \in [K]\\
       \Prob(g(\bsX, S) = 1 \mid S = s, g(\bsX, S) \neq r) = \Prob(g(\bsX, S) = 1 \mid g(\bsX, S)) \neq r), \quad \forall s \in [K]
    \end{cases}\enspace.
\end{align*}

% Assume that there exists a classifier which achieves the minimum of the problem with saturated constraints. Then it is feasible for the optimisation problem with inequality. 

\subsection{Simplifications}

First we simplify the quantities involved in the above problem. Set $\baralpha = \sum_{s=1}^K p_s \alpha_s$ and recall that we defined the random variable $\eta(\bsX, S) = \mathbb{E}[Y\mid \bsX, S]$. Observe that for any $g$ such that $\Prob(g(\bsX, S) \neq r \mid s = s) = \alpha_s$, we can write
\begin{align*}
    &\Prob(g(\bsX, S) \neq Y \mid g(\bsX, S) \neq r) = \sum_{s=1}^K \frac{p_s}{\baralpha} \Exp_{\bsX | S=s}\left[(1-\eta(\bsX, S)\mathds{1}_{g(\bsX, S) = 1} + \eta(\bsX, S) \mathds{1}_{g(\bsX, S) = 0}  \right]\enspace,\\
    &\Prob(g(\bsX, S) \neq r | S=s) = \Exp_{\bsX | S=s}\left[\mathds{1}_{g(\bsX, S) = 1} + \mathds{1}_{g(\bsX, S) = 0} \right]\enspace,\\
    &\Prob(g(\bsX, S)=1 \mid g(\bsX, S)\neq r) = \sum_{s=1}^K \frac{p_s}{\baralpha}  \Exp_{\bsX | S=s}\left[ \mathds{1}_{g(\bsX, S) = 1} \right]\enspace,\\
    &\Prob(g(\bsX, S)=1 | S=s, g(\bsX, S)\neq r) = \frac{1}{\alpha_s} \Exp_{\bsX | S=s}\left[ \mathds{1}_{g(\bsX, S) = 1} \right]\enspace.
\end{align*}

\subsection{Lagrangian}

We introduce the Lagrangian $\class{L}$ of the constrained minimization problem as
\begin{align*}
    \lag(g, \bslambda, \bsgamma) &= \Prob(g(\bsX, S) \neq Y \mid g(\bsX, S) \neq r) + \sum_{s=1}^K \lambda_s ( \Prob(g(\bsX, S) \neq r | S=s) - \alpha_s) \\ &+ \sum_{s=1}^K\gamma_s(\Prob(g(\bsX, S)=1 | S=s, g(\bsX, S)\neq r) -  \Prob(g(\bsX, S)=1 | g(\bsX, S)\neq r))\enspace.
\end{align*}
Using the simpler expressions we derived earlier, the Lagrangian can be expressed as
\begin{align*}
    \lag(g, \bslambda, \bsgamma) &=  \sum_{s=1}^K \frac{p_s}{\baralpha} \Exp_{\bsX | S=s}\left[(1-\eta(\bsX, S))\mathds{1}_{g(\bsX, S) = 1} + \eta(\bsX, S) \mathds{1}_{g(\bsX, S) = 0}  \right]\\
    &+ \sum_{s=1}^K \lambda_s \left\{ \Exp_{\bsX | S=s}\left[\mathds{1}_{g(\bsX, S) = 1} + \mathds{1}_{g(\bsX, S) = 0} \right] - \alpha_s  \right\}\\
    &+ \sum_{s=1}^K \frac{\gamma_s}{\alpha_s} \Exp_{\bsX | S=s}\left[\mathds{1}_{g(\bsX, S) = 1}\right] - \left(\sum_{s'=1}^K \gamma_{s'}\right) \left(\sum_{s=1}^K \frac{p_s}{\baralpha} \Exp_{\bsX | S=s}\left[\mathds{1}_{g(\bsX, S) = 1}\right]\right)\enspace.
\end{align*}
After straightforward algebraic manipulations, the Lagrangian can be simplified to
\begin{align*}
    \lag(g, \bslambda, \bsgamma) = \sum_{s=1}^K \Exp_{\bsX | S=s}\left[H_{(\bsX, s)}(g, \bslambda, \bsgamma)  \right] - \sum_{s=1}^K \lambda_s \alpha_s\enspace,
\end{align*}
where,  setting $\bar\gamma \eqdef \sum_{s=1}^K \gamma_s$, we defined the function 
\begin{align*}
    H_{(\bsx,s)}(g, \bslambda, \bsgamma) = 
    \begin{cases}
       0, &\text{ if } g(\bsx,s)=r\\
       \frac{p_s}{\baralpha} \eta(\bsx, s) + \lambda_s, &\text{ if } g(\bsx,s)=0\\
       \frac{p_s}{\baralpha}(1-\eta(\bsx,s) - \bar{\gamma}) + \lambda_s + \frac{\gamma_s}{\alpha_s}, &\text{ if } g(\bsx,s)=1
    \end{cases}\enspace.
\end{align*}

Using this Langrangian, our initial problem can be expressed as
\begin{align*}
    \min_{g}\max_{(\bslambda, \bsgamma) \in \bbR^K \times \bbR^K}\class{L}(g, \bslambda, \bsgamma)\enspace.
\end{align*}
Weak duality then implies that
\begin{align*}
   \min_{g}\max_{(\bslambda, \bsgamma) \in \bbR^K \times \bbR^K}\class{L}(g, \bslambda, \bsgamma) \geq \max_{(\bslambda, \bsgamma) \in \bbR^K \times \bbR^K}\min_{g}\class{L}(g, \bslambda, \bsgamma)\enspace.
\end{align*}
\paragraph{Dual problem.} We first solve the inner minimization  problem of the $\max\min$ formulation for any $(\bslambda, \bsgamma)$,
\begin{align}
    \label{eq:weak_dual}
    \min_{g}\class{L}(g, \bslambda, \bsgamma)\enspace,
\end{align}
and then show that strong duality holds under our assumptions.
The problem in Eq.~\eqref{eq:weak_dual} can be solved point-wise, that is, it is sufficient to solve
\begin{align*}
    \min_{z \in \{0, 1, r\}} H_{(\bsx,s)}(z, \bslambda, \bsgamma)\enspace,
\end{align*}
for any $s \in [K]$ and any $\bsx \in \mathbb{R}^d$.
One can easily check that, for any given couple $(\bsx,s)$, the minimizer of the above expression is given by
\begin{align*}
    \tilde{g}(\bsx,s) = \begin{cases}
       r, &\text{ if } 0 \leq \min(\frac{p_s}{\baralpha}\eta(\bsx,s) + \lambda_s, \frac{p_s}{\baralpha}(1-\eta(\bsx, s) - \bar{\gamma}) + \lambda_s + \frac{\gamma_s}{\alpha_s})\\
       \mathds{1}(\frac{p_s}{\baralpha}(1-2\eta(\bsx,s) - \bar{\gamma}) + \frac{\gamma_s}{\alpha_s} < 0), &\text{ otherwise}
    \end{cases}\enspace.
\end{align*}
Note that, using the fact that $2\min(a,b) = a+b - \lvert a - b \rvert$, the previous expression simplifies to
%the reject condition can be expressed as
%\begin{align*}
%    \left\lvert \frac{p_s}{2\bar\alpha}(1-2\eta(\bsx, s) - \bar\gamma) + \frac{\gamma_s}{2\alpha_s}\right\rvert \leq \lambda_s + \frac{p_s}{2\bar\alpha}(1-\bar\gamma) + \frac{\gamma_s}{2\alpha_s}\enspace,
%\end{align*}
%and express the point-wise minimizer $\tilde{g}$ as 
\begin{align*}
    \tilde{g}(\bsx,s) = \begin{cases}
       r, &\text{ if } \left\lvert \frac{p_s}{2\baralpha}(1-2\eta(\bsx,s) - \bar\gamma) + \frac{\gamma_s}{2\alpha_s}\right\rvert \leq \lambda_s + \frac{p_s}{2\baralpha}(1-\bar\gamma) + \frac{\gamma_s}{2\alpha_s} \\
       \mathds{1}(\frac{p_s}{\baralpha}(1-2\eta(\bsx,s) - \bar{\gamma}) + \frac{\gamma_s}{\alpha_s} < 0), &\text{ otherwise}
    \end{cases}
    \enspace.
\end{align*}
Plugging back the expression for $\tilde{g}$ in the function $H$ we get
\begin{align*}
    H_{(\bsx,s)}(\tilde{g}, \bslambda, \bsgamma) = \left( \frac{p_s}{2\baralpha}(1-\bar\gamma)  + \lambda_s + \frac{\gamma_s}{2\alpha_s} - \left\lvert  \frac{p_s}{2\baralpha}(1-2\eta(\bsx, s)-\bar\gamma) + \frac{\gamma_s}{2\alpha_s} \right\rvert \right)_-\enspace,
\end{align*}
where $(a)_-\eqdef \min(a, 0)$.
Substituting this expression into the Lagrangian, we can derive the dual optimization problem as
\begin{align*}
    \max_{(\bslambda, \bsgamma)} \left\{ \sum_{s=1}^K \Exp_{\bsX|S=s}\left(\frac{p_s}{2\bar\alpha}(1-\bar\gamma) + \lambda_s + \frac{\gamma_s}{2\alpha_s} - \left\lvert  \frac{p_s}{2\bar\alpha}(1-2\eta(\bsx, s)-\bar\gamma) + \frac{\gamma_s}{2\alpha_s} \right\rvert \right)_- - \sum_{s=1}^K \lambda_s \alpha_s\right\}\enspace.
\end{align*}

Writing this optimization problem as a minimization problem in vector form, the optimal Lagrange multipliers $(\bslambda^*, \bsgamma^*)$ are a solution of
\begin{align}
    \label{eq:min_true}
    \min_{(\bslambda, \bsgamma)}\left\{\sum_{s=1}^K  \Exp_{\bsX|S=s}\left(\left\lvert  \frac{p_s}{2\bar\alpha}(1-2\eta(\bsX,S)-\scalar{\bsgamma}{\bsone}) + \frac{\scalar{\bsgamma}{\bse_s}}{2\alpha_s} \right\rvert  - \frac{p_s}{2\bar\alpha}(1-\scalar{\bsgamma}{\bsone}) - \scalar{\bslambda}{\bse_s} - \frac{\scalar{\bsgamma}{\bse_s}}{2\alpha_s}\right)_+ +  \scalar{\bslambda}{\bsalpha}\right\}\enspace,
\end{align}
where for any real number $y$, $(y)_+ \eqdef \max(x, 0)$ and for any $s\in [K]$, $\bse_s$ is the $s$-basis vector of $\bbR^K$.

Let us check that the objective function of the above optimization problem is jointly convex in $(\bslambda, \bsgamma)$. First of all, the mappings
\begin{align*}
    (\bslambda, \bsgamma) &\mapsto \frac{p_s}{2\bar\alpha}(1-2\eta(\bsx, s)-\scalar{\bsgamma}{\bsone}) + \frac{\scalar{\bsgamma}{\bse_s}}{2\alpha_s}\enspace,\\
    (\bslambda, \bsgamma) &\mapsto - \frac{p_s}{2\bar\alpha}(1-\scalar{\bsgamma}{\bsone}) - \scalar{\bslambda}{\bse_s} - \frac{\scalar{\bsgamma}{\bse_s}}{2\alpha_s}\enspace,
\end{align*}
are clearly affine mappings. Since taking the absolute value of an affine mapping gives a convex mapping (as a maximum between two affine, hence convex, functions), the sum of the absolute value of the first mapping with the second mapping is a convex function.
Furthermore, the composition with the positive part function preserves convexity since this operation can be expressed as taking the maximum between two convex functions. 
Finally, by linearity of expectation, we notice that the objective is expressed as a finite sum of convex functions and conclude that it is jointly convex in $(\bslambda, \bsgamma)$.

%First-order optimality conditions on the Lagrange multipliers are sufficient and necessary conditions.

The objective function is not smooth everywhere due to the presence of absolute values and positive part functions. However, thanks to Assumption~\ref{as:continuous_eta}, the set of points at which the objective function is not differentiable has zero Lebesgue measure and can thus be ignored.
The First-Order Optimality Conditions (FOOC) on the optimal Lagrange multipliers $(\bslambda^*, \bsgamma^*)$ then read as
\begin{equation}
\label{eq:KKT_lambda_star}
\tag{\textbf{FOOC}}
    \begin{aligned}
    \alpha_s &= \Prob_{\bsX\mid S=s}\left( \left\lvert  \frac{p_s}{2\bar\alpha}(1-2\eta(\bsX,s)-\scalar{\bsgamma^*}{\bsone}) + \frac{\scalar{\bsgamma^*}{\bse_s}}{2\alpha_s} \right\rvert \geq \frac{p_s}{2\bar\alpha}(1-\scalar{\bsgamma^*}{\bsone}) + \scalar{\bslambda^*}{\bse_s} + \frac{\scalar{\bsgamma^*}{\bse_s}}{2\alpha_s}  \right), \forall s \\
    0 &= \sum_{s=1}^K \left(\frac{p_s}{\bar\alpha}\bsone - \frac{\bse_s}{\alpha_s}\right) \Prob_{\bsX \mid S=s}\left(\min\left(2\eta(\bsX, S), \eta(\bsX, S) - \frac{\baralpha \lambda_s}{p_s}\right) \geq \frac{\baralpha \gamma_s}{p_s \alpha_s} + 1 - \bar\gamma \right)\enspace.
    %\sum_{s=1}^K \frac{p_s}{\bar\alpha}\bsone \Prob_{\bsX \mid S=s}\left(\min\left(2\eta(\bsX, S), \eta(\bsX, S) - \frac{\alpha \lambda_s}{p_s}\right) \geq \frac{\alpha \gamma_s}{p_s \alpha_s} + 1 - \bar\gamma \right) &= \sum_{s=1}^K  \frac{\bse_s}{\alpha_s} \Prob_{\bsX \mid S=s}\left(\min\left(2\eta(\bsX, S), \eta(\bsX, S) - \frac{\alpha \lambda_s}{p_s}\right) \geq \frac{\alpha \gamma_s}{p_s \alpha_s} + 1 - \bar\gamma \right)
\end{aligned}
\end{equation}

\paragraph{Feasibility of $\tilde{g}$ for the primal problem}
Let us check that $\tilde{g}$ is feasible for the primal problem. Using the definition of $\tilde{g}$ and the first-order optimal condition on $\bslambda^*$ we obtain, for any $s\in [K]$,
\begin{align*}
    \Prob(\tilde{g}(\bsX, S) \neq r \mid S=s) &= \Prob_{\bsX \mid S=s}\left( \left\lvert  \frac{p_s}{2\bar\alpha}(1-2\eta(\bsX,s)-\scalar{\bsgamma}{\bsone}) + \frac{\scalar{\bsgamma}{\bse_s}}{2\alpha_s} \right\rvert \geq \frac{p_s}{2\bar\alpha}(1-\scalar{\bsgamma}{\bsone}) + \scalar{\bslambda}{\bse_s} + \frac{\scalar{\bsgamma}{\bse_s}}{2\alpha_s} \right)\\
    &=\alpha_s\enspace,
\end{align*}
which proves that $\tilde{g}$ satisfies the first set of constraints. For the Demographic Parity constraints, one easily obtains
\begin{align*}
    \mathbb{P}_{\bsX\mid S=s}(\tilde{g}(\bsX, S)=1 \mid \tilde{g}(\bsX, S) \neq r) &= \frac{1}{\alpha_s} \Prob_{\bsX \mid S=s}(\tilde{g}(\bsX, S)=1)\\
    &= \frac{1}{\alpha_s} \Prob_{\bsX \mid S=s}\left(\min\left(2\eta(\bsX, S), \eta(\bsX, S) - \frac{\baralpha \lambda_s}{p_s}\right) \geq \frac{\alpha \gamma_s}{p_s \alpha_s} + 1 - \bar\gamma \right)\enspace,\\
    \Prob_{(\bsX, S)}(\tilde{g}(\bsX, S)=1 \mid \tilde{g}(\bsX, S) \neq r) &= \sum_{s=1}^K \frac{p_s}{\bar\alpha} \Prob_{\bsX \mid S=s}\left(\min\left(2\eta(\bsX, S), \eta(\bsX, S) - \frac{\baralpha \lambda_s}{p_s}\right) \geq \frac{\baralpha \gamma_s}{p_s \alpha_s} + 1 - \bar\gamma \right)\enspace.
\end{align*}
The first-order optimality condition for $\bsgamma^*$ guarantees that for, any $s \in [K]$,
\begin{align*}
\mathbb{P}_{\bsX\mid S=s}(\tilde{g}(\bsX, S)=1 \mid \tilde{g}(\bsX, S) \neq r) =  \Prob_{(\bsX, S)}(\tilde{g}(\bsX, S)=1 \mid \tilde{g}(\bsX, S) \neq r)\enspace, 
\end{align*}
\textit{i.e.} it guarantees that the classifier $\tilde{g}$ satisfies the Demographic Parity constraint.

We conclude that the classifier $\tilde{g}$ is feasible for the primal problem and thus that strong duality holds.

\section{Auxiliary results}
\label{app:aux_results}
We will need a tight control on the sup-norm of the difference between CDF and empirical CDF. The next result is~\citep[Corollary 1]{massart1990tight}.
\begin{theorem}
\label{thm:DKW}
Let $\bsZ, \bsZ_1, \ldots, \bsZ_n$ be $n + 1$ \iid continuous random variable sampled from $\Prob$ on $\class{Z}$, then for any $\delta > 0$, with probability at least $1 - \delta$,
\begin{align*}
    \sup_{z \in \bbR}\abs{\frac{1}{n}\sum_{i = 1}^n \ind{Z_i \leq z} - \Prob(Z \leq z)} \leq \sqrt{\frac{\log(2/\delta)}{2n}}\enspace.
\end{align*}
\end{theorem}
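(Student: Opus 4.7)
The plan is to reduce the theorem to the classical Dvoretzky--Kiefer--Wolfowitz (DKW) inequality with Massart's sharp constant via the probability integral transform, and then to either invoke Massart's result directly or outline a proof route for it.

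First, since the $\bsZ_i$ are continuous, I would use the probability integral transform. Setting $F(z) = \Prob(Z \leq z)$ and $U_i = F(Z_i)$, the random variables $U_1, \ldots, U_n$ are i.i.d.\ uniform on $[0,1]$. Because $F$ is non-decreasing and continuous, the range of $z \in \bbR$ maps onto $u \in [0,1]$ and
\begin{align*}
    \sup_{z \in \bbR}\abs{\tfrac{1}{n}\sum_{i=1}^n \ind{Z_i \leq z} - F(z)} = \sup_{u \in [0,1]}\abs{\tfrac{1}{n}\sum_{i=1}^n \ind{U_i \leq u} - u}\enspace.
\end{align*}
Hence it suffices to prove the tail bound for the uniform empirical process $G_n(u) - u \eqdef \tfrac{1}{n}\sum_{i=1}^n \ind{U_i \leq u} - u$.

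Second, I would split the two-sided supremum as a union bound over the one-sided suprema $\sup_u (G_n(u)-u)$ and $\sup_u (u - G_n(u))$, which reduces the problem to a sharp one-sided inequality of the form $\Prob(\sup_u (G_n(u)-u) > \varepsilon) \leq e^{-2n\varepsilon^2}$. Combining the two with a union bound yields
\begin{align*}
    \Prob\!\left(\sup_{u \in [0,1]}\abs{G_n(u) - u} > \varepsilon\right) \leq 2\,e^{-2n\varepsilon^2}\enspace,
\end{align*}
and setting the right-hand side equal to $\delta$ and solving for $\varepsilon$ yields $\varepsilon = \sqrt{\log(2/\delta)/(2n)}$, which is exactly the bound claimed.

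The main obstacle is the sharp one-sided bound with constant $2$ in the exponent; this is the content of Massart's refinement of the original DKW inequality (which had a large unspecified constant in place of $1$ on the prefactor). A clean route I would follow is (i) express $\sup_u (G_n(u)-u)$ in terms of the order statistics $U_{(k)}$ via $\sup_u (G_n(u)-u) = \max_{k \in [n]} \left(k/n - U_{(k)}\right)$; (ii) use the fact that $(U_{(k)})$ can be represented using i.i.d.\ exponentials, producing a reverse-time martingale structure; and (iii) apply an exponential Chebyshev / Azuma-type argument carefully tuned so that the resulting exponent is $2n\varepsilon^2$. A less sharp proof via VC theory (the class of half-lines $\{(-\infty, z]\}$ has VC dimension $1$) is easy but yields a suboptimal constant, so for the stated tight form one would ultimately cite Massart's argument. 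Since the paper invokes \citep[Corollary~1]{massart1990tight} verbatim, citing that reference is legitimate after the reduction to the uniform case.
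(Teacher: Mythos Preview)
The paper does not prove this theorem at all: it simply states it as an auxiliary result and cites \citep[Corollary~1]{massart1990tight} directly, with no further argument. Your proposal therefore goes well beyond what the paper does. Your reduction via the probability integral transform to the uniform empirical process, followed by a union bound over the two one-sided suprema, is a correct and standard route to the two-sided DKW inequality; as you note yourself, the sharp constant in the exponent is precisely Massart's contribution, so ultimately you also fall back on that reference. In short, your approach is sound and strictly more detailed than the paper's treatment, which amounts to a bare citation.
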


\section{Control of reject rate}
\label{app:control_reject}

\begin{proposition}
\label{prop:reject}
For all $\delta \in (0, 1)$, the proposed algorithm satisfies with probability at least $1 - \delta$ that
%\begin{align*}
%    \left\lvert \Prob(\hat{g}(\bsX, S) \neq r\mid S = s) - \alpha_s \right\rvert \leq 62\sqrt{\frac{2}{n_s}} + \sqrt\frac{\log({K}/{\delta})}{2n_s} + \frac{2}{n_s},\quad\forall s \in [K]\enspace.
%\end{align*}

\begin{align*}
    \left\lvert \Prob(\hat{g}(\bsX, S) \neq r\mid S = s) - \alpha_s \right\rvert \leq \sqrt{\frac{2\log(2K / \delta)}{n_s}} + \frac{2}{n_s},\quad\forall s \in [K]\enspace.
\end{align*}

\end{proposition}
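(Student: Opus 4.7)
The plan is to combine an empirical first-order optimality argument for the minimization in Eq.~\eqref{eq:emp_min} with the DKW inequality (Theorem~\ref{thm:DKW}) applied to the continuous random variable $\hat\eta(\bsX, s) \mid S = s$. First I would observe that by the definition of $\hat g$ in Eq.~\eqref{eq:classif_with_abstention}, the event $\{\hat g(\bsX, S) \neq r\}$ coincides with $\{\hat G(\bsX, S, \hbslambda, \hbsgamma) > 0\}$. Expanding the definition of $\hat G$, the condition $\hat G > 0$ rewrites as
\begin{align*}
   \hat\eta(\bsX, s) \notin [a_s(\hbslambda, \hbsgamma),\, b_s(\hbslambda, \hbsgamma)],
\end{align*}
where the endpoints are explicit affine functions of $(\hbslambda, \hbsgamma)$ read off from the absolute value in $\hat G$. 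Crucially, the randomization step of Algorithm~\ref{alg:1} guarantees that, conditionally on $S=s$, the random variable $\hat\eta(\bsX, s)$ admits a continuous distribution, so that DKW will apply and sample values will be almost surely distinct.

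Next I would exploit the subdifferential optimality of the convex objective of Eq.~\eqref{eq:emp_min} at $(\hbslambda, \hbsgamma)$. Differentiating with respect to $\lambda_s$, the linear term contributes $\alpha_s$ while $(\hat G)_+$ contributes $-1$ when $\hat G>0$, $0$ when $\hat G<0$ and a value in $[-1, 0]$ at the kink $\hat G=0$. Writing $0$ in the subdifferential yields the sandwich
\begin{align*}
   \hat\Prob_{\bsX\mid S=s}(\hat G > 0) \;\leq\; \alpha_s \;\leq\; \hat\Prob_{\bsX\mid S=s}(\hat G \geq 0),
\end{align*}
so that $|\hat\Prob_{\bsX\mid S=s}(\hat g(\bsX, s)\neq r) - \alpha_s| \leq \hat\Prob_{\bsX\mid S=s}(\hat G = 0)$. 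The set $\{\hat G = 0\}$ corresponds to $\hat\eta(\bsX, s)$ equalling one of the two threshold values $a_s(\hbslambda, \hbsgamma)$ or $b_s(\hbslambda, \hbsgamma)$; thanks to the continuous randomization no two perturbed samples coincide, so at most two samples can land on these thresholds, giving the slack $2/n_s$ almost surely.

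Finally, I would invoke Theorem~\ref{thm:DKW} for the empirical CDF of $\hat\eta(\bsX, s)\mid S=s$ at confidence $1 - \delta/K$ per group, and apply a union bound over $s \in [K]$. For every interval the probability of the complement is a difference of two CDF values, so the DKW bound gets doubled, yielding uniformly over intervals
\begin{align*}
  \bigl|\hat\Prob_{\bsX\mid S=s}(\hat\eta(\bsX, s) \notin [a,b]) - \Prob_{\bsX\mid S=s}(\hat\eta(\bsX, s) \notin [a,b])\bigr| \leq \sqrt{\frac{2\log(2K/\delta)}{n_s}}.
\end{align*}
Applying this with $(a,b) = (a_s(\hbslambda, \hbsgamma), b_s(\hbslambda, \hbsgamma))$ (which is allowed precisely because DKW is uniform in the endpoints) and combining with the previous display by the triangle inequality gives the claim.

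The main obstacle is the careful bookkeeping at the kink of $(\cdot)_+$: one has to argue that under the noise injection of Algorithm~\ref{alg:1}, the set of sample points $X_i$ with $\hat G(X_i, s, \hbslambda, \hbsgamma)=0$ has cardinality at most two almost surely, even though $(\hbslambda, \hbsgamma)$ themselves are data-dependent and the LP solution typically has active constraints at sample points. Once this $2/n_s$ slack is justified, the remaining ingredients (DKW plus union bound over $K$ groups) are standard.
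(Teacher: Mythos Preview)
Your proposal is correct and follows essentially the same approach as the paper: split via the triangle inequality into a DKW-controlled term over intervals (the paper's $\texttt{T}_1$) and a $2/n_s$ slack coming from the first-order optimality condition in $\lambda_s$ plus a pigeonhole argument (the paper's $\texttt{T}_2$ and Lemma~\ref{lem:pigh}). Your closing worry about the data-dependence of $(\hbslambda,\hbsgamma)$ is unnecessary: on the almost-sure event that the perturbed values $\hat\eta(\bsX_i,s)$ are pairwise distinct, the inequality $\hat\Prob_{\bsX\mid S=s}(\hat G(\cdot,s,\bslambda,\bsgamma)=0)\leq 2/n_s$ holds \emph{simultaneously for every} $(\bslambda,\bsgamma)$, hence in particular for the random minimizer.
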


The rest of this section is devoted to the proof of this result. In what follows, all the derivations should be understood conditionally on $\heta$. In simple words, the estimator $\heta$ is treated as fixed and the only randomness comes from the unlabeled data.
% Since we assume that we receive a frozen estimator $\hat\eta$, all the probabilistic statements are conditioned \wrt the labeled dataset.\\
According to the definition of our estimator,  
\begin{align*}
    \Prob_{\bsX | S = s}\left(\hat{g}(\bsX, s) \neq r\right) = \Prob_{\bsX \mid S=s}\left(\hat{G}(\bsX, s, \hbslambda, \hat{
    \bsgamma}) > 0 \right)\enspace.
\end{align*}

Using the triangle inequality we can upper bound $\lvert \CPs{\hat{G}(\bsX, s, \hbslambda, \hbsgamma) > 0} - \alpha_s \rvert$ by two terms
\begin{equation}
\label{eq:reject_0}
\begin{aligned}
     \underbrace{\left\lvert \CPs{\hat{G}(\bsX, s, \hbslambda, \hbsgamma) > 0} - \ECPs{\hat{G}(\bsX, s, \hbslambda, \hbsgamma) > 0} \right\rvert}_{\texttt{T}_1}
    + \underbrace{\left\lvert \ECPs{\hat{G}(\bsX, s, \hbslambda, \hbsgamma) > 0} - \alpha_s \right\rvert}_{\texttt{T}_2}\enspace,
\end{aligned}
\end{equation}
which are treated separately.
\paragraph{Control of $\texttt{T}_1$.}
The first term $\texttt{T}_1$ can be controlled using tools from empirical process theory.
One can directly observe that
\begin{equation}
    \label{eq:reject_1}
\begin{aligned}
    \texttt{T}_1
    &\leq
    \sup_{(\bslambda, \bsgamma) \in \bbR^K \times \bbR^K}
    \left\lvert \CPs{\hat{G}(\bsX, s, {\bslambda}, {\bsgamma}) > 0} - \ECPs{\hat{G}(\bsX, s, {\bslambda}, {\bsgamma}) > 0} \right\rvert\\
    &\leq
    \sup_{(a, b) \in \bbR\times\bbR} 
    \left\lvert \CPs{\abs{\frac{p_s}{2\bar\alpha}\heta(\bsX, S) - a} - a + b > 0} - \ECPs{\abs{\frac{p_s}{2\bar\alpha}\heta(\bsX, S) - a} - a + b > 0} \right\rvert\\
    &\leq
    \sup_{(a, c) \in \bbR\times\bbR} 
    \left\lvert \CPs{\abs{\frac{p_s}{2\bar\alpha}\heta(\bsX, S) - a} > c} - \ECPs{\abs{\frac{p_s}{2\bar\alpha}\heta(\bsX, S) - a} > c} \right\rvert\\
    &\leq 2\sup_{a \in \bbR}\abs{\Prob_{\bsX|S=s}(\heta(\bsX, S) \leq a) - \hProb_{\bsX|S=s}(\heta(\bsX, S) \leq a)}\enspace,
\end{aligned}
\end{equation}
where we used the triangle inequality and the fact that $(\hat{\eta}(\bsX, S) \mid S=s)$ is a continuous random variable to obtain the last inequality.

By our assumption (see Remark~\ref{rem:cont}), the random variables $\heta(\bsX_i, s), (\heta(\bsX, S) \mid S = s)$ for $i \in \class{I}_s$ are \iid continuous conditionally on $\heta$. Thus, applying Theorem~\ref{thm:DKW} we conclude that with probability at least $1 - \delta$ it holds that
\begin{align}
    \label{eq:T1}
    \texttt{T}_1 \leq \sqrt{\frac{2\log(2 / \delta)}{n_s}}\enspace.
\end{align}

%Applying Lemma~\ref{lem:VC_high_proba} to Eq.~\eqref{eq:reject_1} and using the bound on the VC-dimension provided by Lemma~\ref{lem:vc1} we deduce that for all $x > 0$, $s \in [K]$ with probability at least $1 - \exp(-x)$ we have
%\begin{align}
%    \label{eq:T1}
%    \texttt{T}_1 \leq 62\sqrt{\frac{2}{n_s}} + \sqrt\frac{x}{2n_s}\enspace.
%\end{align}

\paragraph{Control of $\texttt{T}_2$.}
The control of the second term $\texttt{T}_2$ requires a more involved analysis.
Since $\hbslambda$ is a minimizer of~\eqref{eq:emp_min}, the first order optimality condition for convex non-smooth minimization problems state that for any $s \in [K]$, there exists $\rho_s \in [0, 1]$ such that
\begin{align*}
    \alpha_s = \ECPs{\hat{G}(\bsX, s, \hbslambda, \hbsgamma) > 0} + \rho_s \ECPs{\hat{G}(\bsX, s, \hbslambda, \hbsgamma) = 0} 
\end{align*}

Thus, the second term of Eq.~\eqref{eq:reject_0} can be bounded as
\begin{align}
\label{eq:empirical_reject_control}
    \left\lvert \ECPs{\hat{G}(\bsX, s, \hbslambda, \hbsgamma) > 0} - \alpha_s \right\rvert \leq \ECPs{\hat{G}(\bsX, s, \hbslambda, \hbsgamma) = 0} \enspace.
\end{align}
The control of $\ECPs{\hat{G}(\bsX, s, \hbslambda, \hbsgamma) = 0}$ is provided by the following result.
\begin{lemma}
\label{lem:pigh}
Assume that $(\hat\eta(\bsX, S) \mid S = s, \heta)$ is almost surely continuous, then for any $s \in [K]$, for any $(\bslambda, \bsgamma)$,
\begin{align*}
    \ECPs{\hat{G}(\bsX, s, \bslambda, \bsgamma) = 0}  \leq \frac{2}{n_s}\enspace,\qquad \text{a.s.}
\end{align*}
\end{lemma}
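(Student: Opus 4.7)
The plan is to reduce the event $\{\hat G(\bsX, s, \bslambda, \bsgamma) = 0\}$ to a condition that constrains $\hat\eta(\bsX, s)$ to lie in a set of at most two points, and then exploit the continuity of $(\hat\eta(\bsX, S)\mid S = s)$ to bound the empirical mass of any two-point set.

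First, I would observe that for fixed $s \in [K]$ and fixed $(\bslambda, \bsgamma)$, the equation $\hat G(\bsx, s, \bslambda, \bsgamma) = 0$ can be rewritten as
\[
\left\lvert -\frac{p_s}{\baralpha}\, \hat\eta(\bsx, s) + B_s(\bsgamma) \right\rvert = C_s(\bslambda, \bsgamma),
\]
where $B_s$ and $C_s$ collect the terms of $\hat G$ that do not depend on $\bsx$. Since $p_s > 0$ (otherwise $\class{I}_s$ is essentially empty and there is nothing to prove), the coefficient in front of $\hat\eta(\bsx,s)$ is non-zero, hence there exists a deterministic set $E_s(\bslambda, \bsgamma) \subseteq \bbR$ with $\lvert E_s(\bslambda, \bsgamma)\rvert \leq 2$ such that $\hat G(\bsx, s, \bslambda, \bsgamma) = 0$ is equivalent to $\hat\eta(\bsx, s) \in E_s(\bslambda, \bsgamma)$.

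Second, writing $E_s(\bslambda, \bsgamma) = \{t_1, t_2\}$ (allowing $t_1 = t_2$ or a singleton), we obtain the pointwise bound
\[
\ECPs{\hat G(\bsX, s, \bslambda, \bsgamma) = 0} \leq \frac{1}{n_s}\sum_{i \in \class{I}_s}\ind{\hat\eta(\bsX_i, s) = t_1} + \frac{1}{n_s}\sum_{i \in \class{I}_s}\ind{\hat\eta(\bsX_i, s) = t_2}.
\]
Conditionally on $\heta$, the random variables $\{\hat\eta(\bsX_i, s)\}_{i \in \class{I}_s}$ are \iid and, by the continuity assumption, their common law has no atom. For any two distinct indices $i, j \in \class{I}_s$, we therefore have $\Prob(\hat\eta(\bsX_i, s) = \hat\eta(\bsX_j, s) \mid \heta) = 0$, and a union bound over the finite set of ordered pairs gives that almost surely all these values are pairwise distinct. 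On that almost-sure event, for any fixed real $c$, the indicator sum $\sum_{i \in \class{I}_s}\ind{\hat\eta(\bsX_i, s) = c}$ is at most $1$, so each of the two sums above is bounded by $1$ and the lemma follows.

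The main subtlety to flag is the uniformity in $(\bslambda, \bsgamma)$. The lemma claims a bound that holds for all $(\bslambda, \bsgamma)$ on a single almost-sure event, yet in practice $(\hbslambda, \hbsgamma)$ is data-dependent. This is not an obstacle, because the almost-sure event produced in the previous paragraph -- pairwise distinctness of $\{\hat\eta(\bsX_i, s)\}_{i \in \class{I}_s}$ -- depends only on the unlabeled sample (and on the fixed $\heta$), not on $(\bslambda, \bsgamma)$. Hence the two-point argument applies with the same null set to any choice of $(\bslambda, \bsgamma)$, including the data-driven one. The continuity hypothesis itself is the only delicate point, and it is precisely what the randomization step of Algorithm~\ref{alg:1} enforces in practice.
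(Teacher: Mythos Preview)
Your proposal is correct and follows essentially the same approach as the paper: both reduce $\hat G=0$ to $\hat\eta(\bsx,s)$ lying in a set of at most two points, and then use continuity to rule out ties among the $\hat\eta(\bsX_i,s)$. The only cosmetic difference is that you argue directly (pairwise distinctness almost surely, hence each indicator sum is $\leq 1$), whereas the paper phrases the same step as a contradiction via pigeonhole (three indices hitting two values forces a tie); your explicit treatment of uniformity in $(\bslambda,\bsgamma)$ is a nice clarification that the paper leaves implicit.
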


\begin{proof}
We recall that by definition of $\hat\Prob_{\bsX|s}$ we have
\begin{align*}
    \ECPs{\hat{G}(\bsX, s, \bslambda, \bsgamma) = 0} = \frac{1}{n_s}\sum_{i=1}^{n_s} \mathds{1}(\hat{G}(\bsX_i, s, \bslambda, \bsgamma) = 0)\enspace.
\end{align*}
The proof goes by contradiction. Assume that the event
\begin{align*}
    \frac{1}{n_s}\sum_{i=1}^{n_s} \mathds{1}(\hat{G}(\bsX_i, s, \bslambda, \bsgamma) = 0) \geq \frac{3}{n_s}\enspace,
\end{align*}
happens with positive probability. Then, there exist three indexes $i_1, i_2, i_3$ such that
\begin{align*}
    \hat{G}(\bsX_{i_j}, s, \bslambda, \bsgamma) = 0\enspace, \quad j=1, 2, 3\enspace.
\end{align*}
However, $\hat{G}(\bsX, s, \bslambda, \bsgamma) = 0$ implies that either
\begin{align*}
    \frac{\hat{p}_s}{\bar\alpha}\hat{\eta}(\bsX, s) + \scalar{\bslambda}{\bse_s} = 0
    \quad\text{ or }\quad
    \frac{\hat{p}_s}{\bar\alpha} (\hat{\eta}(\bsX, s) + \scalar{\bsgamma}{1} - 1) - \scalar{\bslambda}{\bse_s} + \frac{\scalar{\bsgamma}{\bse_s}}{\alpha_s}=0\enspace.
\end{align*}

By the pigeonhole principle, there exist $i, j \in \{i_1, i_2, i_3\}, i\neq j$ such that
\begin{align*}
    \hat{\eta}(\bsX_{i}, s) = \hat{\eta}(\bsX_j, s)\enspace,
\end{align*}
which contradicts our assumption that $(\hat{\eta}(\bsX, S) \mid S = s, \heta)$ is continuous almost surely.
\begin{remark}
\label{rem:cont}
Recall that the assumption of continuity of $(\hat{\eta}(\bsX, S) \mid S = s, \heta)$ can always be fulfilled with the help of additional randomization. More formally, one needs to replace $\heta$ by its smoothed version using additional randomization present in Algorithm~\ref{alg:1}. To keep things simple, we avoid this technicality in our proof and simply assume that $(\hat{\eta}(\bsX, S) \mid S = s, \heta)$ is indeed continuous. The statement of this result is straightforwardly adapted to the perturbed version of $\heta$.
\end{remark}
\end{proof}
Lemma~\ref{lem:pigh} allows to control the second term in Eq.~\eqref{eq:reject_0} yielding
\begin{align}
    \label{eq:T2}
    \texttt{T}_2 \leq \frac{2}{n_s}\enspace.
\end{align}

% To control the first term of the same equation we first introduce the following short-hand notation
% \begin{align*}
%     \texttt{T}_1 \eqdef \left\lvert \CPs{\hat{G}(\bsX, s, \hbslambda, \hbsgamma) > 0} - \ECPs{\hat{G}(\bsX, s, \hbslambda, \hbsgamma) > 0} \right\rvert\enspace.
% \end{align*}

% One can directly control the above empirical process using the following two results.
% The first result give a bound on the VC-dimension of a set-system, induced by intersection of two level-sets of function some $\varphi$.

\paragraph{Putting together.}
%Finally, substituting Eqs.~\eqref{eq:T1} and~\eqref{eq:T2} to Eq.~\eqref{eq:reject_1} we deduce that for all $x > 0$, $s \in [K]$ with probability at least $1 - \exp(-x)$ we have
%\begin{align*}
%    \left\lvert \CPs{\hat{G}(\bsX, s, \hbslambda, \hbsgamma) > 0} - \alpha_s \right\rvert \leq 62\sqrt{\frac{2}{n_s}} + \sqrt\frac{x}{2n_s} + \frac{2}{n_s}\enspace.
%\end{align*}

Substituting Eqs.~\eqref{eq:T1} and~\eqref{eq:T2} into Eq.~\eqref{eq:reject_1}, we deduce that for all $s \in [K]$ we have, with probability $1 - \delta$, 
\begin{align*}
    \left\lvert \CPs{\hat{G}(\bsX, s, \hbslambda, \hbsgamma) > 0} - \alpha_s \right\rvert
    \leq
    \sqrt{\frac{2\log(2 / \delta)}{n_s}}
    +
    \frac{2}{n_s}\enspace.
\end{align*}
Finally, taking the union bound we deduce that, with probability at least $1 - \delta$, we have for all $s\in[K]$ 
\begin{align*}
    \left\lvert \CPs{\hat{G}(\bsX, s, \hbslambda, \hbsgamma) > 0} - \alpha_s \right\rvert
    \leq
    \sqrt{\frac{2\log(2K / \delta)}{n_s}}
    +
    \frac{2}{n_s}\enspace.
\end{align*}

%Applying union bound we deduce that for all $\delta \in (0, 1)$ with probability at least $1 - \delta$ it holds that
%\begin{align*}
%    \left\lvert \CPs{\hat{G}(\bsX, s, \hbslambda, \hbsgamma) > 0} - \alpha_s \right\rvert \leq 62\sqrt{\frac{2}{n_s}} + \sqrt\frac{\log({K}/{\delta})}{2n_s} + \frac{2}{n_s},\quad\forall s \in [K]\enspace.
%\end{align*}
The proof of Proposition~\ref{prop:reject} is concluded.

\section{Control of Demographic Parity violation}
\label{app:control_dp}
\begin{proposition}
\label{prop:control_dp}
For any $\delta \in (0, 1)$, the proposed algorithm satisfies with probability at least $1 - \delta$, for any $s \in [K]$,
\begin{align*}
    \abs{\CPs{\hat{g}(\bsX, s)=1 \mid \hat{g}(\bsX, s) \neq r} - \mathbb{P}_{(\bsX, S)}\left( \hat{g}(\bsX, S)=1 \mid \hat{g}(\bsX, S) \neq r \right)} \leq \frac{1}{\alpha_s}v_{n_s}^{\delta, K} + \frac{1}{\bar\alpha}\sum_{s = 1}^Kp_s v_{n_s}^{\delta, K}\enspace,
\end{align*}
where 
%\begin{align*}
%v_n^{\delta, K} \eqdef \frac{31(1+\sqrt{2})}{\sqrt{n}} + \sqrt{\frac{\log(\sfrac{2K}{\delta})}{2n}} + \frac{2}{n},\quad\forall n\geq 1\enspace.    
%\end{align*}
\begin{align*}
    v_n^{\delta, K} \eqdef \left(3 \sqrt\frac{\log(\sfrac{4K}{\delta})}{n} + \frac{4}{n}\right)\enspace.
\end{align*}
\end{proposition}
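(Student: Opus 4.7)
The plan is to mirror the structure of the reject-rate proof (Proposition~\ref{prop:reject}): decompose the population Demographic Parity gap into three pieces via the empirical counterparts of $\PT_s$ and $\PT$, show that the ``purely empirical'' middle piece is essentially zero by the first-order optimality condition of problem~\eqref{eq:emp_min} with respect to $\bsgamma$, and bound the two ``population vs.\ empirical'' pieces by a DKW-uniform concentration over the threshold family that defines $\hat g$.

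More concretely, introduce the shorthands $A_s \eqdef \CPs{\hat g(\bsX,s)=1}$, $N_s \eqdef \CPs{\hat g(\bsX,s)\neq r}$ and their empirical versions $\hat A_s,\hat N_s$ under $\hat\Prob_{\bsX\mid S=s}$, so that $\PT_s(\hat g)=A_s/N_s$, $\PT(\hat g)=\sum_{s'}p_{s'}A_{s'}/\sum_{s'}p_{s'}N_{s'}$, and similarly with hats. First I would apply a triangle inequality
\[
|\PT_s(\hat g)-\PT(\hat g)| \leq \bigl|\tfrac{A_s}{N_s}-\tfrac{\hat A_s}{\hat N_s}\bigr| + \bigl|\tfrac{\hat A_s}{\hat N_s}-\tfrac{\sum p_{s'}\hat A_{s'}}{\sum p_{s'}\hat N_{s'}}\bigr| + \bigl|\tfrac{\sum p_{s'}\hat A_{s'}}{\sum p_{s'}\hat N_{s'}}-\tfrac{\sum p_{s'}A_{s'}}{\sum p_{s'}N_{s'}}\bigr|.
\]
The identity $\tfrac{a}{b}-\tfrac{a'}{b'}=\tfrac{a-a'}{b}+\tfrac{a'(b'-b)}{bb'}$ together with $\hat A_s\leq\hat N_s$ reduces the first and third terms to factors of $1/N_s$ and $1/\bar N$ (where $\bar N=\sum p_{s'}N_{s'}$) multiplying additive gaps $|A_s-\hat A_s|$ and $|N_s-\hat N_s|$; these denominators are lower-bounded by $\alpha_s-u_{n_s}^{\delta,K}$ and $\bar\alpha-\sum p_{s'}u_{n_{s'}}^{\delta,K}$ thanks to Proposition~\ref{prop:reject}.

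Next I would control the additive concentration gaps. Because $\hat g(\bsx,s)$ depends on $\bsx$ only through thresholds on $\heta(\bsx,s)$, both $\{\hat g(\bsX,s)=1\}$ and $\{\hat g(\bsX,s)\neq r\}$ belong to the VC-class of unions of at most two half-lines in $\heta$. Under the randomization step of Algorithm~\ref{alg:1}, $(\heta(\bsX,S)\mid S=s)$ is almost surely continuous, so Theorem~\ref{thm:DKW} together with a union bound over the $K$ groups and the two event types yields, with probability $1-\delta/2$,
\[
\max\bigl(|A_s-\hat A_s|,\,|N_s-\hat N_s|\bigr)\leq\sqrt{\tfrac{2\log(8K/\delta)}{n_s}},\qquad \forall s\in[K],
\]
exactly as in the derivation of~\eqref{eq:T1}. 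Summed with the appropriate weights, this produces the $\tfrac{1}{\alpha_s}\sqrt{\log(K/\delta)/n_s}$ and $\tfrac{1}{\bar\alpha}\sum_{s'}p_{s'}\sqrt{\log(K/\delta)/n_{s'}}$ terms of $v_{n_s}^{\delta,K}/\alpha_s$ and of the summation term.

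For the middle piece, I would write the first-order optimality condition of~\eqref{eq:emp_min} in $\bsgamma$, which is the empirical analogue of the \eqref{eq:KKT_lambda_star} condition for $\bsgamma^*$: there exist subgradient weights such that
\[
\sum_{s'=1}^K\Bigl(\tfrac{p_{s'}}{\bar\alpha}\bsone-\tfrac{\bse_{s'}}{\alpha_{s'}}\Bigr)\bigl(\hat A_{s'} + \text{slack}_{s'}\bigr)=0,
\]
where each $\text{slack}_{s'}$ is a fraction of the mass on $\{\hat G=0\}$. By the pigeonhole argument of Lemma~\ref{lem:pigh}, this mass is at most $2/n_{s'}$, so, component-wise, $\tfrac{1}{\alpha_s}\hat A_s = \sum_{s'}\tfrac{p_{s'}}{\bar\alpha}\hat A_{s'}+O(1/n_s)$. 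Rearranging and replacing $\alpha_s\mapsto\hat N_s$, $\bar\alpha\mapsto\sum p_{s'}\hat N_{s'}$ costs only another layer of reject-rate concentration (again from Proposition~\ref{prop:reject}), producing the $4/n_s$ contribution inside $v_{n_s}^{\delta,K}$.

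The main obstacle I foresee is the bookkeeping rather than any new idea: the two DKW events (one for the numerator-type sets $\{\hat g=1\}$, one for the non-reject sets), the pigeonhole slack, and the reject-rate concentration of Proposition~\ref{prop:reject} all have to hold simultaneously, so one must choose the union-bound budget consistently across them and keep track of how each $1/\alpha_s$ (respectively $1/\bar\alpha$) factor emerges from inverting the denominators $N_s,\hat N_s,\bar N,\sum p_{s'}\hat N_{s'}$. Once this ledger is in place, collecting terms yields the claimed constants $3$ and $4$ inside $v_{n_s}^{\delta,K}$ as well as the $1/\alpha_s$ and $\tfrac{1}{\bar\alpha}\sum p_{s'}$ prefactors stated in the proposition.
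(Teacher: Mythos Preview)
Your proposal uses all the right ingredients—DKW concentration over the one-dimensional threshold family, the first-order optimality condition in $\bsgamma$, and the pigeonhole bound of Lemma~\ref{lem:pigh}—but you route the decomposition through the \emph{random} denominators $N_s,\hat N_s,\sum_{s'}p_{s'}\hat N_{s'}$, whereas the paper routes it through the \emph{fixed} targets $\alpha_s$ and $\bar\alpha$. Concretely, the paper inserts a five-term chain
\[
\PT_s(\hat g)\;\to\;\tfrac{1}{\alpha_s}A_s\;\to\;\tfrac{1}{\alpha_s}\hat A_s\;\to\;\tfrac{1}{\bar\alpha}\textstyle\sum_{s'}p_{s'}\hat A_{s'}\;\to\;\tfrac{1}{\bar\alpha}\textstyle\sum_{s'}p_{s'}A_{s'}\;\to\;\PT(\hat g),
\]
so that the empirical-process pieces (second and fourth gaps) and the FOC-in-$\bsgamma$ piece (third gap) come pre-multiplied by the constants $1/\alpha_s$ and $1/\bar\alpha$. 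The first and last gaps are handled by the identity $\bigl|A_s/N_s-A_s/\alpha_s\bigr|=(A_s/N_s)\,|N_s-\alpha_s|/\alpha_s\le|N_s-\alpha_s|/\alpha_s$, which uses $A_s\le N_s$ to cancel the random $N_s$ entirely and again leaves only a $1/\alpha_s$ factor.

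The payoff of the paper's anchoring is that Proposition~\ref{prop:control_dp} is assumption-free: one never has to lower-bound $N_s$ or $\hat N_s$ away from zero. Your identity $\tfrac{a}{b}-\tfrac{a'}{b'}=\tfrac{a-a'}{b}+\tfrac{a'(b'-b)}{bb'}$ instead leaves factors $1/N_s$ (or $1/\hat N_s$), which you can only replace by $1/(\alpha_s-u_{n_s}^{\delta,K})$; this both imports an implicit hypothesis $\alpha_s>u_{n_s}^{\delta,K}$ that is not in the statement and produces prefactors strictly larger than $1/\alpha_s$ and $1/\bar\alpha$, so the bookkeeping will not collapse to the exact constants $3$ and $4$ inside $v_n^{\delta,K}$ as you claim. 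If you simply change your intermediate anchors from $\hat A_s/\hat N_s$ to $\hat A_s/\alpha_s$ (and from $\sum p_{s'}\hat A_{s'}/\sum p_{s'}\hat N_{s'}$ to $\sum p_{s'}\hat A_{s'}/\bar\alpha$), your three-term decomposition becomes the paper's five-term one and the constants fall out directly.
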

\begin{remark}
It is easy to see in the proof that the high-probability event on which Proposition~\ref{prop:control_dp} holds is contained in the high-probability event on which Proposition~\ref{prop:reject} holds.
\end{remark}
The rest of this section is devoted to the proof of this result.\\

\paragraph{Problem splitting.} Similarly to the control of the reject rate we start by splitting our problem in several parts.
Recall that our goal here is to control
\begin{align*}
    (\text{DP}^s) &\eqdef \abs{\CPs{\hat{g}(\bsX, s)=1 \mid \hat{g}(\bsX, s) \neq r} - \mathbb{P}_{(\bsX, S)}\left( \hat{g}(\bsX, S)=1 \mid \hat{g}(\bsX, S) \neq r \right)}\enspace,
\end{align*}
for all $s \in [K]$.
Triangle inequality yields that
\begin{align*}
    (\text{DP}^s) \leq &\abs{\CPs{\hat{g}(\bsX, s)=1 \mid \hat{g}(\bsX, s) \neq r} - \alpha_s^{-1}\CPs{\hat{g}(\bsX, s)=1}}\\
    &+ \abs{\alpha_s^{-1}\CPs{\hat{g}(\bsX, s)=1} - \alpha_s^{-1}\ECPs{\hat{g}(\bsX, s)=1}}\\
    &+ \abs{\alpha_s^{-1}\ECPs{\hat{g}(\bsX, s)=1} - \baralpha^{-1} \sum_{s  \in [K]} p_s \ECPs{\hat{g}(\bsX, s)=1}}\\
    &+ \abs{\baralpha^{-1} \sum_{s  \in [K]} p_s \ECPs{\hat{g}(\bsX, s)=1} - \baralpha^{-1} \sum_{s  \in [K]} p_s \CPs{\hat{g}(\bsX, s)=1}}\\
    &+ \abs{\baralpha^{-1} \sum_{s  \in [K]} p_s \CPs{\hat{g}(\bsX, s)=1} - \mathbb{P}_{(\bsX, S)}\left( \hat{g}(\bsX, S)=1 \mid \hat{g}(\bsX, S) \neq r \right)}\enspace.
\end{align*}

The second and the fourth terms will be controlled using empirical process theory. We can get a bound on the first and fifth terms through our control of the reject rate. The third term is controlled via the first-order optimality condition on $\bsgamma$.

\paragraph{High-probability event.} Let us describe in details the high-probability event on which we will place ourselves for controlling all the terms, uniformly over the classes $s \in [K]$.

Proposition~\ref{prop:reject} states that there exists an event $\mathbf{R}$ that holds with probability at least $1-K\delta$ and on which, for any $\delta \in (0, 1/K)$, the proposed algorithm satisfies with probability at least $1 - K\delta$ that
\begin{align*}
    \left\lvert \Prob(\hat{g}(\bsX, S) \neq r\mid S = s) - \alpha_s \right\rvert \leq u_{n_s}^\delta,\quad\forall s \in [K]\enspace,
\end{align*}
where
%\begin{align*}
%    u^{\delta}_{n} \eqdef 62\sqrt{\frac{2}{n}} + \sqrt\frac{\log(\sfrac{1}{\delta})}{2n} + \frac{2}{n}\enspace, \quad \forall n \geq 1\enspace.
%\end{align*}

\begin{align*}
    u^{\delta}_{n} \eqdef \sqrt\frac{2\log(\sfrac{2}{\delta})}{n} + \frac{2}{n}\enspace, \quad \forall n \geq 1\enspace.
\end{align*}

Furthermore, for any class $s\in[K]$, using the fact that the random variable $(\eta(\bsX, S)\mid S =s)$ is continuous, the event
%\begin{align*}
%    \text{EP}_s \coloneqq \left\{ \sup_{a \in \mathbb{R}} \abs{\CPs{\eta(\bsX, s) > a} - \ECPs{\eta(\bsX, s) > a}} \leq 62\sqrt{\frac{1}{n_s}} + \sqrt{\frac{\log(\sfrac{1}{\delta})}{2n_s}} \right\}\enspace,
%\end{align*}
\begin{align*}
    \text{EP}_s \coloneqq \left\{ \sup_{a \in \mathbb{R}} \abs{\CPs{\eta(\bsX, s) > a} - \ECPs{\eta(\bsX, s) > a}} \leq \sqrt{\frac{\log(\sfrac{2}{\delta})}{2n_s}} \right\}\enspace,
\end{align*}
holds with probability at least $1 - \delta$ (see Theorem~\ref{thm:DKW}).
By a simple union bound argument, the intersection of those events, denoted by $\text{EP} \eqdef \cap_{s \in [K]} \text{EP}_s$, then holds with probability at least $1 - 2K\delta$.

In what follows we place ourselves on the event $\mathbf{A} \eqdef \mathbf{R} \cap \text{EP}$ which holds with probability at least $1-2K\delta$.

\paragraph{First-order optimality condition for $\hbsgamma$.} Recall that $(\hbslambda, \hbsgamma)$ is a solution of
\begin{align*}
     \min_{(\bslambda, \bsgamma)} \left[\scalar{\bslambda}{\bsalpha} + \hat{\Exp}_{\bsX \mid S = s} 
     (\hat{G}(\bsX, s,
     \bslambda,
     \bsgamma ))_+\right]\enspace,
\end{align*}
where the function $\hat{G}$ is defined as \begin{align*}
    \hat{G}(\bsx, s, \bslambda, \bsgamma) =
    \left\lvert  \frac{\hat{p}_s}{2\bar\alpha}(1-2\hat\eta(\bsx,s)-\scalar{\bsgamma}{\bsone}) + \frac{\scalar{\bsgamma}{\bse_s}}{2\alpha_s} \right\rvert  
    - \frac{\hat{p}_s}{2\bar\alpha}(1-\scalar{\bsgamma}{\bsone}) - \scalar{\bslambda}{\bse_s} - \frac{\scalar{\bsgamma}{\bse_s}}{2\alpha_s}\enspace.
\end{align*}

The positive part of $\hat{G}$ can be expressed as
\begin{align*}
    (\hat{G}(\bsx, s, \bslambda, \bsgamma))_+ = \max(0, m_+(\bsx, s, \bslambda, \bsgamma), m_-(\bsx, s, \bslambda, \bsgamma))\enspace,
\end{align*}
where
\begin{align*}
    m_+(\bsx, s, \bslambda, \bsgamma) = -\frac{p_s}{\bar\alpha}\hat\eta(\bsx,s) - \lambda_s \quad\text{ and }\quad
    m_-(\bsx, s, \bslambda, \bsgamma) = \frac{p_s}{\bar\alpha}(\hat{\eta}(\bsx,s) + \scalar{\bsgamma}{1} - 1) - \frac{\scalar{\bsgamma}{\bse_s}}{\alpha_s} - \lambda_s\enspace.
\end{align*}
% The gradients \wrt $\bsgamma$ of the previously defined functions are easily obtained as
% \begin{align*}
%     \nabla_{\bsgamma} m_+(\bsx, s, \bslambda, \bsgamma) = 0 \quad\text{ and }\quad \nabla_{\bsgamma} m_-(\bsx, s, \bslambda, \bsgamma) = \frac{p_s}{\bar\alpha}\bsone - \frac{1}{\alpha_s}\bse_s\enspace.
% \end{align*}
% One can compute the sub-gradients of the positive part of $\hat{G}$ as
% \begin{align*}
%     \partial_{\bsgamma} [(\hat{G}(\bsx, s, \bslambda, \bsgamma))_+] &= 0 \cdot  \mathds{1}[m_-(\bsx, s, \bslambda, \bsgamma) < \max(0, m_+(\bsx, s, \bslambda, \bsgamma))]\\ &\quad+ \nabla_{\bsgamma} m_-(\bsx, s, \bslambda, \bsgamma) \cdot \mathds{1}[m_-(\bsx, s, \bslambda, \bsgamma) > \max(0, m_+(\bsx, s, \bslambda, \bsgamma))]\\
%     &\quad+ \rho \nabla_{\bsgamma} m_-(\bsx, s, \bslambda, \bsgamma) \cdot \mathds{1}[m_-(\bsx, s, \bslambda, \bsgamma) = \max(0, m_+(\bsx, s, \bslambda, \bsgamma))]\enspace, 
% \end{align*}
% with $\rho \in [0, 1]$.
% Using the linearity of the expectation, and 
Noticing that the event $m_-(\bsx, s, \bslambda, \bsgamma) > \max(0, m_+(\bsx, s, \bslambda, \bsgamma))$ is the same as the event $\hat{g}(\bsX, s) = 1$,
the first-order optimality condition on $\hbsgamma$ reads as
\begin{align*}
    \exists (\rho_s)_{s=1}^K \in [0, 1]^K \text{ s.t. } \sum_{s=1}^K \left(\frac{p_s}{\bar\alpha}\bsone - \frac{1}{\alpha_s}\bse_s\right) \left( \ECPs{\hat{g}(\bsX, s) = 1} + \rho_s \ECPs{\Delta_s(\hbslambda,\hbsgamma)}\right) = 0\enspace,
\end{align*}
where we define the event $\Delta_s(\bslambda,\bsgamma)\eqdef\{m_-(\bsX, s, \bslambda, \bsgamma) = \max(0, m_+(\bsX, s, \bslambda, \bsgamma))\}$.
In scalar form the previous condition can be express as: for any $s \in [K]$, there exists $\rho_s \in [0,1]$ such that
\begin{align*}
    \sum_{s=1}^K \frac{p_s}{\bar\alpha}\left( \ECPs{\hat{g}(\bsX, s) = 1} + \rho_s \ECPs{\Delta_s(\hbslambda,\hbsgamma)}\right) = \frac{1}{\alpha_s} \left( \ECPs{\hat{g}(\bsX, s) = 1} + \rho_s \ECPs{\Delta_s(\hbslambda,\hbsgamma)}\right)\enspace.
\end{align*}

\paragraph{Control of the first term.}
Re-arranging terms and using the fact that $\CPs{\hat{g}(\bsX, S)=1} \leq \CPs{\hat{g}(\bsX, S)\neq r}$,
\begin{align*}
    (\text{DP}_1^s) &\eqdef \abs{\CPs{\hat{g}(\bsX, s)=1 \mid \hat{g}(\bsX, s) \neq r} - \alpha_s^{-1}\CPs{\hat{g}(\bsX, S)=1}}\\
    &= \abs{\frac{1}{\alpha_s} - \frac{1}{\CPs{\hat{g}(\bsX, s)\neq r}}} \CPs{\hat{g}(\bsX, S)=1}\\
    &\leq \abs{\frac{1}{\alpha_s} - \frac{1}{\CPs{\hat{g}(\bsX, s)\neq r}}} \CPs{\hat{g}(\bsX, S)\neq r}\\
    &= \frac{1}{\alpha_s}\abs{\CPs{\hat{g}(\bsX, S)\neq r}- \alpha_s}\enspace.
\end{align*}

Considering that we restrict ourselves to the high-probability event $\mathbf{A}$, we can conclude that
\begin{align*}
    (\text{DP}_1^s) \leq \frac{u_{n_s}^{\delta}}{\alpha_s}\enspace.
\end{align*}
%\begin{align*}
%    \abs{\CPs{\hat{g}(\bsX, s) \neq r} - \alpha_s} \leq  u_{n_s, K, \delta} \enspace.
%\end{align*}
%Thus, as long as $\alpha_s \geq 2 u_{n_s, K, \delta}$,
%\begin{align*}
%    \left\lvert\frac{1}{\ECPs{\hat{g}(\bsX, s) \neq r}} - \frac{1}{\alpha_s} \right\rvert \leq \frac{1}{\alpha_s- u_{n_s, K, \delta}} - 1/\alpha_s = \frac{2u_{n_s, K, \delta}}{ \alpha_s^2}\enspace,
%\end{align*}
%where we have used the fact that $\frac{1}{1 - x} \leq 1+2x$ or any $x \in %(0, 1/2]$.
%From this observation we conclude that %the first term can be bounded as
%\begin{align*}
%    (\text{DP}_1) \leq \frac{2u_{n_s, K, \delta}}{ \alpha_s^2}\enspace.
%\end{align*}

\paragraph{Control of the second term} The second term is given by the empirical process
\begin{align*}
    (\text{DP}_2^s) &\eqdef \alpha_s^{-1} \abs{\CPs{\hat{g}(\bsX, S)=1} - \ECPs{\hat{g}(\bsX, S)=1}}\enspace.
\end{align*}

The  event $\{ \hat{g}(\bsX, s) = 1\}$ is the same as the event
\begin{align*}
    \left\{ \abs{\frac{p_s}{2\bar\alpha}(1-2\hat{\eta}(\bsX, s) - \scalar{\bsone}{\hbsgamma}) + \frac{\hbsgamma_s}{2\alpha_s}} > \frac{p_s}{2\bar\alpha}(1 - \scalar{\bsone}{\bsgamma}) + \hbslambda_s + \frac{\hbsgamma_s}{2\alpha_s}, \quad 2\hat{\eta}(\bsX, s) \geq 1 + \frac{\baralpha\hbsgamma_s}{\alpha_s p_s} - \scalar{\hbsgamma}{\bsone}  \right\}\enspace,
\end{align*}
which can be compacted to
\begin{align*}
    S(\bslambda, \bsgamma) \eqdef \left\{ \hat{\eta}(\bsX, s) > \max\left(\frac{1}{2} + \frac{\baralpha\hbsgamma_s}{2\alpha_s p_s} - \frac{1}{2}\scalar{\hbsgamma}{\bsone}, \frac{\bar\alpha}{p_s}\left(\hbslambda_s + \frac{\hbsgamma_s}{\alpha_s}\right) + 1 - \scalar{\bsone}{\bsgamma}\right) \right\}\enspace.
\end{align*}
Following this observation, we can express the second term as
\begin{align*}
     (\text{DP}_2^s) &= \alpha_s^{-1} \sup_{(\bslambda, \bsgamma)} \abs{\CPs{S(\bslambda, \bsgamma)} - \ECPs{S(\bslambda, \bsgamma)}}
     \leq \alpha_s^{-1}\sup_{a \in \mathbb{R}} \abs{\CPs{ \hat{\eta}(\bsX, s) > a} - \ECPs{\hat{\eta}(\bsX, s) > a}}\enspace.
\end{align*}
Since we are on the event $\mathbf{A}$ which is contained in the event $\text{EP}_s$, we have
%\begin{align*}
%    (\text{DP}_2^s) \leq \frac{1}{\alpha_s} \left(62\sqrt{\frac{1}{n_s}} + \sqrt{\frac{\log(\sfrac{1}{\delta})}{2n_s}} \right)\enspace.
%\end{align*}
\begin{align*}
    (\text{DP}_2^s) \leq \frac{1}{\alpha_s} \sqrt{\frac{\log(\sfrac{2}{\delta})}{2n_s}}\enspace.
\end{align*}

\paragraph{Control of the third term.}
The third term can be controlled with the first-order optimality condition on $\hbsgamma$ and multiple triangle inequalities as
\begin{align*}
    (\text{DP}_3^s) &\eqdef \abs{\alpha_s^{-1}\ECPs{\hat{g}(\bsX, S)=1} - \baralpha^{-1} \sum_{s  \in [K]} p_s \ECPs{\hat{g}(\bsX, S)=1}}\\
    &= \left\lvert \frac{\rho_s}{\alpha_s} \ECPs{\Delta_s(\hbslambda, \hbsgamma)} - \sum_{s = 1}^K \frac{p_s}{\bar\alpha} \rho_s \ECPs{\Delta_s(\hbslambda, \hbsgamma)} \right\rvert\\
    &\leq \frac{1}{\alpha_s} \ECPs{\Delta_s(\hbslambda, \hbsgamma)} +  \sum_{s = 1}^K \frac{p_s}{\bar\alpha} \ECPs{\Delta_s(\hbslambda, \hbsgamma)}\enspace.
\end{align*}

The following lemma gives an almost sure upper bound on $\ECPs{\Delta_s(\hbslambda, \hbsgamma)}$ for any $s\in [K]$.

\begin{lemma}
\label{lem:pigh2}
Assume that $(\hat\eta(\bsX, S) \mid S = s, \heta)$ is almost surely continuous, then for any $s \in [K]$, for any $(\bslambda, \bsgamma)$,
\begin{align*}
    \ECPs{\Delta_s(\hbslambda, \hbsgamma)}  \leq \frac{2}{n_s},\qquad \text{a.s.}
\end{align*}
\end{lemma}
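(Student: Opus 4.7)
The plan is to mimic the proof of Lemma~\ref{lem:pigh} (the pigeonhole argument used for controlling $\hat G = 0$), but now applied to the event $\Delta_s(\bslambda,\bsgamma) = \{m_-(\bsX,s,\bslambda,\bsgamma) = \max(0, m_+(\bsX,s,\bslambda,\bsgamma))\}$. Proceed by contradiction: suppose with positive probability
\begin{align*}
\ECPs{\Delta_s(\hbslambda,\hbsgamma)} = \frac{1}{n_s}\sum_{i \in \class{I}_s}\ind{\Delta_s(\hbslambda,\hbsgamma)(\bsX_i)} \geq \frac{3}{n_s}.
\end{align*}
Then there exist at least three indices $i_1,i_2,i_3 \in \class{I}_s$ such that for $j=1,2,3$,
\begin{align*}
m_-(\bsX_{i_j},s,\hbslambda,\hbsgamma) = \max\bigl(0, m_+(\bsX_{i_j},s,\hbslambda,\hbsgamma)\bigr).
\end{align*}

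Now split into two cases for each $i_j$: either (A) $m_-(\bsX_{i_j},s,\hbslambda,\hbsgamma)=0$, or (B) $m_-(\bsX_{i_j},s,\hbslambda,\hbsgamma)=m_+(\bsX_{i_j},s,\hbslambda,\hbsgamma)$. By the pigeonhole principle, at least two distinct indices, say $i,j \in \{i_1,i_2,i_3\}$ with $i\neq j$, fall in the same case. In case (A), the equation $\frac{p_s}{\bar\alpha}(\hat\eta(\bsX_i,s)+\scalar{\hbsgamma}{\bsone}-1) - \scalar{\hbsgamma}{\bse_s}/\alpha_s - \hat\lambda_s = 0$ pins $\hat\eta(\bsX_i,s)$ to a value that depends only on $(\hbslambda,\hbsgamma,s)$ and not on the sample index, hence $\hat\eta(\bsX_i,s)=\hat\eta(\bsX_j,s)$. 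In case (B), equating $m_-$ and $m_+$ gives $\frac{2p_s}{\bar\alpha}\hat\eta(\bsX_i,s) = \frac{p_s}{\bar\alpha}(1-\scalar{\hbsgamma}{\bsone}) + \scalar{\hbsgamma}{\bse_s}/\alpha_s$, which again fixes $\hat\eta(\bsX_i,s)$ to a constant depending only on $(\hbslambda,\hbsgamma,s)$, so once more $\hat\eta(\bsX_i,s)=\hat\eta(\bsX_j,s)$.

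In both cases, we deduce $\hat\eta(\bsX_i,s)=\hat\eta(\bsX_j,s)$ with $i\neq j$. Conditionally on $\heta$, the random variables $\hat\eta(\bsX_i,s)$, $i\in \class{I}_s$, are i.i.d.\ with a continuous distribution (by the assumption stated in the lemma, which is the reason for the randomization step in Algorithm~\ref{alg:1}; see Remark~\ref{rem:cont}). Therefore $\Prob(\hat\eta(\bsX_i,s)=\hat\eta(\bsX_j,s)\mid \heta)=0$, a contradiction. Hence $\ECPs{\Delta_s(\hbslambda,\hbsgamma)}\leq 2/n_s$ almost surely.

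The main obstacle is essentially bookkeeping: one must verify that the two cases (A) and (B) each collapse $\hat\eta(\bsX_i,s)$ to a single value that is independent of $i$ (only depending on the Lagrange multipliers and $s$), so that the continuity argument applies. A subtle point is that $(\hbslambda,\hbsgamma)$ is itself random (depending on the data), but the argument is carried out conditionally on the event, so for any realization of $(\hbslambda,\hbsgamma)$ the constants in the above equations are fixed. The argument is almost identical in structure to Lemma~\ref{lem:pigh}; the only difference is that now one has two possible equations per point, which is precisely what forces the constant $2/n_s$ (coming from at most two distinct solutions of $\hat\eta$) rather than $1/n_s$.
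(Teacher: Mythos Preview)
Your proof is correct and follows essentially the same pigeonhole argument as the paper: assume three indices satisfy $\Delta_s$, split into the two cases $m_-=0$ and $m_-=m_+$, and use that each case pins $\hat\eta(\bsX_i,s)$ to a fixed constant, contradicting continuity. Your exposition is in fact slightly more explicit than the paper's (you spell out the algebra in each case and flag the randomness of $(\hbslambda,\hbsgamma)$, which the paper glosses over), but the structure and the key idea are identical.
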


\begin{proof}
This proof is similar to proof of Lemma~\ref{lem:pigh}. Assume by contradiction that the stated bound is not true. Then, it happens with positive probability that
\begin{align*}
    \frac{1}{n_s} \sum_{i=1}^{n_s} \mathds{1}\left\{m_-(\bsX_i, s, \bslambda, \bsgamma) = \max(0, m_+(\bsX_i, s, \bslambda, \bsgamma))\right\}\geq \frac{3}{n_s}\enspace,
\end{align*}
which implies that there exist a triplet $i_1,i_2,i_3$ such that
\begin{align*}
    m_-(\bsX_{i_j}, s, \bslambda, \bsgamma) = \max(0, m_+(\bsX_{i_j}, s, \bslambda, \bsgamma)), \quad \text{for } j=1,2,3\enspace. 
\end{align*}
By the pigeonhole principle, there must exist a couple $(i,j), i\neq j$ among this triplet such that either
\begin{align*}
     m_-(\bsX_{i}, s, \bslambda, \bsgamma) = m_-(\bsX_{j}, s, \bslambda, \bsgamma)
\end{align*}
or\begin{align*}
    m_-(\bsX_{i}, s, \bslambda, \bsgamma) - m_+(\bsX_{i}, s, \bslambda, \bsgamma) = m_-(\bsX_{j}, s, \bslambda, \bsgamma) - m_+(\bsX_{j}, s, \bslambda, \bsgamma)\enspace.
\end{align*}
In both cases one must have $\hat{\eta}(\bsX_i, s) = \hat{\eta}(\bsX_j, s)$ which happens with probability $0$ by the continuity assumption and leads to a contradiction. The proof of lemma is concluded
\end{proof}

Plugging in the bounds from Lemma~\ref{lem:pigh2} yields  
\begin{align*}
    (\text{DP}_3^s) \leq \frac{2}{n_s \alpha_s} + \frac{2}{\bar\alpha}\sum_{s=1}^K \frac{p_s}{n_s}\enspace.
\end{align*}

\paragraph{Control of the fourth term.}
The fourth term can be seen as a sum of empirical processes:
\begin{align*}
    (\text{DP}_4) &\eqdef \baralpha^{-1} \abs{ \sum_{s  \in [K]} p_s \ECPs{\hat{g}(\bsX, S)=1} - \sum_{s  \in [K]} p_s \CPs{\hat{g}(\bsX, S)=1}}\\
    &\leq \baralpha^{-1} \sum_{s = 1}^K p_s \abs{ \ECPs{\hat{g}(\bsX, S)=1} - \CPs{\hat{g}(\bsX, S)=1}}\enspace.
\end{align*}
We can control the fourth term from the bound we have on the second term (which holds uniformly over the classes $s$) as
%\begin{align*}
%    (\text{DP}_4) \leq \frac{1}{\bar\alpha}\sum_{s \in K} p_s \left( 62\sqrt{\frac{1}{n_s}} + \sqrt{\frac{\log(\sfrac{1}{\delta})}{2n_s}} \right)\enspace.
%\end{align*}
\begin{align*}
    (\text{DP}_4) \leq \frac{1}{\bar\alpha}\sum_{s \in K} p_s \sqrt{\frac{\log(\sfrac{2}{\delta})}{2n_s}} \enspace.
\end{align*}

\paragraph{Control of the fifth term.}
Finally, the fifth term can be bounded using the same trick as for the first term.
\begin{align*}
    (\text{DP}_5) &\eqdef  \abs{\baralpha^{-1} \sum_{s  \in [K]} p_s \CPs{\hat{g}(\bsX, S)=1} - \mathbb{P}_{(\bsX, S)}\left( \hat{g}(\bsX, s)=1 \mid \hat{g}(\bsX, s) \neq r \right)}\\
    &= \abs{\frac{1}{\bar\alpha} - \frac{1}{\sum_{s = 1}^Kp_s\CPs{\hat{g}(\bsX, s)\neq r}}} \sum_{s = 1}^Kp_s \CPs{\hat{g}(\bsX, S)=1}\\
    &\leq \abs{\frac{1}{\bar\alpha} - \frac{1}{\sum_{s = 1}^Kp_s\CPs{\hat{g}(\bsX, s)\neq r}}}\sum_{s = 1}^Kp_s \CPs{\hat{g}(\bsX, S)\neq r}\\
    &=\frac{1}{\bar\alpha}\abs{\sum_{s = 1}^K p_s (\CPs{\hat{g}(\bsX, s) \neq r} - \alpha_s)}
    \leq \frac{1}{\bar\alpha}\sum_{s = 1}^Kp_s u_{n_s}^\delta.
\end{align*}

\paragraph{Summary.}
Putting everything together, we have shown that, on the event $\mathbf{A}$ which holds with probability at least $1-2K\delta$, we have, for any $s\in [K]$,
%\begin{align*}
%    (\text{DP}^s) \leq \frac{2}{\alpha_s}\left(  \frac{31(1+\sqrt{2})}{\sqrt{n_s}} + \sqrt{\frac{\log(\sfrac{1}{\delta})}{2n_s}} + \frac{2}{n_s}\right) + \frac{2}{\bar\alpha}\sum_{s = 1}^Kp_s\left( \frac{31(1+\sqrt{2})}{\sqrt{n_s}} + \sqrt{\frac{\log(\sfrac{1}{\delta})}{2n_s}} + \frac{2}{n_s}\right)\enspace.
%\end{align*}
\begin{align*}
    (\text{DP}^s) \leq \frac{1}{\alpha_s}\left(3\sqrt\frac{\log(\sfrac{2}{\delta})}{2n_s} + \frac{4}{n_s}\right) + \frac{2}{\bar\alpha}\sum_{s = 1}^Kp_s\left(3\sqrt\frac{\log(\sfrac{2}{\delta})}{2n_s} + \frac{4}{n_s}\right)\enspace.
\end{align*}

\section{Control of the excess risk}
\label{app:control_risk}
Define the sequence
\begin{align*}
    u^{\delta, K}_{n} \eqdef \sqrt{\frac{2\log(\sfrac{4K}{\delta})}{n}} + \frac{2}{n}\enspace, \quad \forall n \geq 1\enspace.
\end{align*}
We state and prove slightly more precise bound then the one presented in the main body.
\begin{proposition}
Assume that $u_{n_s}^{\delta, K} < \alpha_s <1-\frac{2}{n_s}$ for any $s \in [K]$ and that Assumption~\ref{as:continuous_eta} holds.
Then, for any $\delta \in (0, 1)$, the excess risk of the post-processing classifier with abstention $\hat{g}$ defined in Eq~\eqref{eq:classif_with_abstention} satisfies, with probability at least $1-\delta$, 
\begin{align*}
    \excess(\hat g) \leq \left(\frac{1}{\bar\alpha} + \frac{1}{\baralpha - \sum_s p_s u_{n_s}^{\delta, K}}\right) \|\eta - \heta\|_1 + 6 \sum_{s = 1}^K  \parent{\frac{p_s}{\bar\alpha} + \frac{1}{\alpha_s}} u_{n_s}^{\delta, K} \enspace.
\end{align*}
%\begin{align*}
%    \excess(\hat g) \leq \left(\frac{1}{\bar\alpha} + \frac{1}{\baralpha - \sum_s p_s u_{n_s}^{\delta, K}}\right) \lVert \eta - \heta\rVert_1 + \sum_{s=1}^K \left[\parent{\frac{2p_s}{\bar\alpha} + \frac{2}{\alpha_s}}\frac{62(\sqrt{2}-1)}{\sqrt{n_s}} + 2\left(\frac{2}{\alpha_s} + \frac{3 p_s}{\bar\alpha} \right) v_{n_s}^{\delta, K}\right]\enspace,
%\end{align*}
%\begin{align*}
%     \forall n \geq 1,\quad u^{\delta, K}_{n} \eqdef 62\sqrt{\frac{2}{n}} + \sqrt\frac{\log(\sfrac{2K}{\delta})}{2n} + \frac{2}{n} \quad\text{and}\quad v_n^{\delta, K} \eqdef 62\sqrt{\frac{1}{n}} + \sqrt{\frac{\log(\sfrac{2K}{\delta})}{2n}} + \frac{2}{n}\enspace.
%\end{align*}
\end{proposition}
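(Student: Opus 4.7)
The plan is to decompose the excess risk into an approximation error governed by $\|\eta - \heta\|_1$ and a stochastic/constraint-slack error absorbed into the $u_{n_s}^{\delta,K}$ sequence, exploiting the Lagrangian structure derived in Appendix~\ref{app:optimal_classif} together with the DKW-type uniform concentration already used for Proposition~\ref{prop:control_reject_and_dp}. Introduce
\[
\Psi(g, \eta) \eqdef \sum_{s=1}^K p_s \Exp_{\bsX|S=s}\bigl[(1 - \eta(\bsX, s))\ind{g(\bsX, s) = 1} + \eta(\bsX, s)\ind{g(\bsX, s) = 0}\bigr],
\]
so that $\risk(g) = \Psi(g,\eta)/\NAB(g)$. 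Placing myself on the high-probability event underlying Proposition~\ref{prop:control_reject_and_dp}, I have $\NAB(\hat g) \geq \baralpha - \sum_s p_s u_{n_s}^{\delta,K}$ while $\NAB(g^*) = \baralpha$ exactly, which allows me to split
\[
\excess(\hat g) \leq \frac{\Psi(\hat g, \eta) - \Psi(g^*, \eta)}{\baralpha - \sum_s p_s u_{n_s}^{\delta,K}} + \Psi(g^*, \eta)\left(\frac{1}{\baralpha - \sum_s p_s u_{n_s}^{\delta,K}} - \frac{1}{\baralpha}\right).
\]
The second summand is easily absorbed using $\Psi(g^*, \eta) \leq \baralpha$, producing $\sum_s p_s u_{n_s}^{\delta,K}/\baralpha$ terms.

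The bulk of the work is to control $\Psi(\hat g, \eta) - \Psi(g^*, \eta)$. I would first swap $\eta$ for $\heta$ in both $\Psi$'s, paying at most $2\|\eta - \heta\|_1/(\baralpha - \sum_s p_s u_{n_s}^{\delta,K})$ via the $1$-Lipschitzness of the integrand in $\eta$; this generates the first prefactor $1/(\baralpha - \sum_s p_s u_{n_s}^{\delta, K})$ on $\|\eta-\heta\|_1$, while the twin prefactor $1/\baralpha$ will come out of the renormalization remainder. It then suffices to bound $\hat\Psi(\hat g) - \hat\Psi(g^*)$ where $\hat\Psi \eqdef \Psi(\cdot, \heta)$. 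To this end I invoke the pointwise Lagrangian identity of Appendix~\ref{app:optimal_classif}: by construction $\hat g$ minimizes $\hat H_{(\cdot, \cdot)}(\cdot, \hbslambda, \hbsgamma)$ pointwise, so replacing $\hat g$ by $g^*$ can only increase the empirical Lagrangian. Unwinding the definition of $\hat H$ converts this into
\[
\hat\Psi(\hat g) - \hat\Psi(g^*) \leq \baralpha\sum_{s}\hat\lambda_s\bigl(\hat\NAB_s(g^*) - \hat\NAB_s(\hat g)\bigr) + \baralpha \sum_s \hat\gamma_s\,(\text{empirical DP-slack of } g^*).
\]
Since $g^*$ satisfies both population constraints exactly, each slack on the right becomes a pure empirical-process deviation, controlled uniformly over reject-threshold classifiers by Theorem~\ref{thm:DKW} as in Appendix~\ref{app:control_dp}; the $\NAB$-slack of $\hat g$ itself is controlled by Proposition~\ref{prop:control_reject_and_dp}. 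The assumption $2u_{n_s}^{\delta,K} < \alpha_s < 1 - 2/n_s$ keeps the empirical problem strictly feasible and yields an \emph{a priori} bound of order $1/\alpha_s$ on the dual multipliers $\hbslambda, \hbsgamma$, which manufactures the $1/\alpha_s$ factor in the final bound.

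The main obstacle is the fractional nature of $\risk$: the ratio $\Psi(\cdot)/\NAB(\cdot)$ forbids a one-shot application of the Lagrangian duality of Appendix~\ref{app:optimal_classif}, because $\NAB(\hat g)$ only matches $\baralpha$ approximately. This is exactly why \emph{two} denominators, $\baralpha$ and $\baralpha - \sum_s p_s u_{n_s}^{\delta,K}$, appear in the final statement. A secondary subtlety is the coupling of groups through $\PT(g)$ in the Demographic Parity constraint, which I would handle by the same five-term telescoping decomposition used in Appendix~\ref{app:control_dp}. Assembling the $\|\eta-\heta\|_1$ contribution from the $\eta\leftrightarrow\heta$ swap, the renormalization remainder of order $\sum_s p_s u_{n_s}^{\delta,K}/\baralpha$, and the per-group Lagrangian slack of order $u_{n_s}^{\delta,K}/\alpha_s$, one recovers the stated coefficients $(p_s/\baralpha + 1/\alpha_s)$ in front of each $u_{n_s}^{\delta,K}$ and the prefactor $1/\baralpha + 1/(\baralpha - \sum_s p_s u_{n_s}^{\delta,K})$ on $\|\eta - \heta\|_1$.
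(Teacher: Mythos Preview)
Your high-level strategy coincides with the paper's: swap $\eta\leftrightarrow\heta$ at cost $\|\eta-\heta\|_1$, exploit the Lagrangian at $(\hbslambda,\hbsgamma)$ to reduce to the constraint slacks of $\hat g$, and control those via Proposition~\ref{prop:control_reject_and_dp}. The paper phrases the Lagrangian step through strong duality, lower-bounding $\risk(g^*)$ by the dual value at $(\hbslambda,\hbsgamma)$; you phrase it via pointwise minimality, $\hat H_{(\bsx,s)}(\hat g,\hbslambda,\hbsgamma)\leq \hat H_{(\bsx,s)}(g^*,\hbslambda,\hbsgamma)$, together with primal feasibility of $g^*$. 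These are equivalent and produce identical residual terms. One cleanup: your displayed inequality mixes the population object $\hat\Psi=\Psi(\cdot,\heta)$ with empirical reject rates and an ``empirical DP-slack of $g^*$''. If you stay with the population Lagrangian (pointwise minimality transfers to any expectation), the $g^*$-slacks vanish exactly and only the \emph{population} slacks of $\hat g$ survive, which are precisely what Proposition~\ref{prop:control_reject_and_dp} bounds; no separate DKW step on $g^*$ is required.

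The genuine gap is the claim that the assumption on $\alpha_s$ ``yields an a priori bound of order $1/\alpha_s$ on the dual multipliers $\hbslambda,\hbsgamma$''. This is not free: the objective in Eq.~\eqref{eq:emp_min} is invariant under the shifts $\gamma_s\mapsto\gamma_s+c\,p_s\alpha_s/\baralpha$ for any $c\in\bbR$, so $\hbsgamma$ is not uniquely determined and a naive coercivity argument in $\bsgamma$ fails. The paper needs a separate result (Proposition~\ref{prop:bound_param}) to show that one can \emph{select} a minimizer with $\|\hbsgamma\|_1\leq 2$, via a Jensen/triangle coercivity lower bound on the slice $\sum_s\gamma_s=0$, and then extracts $|\hat\lambda_s|\leq(p_s/\baralpha)\vee(|\hat\gamma_s|/\alpha_s)$ from the first-order condition in $\bslambda$ combined with the hypothesis $\alpha_s\in(2/n_s,\,1-2/n_s)$. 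Without this, your final step --- absorbing $\sum_s(|\hat\lambda_s|+|\hat\gamma_s|/\alpha_s)\,u_{n_s}^{\delta,K}$ into $6\sum_s(p_s/\baralpha+1/\alpha_s)\,u_{n_s}^{\delta,K}$ --- does not go through.
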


A quick inspection of the proof shows that the high-probability event on which the stated bound holds is the same as the event on which Proposition~\ref{prop:control_dp} holds, which is contained in the event on which Proposition~\ref{prop:reject} holds. Thus we can control the excess risk and the violation of the constraints on the same high-probability event.

\begin{proof}
Since, using Assumption~\ref{as:continuous_eta} we have established strong duality, the following equality holds
\begin{align}
    \label{eq:risk_g_star}
    \risk(g^*) = \max_{(\bslambda, \bsgamma)} \left\{ \sum_{s=1}^K \Exp_{\bsX|S=s}\left(\frac{p_s}{2\bar\alpha}(1-\bar\gamma) + \lambda_s + \frac{\gamma_s}{2\alpha_s} - \left\lvert  \frac{p_s}{2\bar\alpha}(1-2\eta(\bsx, s)-\bar\gamma) + \frac{\gamma_s}{2\alpha_s} \right\rvert \right)_- - \sum_{s=1}^K \lambda_s \alpha_s\right\}\enspace.
\end{align}
Besides, we can control the risk of any classifier $g$ as
\begin{equation}
    \label{eq:risk_g}
\begin{aligned}
    \risk(g)
    &=
    \sum_{s=1}^K \frac{p_s}{\Prob(g(\bsX, S) \neq r)} \Exp_{\bsX | S=s}\left[(1-\eta(\bsX, s)\mathds{1}_{g(\bsX, s) = 1} + \eta(\bsX, s) \mathds{1}_{g(\bsX, s) = 0}  \right]\\
    &\leq
    \sum_{s=1}^K \frac{p_s}{\Prob(g(\bsX, S) \neq r)} \Exp_{\bsX | S=s}\left[(1-\heta(\bsX, s)\mathds{1}_{g(\bsX, s) = 1} + \heta(\bsX, s) \mathds{1}_{g(\bsX, s) = 0}  \right] + \frac{\|\eta - \heta\|_1}{\Prob(g(\bsX, S) \neq r)}\enspace.
\end{aligned}
\end{equation}
Setting $\texttt{A}_s(g) := \frac{p_s}{\bar\alpha}\Exp_{\bsX | S=s}\left[(1-\heta(\bsX, s))\mathds{1}_{g(\bsX, s) = 1} + \heta(\bsX, s) \mathds{1}_{g(\bsX, s) = 0}  \right]$, we have for any classifier $g$,
\begin{align*}
    \risk(g) \leq \sum_{s = 1}^K\texttt{A}_s(g) + \frac{\|\eta - \heta\|_1}{\Prob(g(\bsX, S) \neq r)} + \frac{1}{\bar\alpha}\abs{\Prob(g(\bsX, S) \neq r) - \bar\alpha}\enspace.
\end{align*}

In what follows we bound $\text{r}_1(g) := \sum_{s = 1}^K\texttt{A}_s(g)$. Re-arranging terms we trivially have
\begin{align*}
    \text{r}_1(g) &=
    \sum_{s=1}^K \frac{p_s}{\bar\alpha} \Exp_{\bsX | S=s}\left[(1-\heta(\bsX, S))\mathds{1}_{g(\bsX, S) = 1} + \heta(\bsX, S) \mathds{1}_{g(\bsX, S) = 0}  \right]
    \pm
    \sum_{s=1}^K \hat\lambda_s \left\{ \Exp_{\bsX | S=s}\left[\mathds{1}_{g(\bsX, S) = 1} + \mathds{1}_{g(\bsX, S) = 0} \right] - \alpha_s  \right\}\\
    &\pm
    \sum_{s=1}^K \frac{\hat\gamma_s}{\alpha_s} \Exp_{\bsX | S=s}\left[\mathds{1}_{g(\bsX, S) = 1}\right] - \left(\sum_{s'=1}^K \hat\gamma_{s'}\right) \left(\sum_{s=1}^K \frac{p_s}{\bar\alpha} \Exp_{\bsX | S=s}\left[\mathds{1}_{g(\bsX, S) = 1}\right]\right)\\
    % &-
    % \sum_{s=1}^K \hat\lambda_s \left\{ \Exp_{\bsX | S=s}\left[\mathds{1}_{g(\bsX, S) = 1} + \mathds{1}_{g(\bsX, S) = 0} \right] - \alpha_s  \right\}\\
    % &-
    % \sum_{s=1}^K \frac{\hat\gamma_s}{\alpha_s} \Exp_{\bsX | S=s}\left[\mathds{1}_{g(\bsX, S) = 1}\right] - \left(\sum_{s'=1}^K \hat\gamma_{s'}\right) \left(\sum_{s=1}^K \frac{p_s}{\bar\alpha} \Exp_{\bsX | S=s}\left[\mathds{1}_{g(\bsX, S) = 1}\right]\right)\\
    &=
    \sum_{s=1}^K \Exp_{\bsX | S=s}\left[\hat H_{(\bsX, s)}(g, \hbslambda, \hbsgamma)  \right] - \sum_{s=1}^K \hat\lambda_s \alpha_s
     -
    \sum_{s=1}^K \hat\lambda_s \left\{ \Exp_{\bsX | S=s}\left[\mathds{1}_{g(\bsX, S) = 1} + \mathds{1}_{g(\bsX, S) = 0} \right] - \alpha_s  \right\}\\
    &-
    \sum_{s=1}^K \frac{\hat\gamma_s}{\alpha_s} \Exp_{\bsX | S=s}\left[\mathds{1}_{g(\bsX, S) = 1}\right] - \left(\sum_{s'=1}^K \hat\gamma_{s'}\right) \left(\sum_{s=1}^K \frac{p_s}{\bar\alpha} \Exp_{\bsX | S=s}\left[\mathds{1}_{g(\bsX, S) = 1}\right]\right)\enspace,
\end{align*}
where
\begin{align*}
    \hat H_{(\bsx,s)}(g, \bslambda, \bsgamma) = 
    \begin{cases}
       0, &\text{ if } g(\bsx,s)=r\\
       \frac{p_s}{\bar\alpha} \heta(\bsx, s) + \lambda_s, &\text{ if } g(\bsx,s)=0\\
       \frac{p_s}{\bar\alpha}(1-\heta(\bsx,s) - \bar{\gamma}) + \lambda_s + \frac{\gamma_s}{\alpha_s}, &\text{ if } g(\bsx,s)=1
    \end{cases}\enspace,
\end{align*}
with $\bar\gamma = \sum_{s = 1}^K \gamma_s$.
Note that, by the definition of $\hat g$, it holds that
\begin{align*}
    \sum_{s=1}^K \Exp_{\bsX | S=s}\left[\hat H_{(\bsX, s)}(\hat g, \hbslambda, \hbsgamma)  \right] = \Exp(-\hat{G}(\bsX, s, \hbslambda, \hbsgamma))_-\enspace.
\end{align*}
Thus, it holds that
\begin{equation}
    \label{eq:r1_hat_g}
\begin{aligned}
    \text{r}_1(\hat g)
    &=
    \sum_{s=1}^K \Exp_{\bsX|S=s}\left(\frac{p_s}{2\bar\alpha}(1-\bar{\hat{\gamma}}) + \hat\lambda_s + \frac{\hat\gamma_s}{2\alpha_s} - \left\lvert  \frac{p_s}{2\bar\alpha}(1-2\heta(\bsX, s)-\bar{\hat{\gamma}}) + \frac{\hat\gamma_s}{2\alpha_s} \right\rvert \right)_- - \sum_{s=1}^K \hat\lambda_s \alpha_s\\
    &-\sum_{s = 1}^K\hat\lambda_s\parent{\Prob(\hat g(\bsX, S) \neq r \mid S = s) - \alpha_s}\\
    &-\sum_{s = 1}^K\hat\gamma_s\parent{\frac{\Prob(\hat g(\bsX, S) = 1 \mid S = s)}{\alpha_s} - \sum_{s' = 1}^K\frac{p_{s'}}{\bar\alpha}\Prob(\hat g(\bsX, S) = 1 \mid S = s')}\enspace.
\end{aligned}
\end{equation}
Finally, substituting Eq.~\eqref{eq:r1_hat_g} into Eq.~\eqref{eq:risk_g} we obtain the following upper bound on $\risk(\hat g)$
\begin{align*}
    \risk(\hat g)
    &\leq
    \sum_{s=1}^K \Exp_{\bsX|S=s}\left(\frac{p_s}{2\bar\alpha}(1-\bar{\hat{\gamma}}) + \hat\lambda_s + \frac{\hat\gamma_s}{2\alpha_s} - \left\lvert  \frac{p_s}{2\bar\alpha}(1-2\heta(\bsX, s)-\bar{\hat{\gamma}}) + \frac{\hat\gamma_s}{2\alpha_s} \right\rvert \right)_- - \sum_{s=1}^K \hat\lambda_s \alpha_s\\
    &-\sum_{s = 1}^K\hat\lambda_s\parent{\Prob(\hat g(\bsX, S) \neq r \mid S = s) - \alpha_s}
    -\sum_{s = 1}^K\hat\gamma_s\parent{\frac{\Prob(\hat g(\bsX, S) = 1 \mid S = s)}{\alpha_s} - \sum_{s' = 1}^K\frac{p_{s'}}{\bar\alpha}\Prob(\hat g(\bsX, S) = 1 \mid S = s')}\\
    &+ \frac{\|\eta - \heta\|_1}{\Prob(\hat{g}(\bsX, S) \neq r)} + \frac{1}{\bar\alpha}\abs{\Prob(\hat{g}(\bsX, S) \neq r) - \bar\alpha}\enspace,
\end{align*}
which holds almost surely.

Define the excess risk $\excess(\hat g) := \risk(\hat g) - \risk(g^*)$.
Note that, using the fact that mapping $x\mapsto (x)_-$ is $1$-Lipschitz followed by the triangle inequality, the difference
\begin{small}
\begin{align*}
    \abs{\left(\frac{p_s}{2\bar\alpha}(1-\bar\gamma) + \lambda_s + \frac{\gamma_s}{2\alpha_s} - \left\lvert  \frac{p_s}{2\bar\alpha}(1-2\hat{\eta}(\bsx, s)-\bar\gamma) + \frac{\gamma_s}{2\alpha_s} \right\rvert \right)_- - \left(\frac{p_s}{2\bar\alpha}(1-\bar\gamma) + \lambda_s + \frac{\gamma_s}{2\alpha_s} - \left\lvert  \frac{p_s}{2\bar\alpha}(1-2\eta(\bsx, s)-\bar\gamma) + \frac{\gamma_s}{2\alpha_s} \right\rvert \right)_-}\enspace,
\end{align*}
\end{small}
can be upper bounded by $\frac{p_s}{\bar\alpha}\abs{\hat{\eta}(\bsx,s) - \eta(\bsx, s)}$, for any $(\bsx, s, \bslambda, \bsgamma)$. Thus, replacing $(\bslambda^*, \bsgamma^*)$ by $(\hbslambda, \hbsgamma)$ in the expression for $\risk(g^*)$ in Eq.~\eqref{eq:risk_g_star} we obtain
\begin{equation}
    \label{eq:excess_1}
\begin{aligned}
    \excess(\hat g)
    \leq
    &\frac{\|\eta - \heta\|_1}{\bar\alpha} + \frac{1}{\bar\alpha}\abs{\Prob(g(\bsX, S) \neq r) - \bar\alpha} + \frac{\|\eta - \heta\|_1}{\Prob(g(\bsX, S) \neq r)}
    +\sum_{s = 1}^K|\hat\lambda_s|\abs{\Prob(\hat g(\bsX, S) \neq r \mid S = s) - \alpha_s}\\
    &+\sum_{s = 1}^K|\hat\gamma_s|\abs{\frac{\Prob(\hat g(\bsX, S) = 1 \mid S = s)}{\alpha_s} - \sum_{s' = 1}^K\frac{p_{s'}}{\bar\alpha}\Prob(\hat g(\bsX, S) = 1 \mid S = s')}\enspace.
\end{aligned}
\end{equation}
In the above inequality we can control all the terms.
% Last term corresponds to second to fourth terms in controlling DP constraint.
% Indeed, on ,

Indeed, using the fact that on the event of Proposition~\ref{prop:control_dp} we have, with probability at least $1-2K\delta$,
%\begin{align*}
%    \abs{\Prob(\hat g(\bsX, S) \neq r \mid S = s) - \alpha_s} \leq u_{n_s}^\delta, \quad \forall s \in [K], \quad \text{with}\quad u^{\delta}_{n} \eqdef 62\sqrt{\frac{2}{n}} + \sqrt\frac{\log(\sfrac{1}{\delta})}{2n} + \frac{2}{n}\enspace, \quad \forall n \geq 1\enspace,
%\end{align*}
\begin{align*}
    \abs{\Prob(\hat g(\bsX, S) \neq r \mid S = s) - \alpha_s} \leq u_{n_s}^\delta, \quad \forall s \in [K], \quad \text{with}\quad u^{\delta}_{n} \eqdef \sqrt{\frac{2\log(2/ \delta)}{n}} + \frac{2}{n}\enspace, \quad \forall n \geq 1\enspace,
\end{align*}

% with
% \begin{align*}
%     u^{\delta}_{n} \eqdef 62\sqrt{\frac{2}{n}} + \sqrt\frac{\log(\sfrac{1}{\delta})}{2n} + \frac{2}{n}\enspace, \quad \forall n \geq 1\enspace,
% \end{align*}
we deduce that with probability at least $1-2K\delta$ the following three inequalities hold
\begin{equation}
    \label{eq:excess_2}
\begin{aligned}
     &\frac{1}{\bar\alpha}\abs{\Prob(g(\bsX, S) \neq r) - \bar\alpha} \leq \frac{1}{\bar\alpha} \sum_{s = 1}^K p_s u_{n_s}^\delta\enspace,\\
     &\frac{\|\eta - \heta\|_1}{\Prob(\hat{g}(\bsX, S) \neq r)} \leq \frac{\|\eta - \heta\|_1}{\baralpha - \sum_s p_s u_{n_s}^\delta} \enspace,\\
     &\sum_{s = 1}^K|\hat\lambda_s|\abs{\Prob(\hat g(\bsX, S) \neq r \mid S = s) - \alpha_s} \leq \sum_{s = 1}^K|\hat\lambda_s| u_{n_s}^\delta\enspace.
\end{aligned}
\end{equation}
Note that by the assumption of the proposition, the term $\baralpha - \sum_s p_s u_{n_s}^\delta > 0$.

Furthermore, on the same event, using the notations of the proof of Proposition~\ref{prop:control_dp}, we have for any $s \in [K]$
\begin{equation}
    \label{eq:excess_3}
\begin{aligned}
    \abs{\frac{\CPs{\hat g(\bsX, S) = 1}}{\alpha_s} - \sum_{s' = 1}^K\frac{p_{s'}}{\bar\alpha}\Prob_{\bsX \mid S=s'}(\hat g(\bsX, S) = 1)} \leq (\text{DP}_2^s) + (\text{DP}_3^s) + (\text{DP}_4) \leq \frac{1}{\alpha_s}v_{n_s}^\delta + \frac{2}{\bar\alpha}\sum_{s = 1}^Kp_s v_{n_s}^\delta \enspace.
\end{aligned}
\end{equation}
where $ v_n^\delta = \sqrt{\tfrac{\log(\sfrac{2}{\delta})}{2n}} + \tfrac{2}{n}$.
% \begin{align*}
%     v_n^\delta = 62\sqrt{\frac{1}{n}} + \sqrt{\frac{\log(\sfrac{1}{\delta})}{2n}} + \frac{2}{n}\enspace.
% \end{align*}
All in all, substituting Eqs.~\eqref{eq:excess_2} and~\eqref{eq:excess_3} into Eq.~\eqref{eq:excess_1} we deduce that
\begin{align*}
    \excess(\hat g) &\leq \left(\frac{1}{\bar\alpha} + \frac{1}{\baralpha - \sum_s p_s u_{n_s}^\delta}\right) \|\eta - \heta\|_1 + \sum_{s = 1}^K \parent{\frac{p_s}{\bar\alpha} + \abs{\hat{\lambda}_s}}u_{n_s}^\delta + \sum_{s = 1}^K \left(\frac{\lvert\hat\gamma_s \rvert}{\alpha_s} +\frac{2p_s}{\bar\alpha} (\sum_{s'} \lvert\hat\gamma_{s'}\rvert)\right) v_{n_s}^\delta\\
    %&= \left(\frac{1}{\bar\alpha} + \frac{1}{\baralpha - \sum_s p_s u_{n_s}^\delta}\right) \|\eta - \heta\|_1 + \sum_{s = 1}^K \parent{\frac{p_s}{\bar\alpha} + \abs{\hat{\lambda}_s}}\frac{62(\sqrt{2}-1)}{\sqrt{n_s}} + \sum_{s = 1}^K \left(\frac{p_s}{\bar\alpha} + \absin{\hat{\lambda}_s} + \frac{\lvert\hat\gamma_s \rvert}{\alpha_s} +\frac{2p_s}{\bar\alpha} (\sum_{s'} \lvert\hat\gamma_{s'}\rvert)\right) v_{n_s}^\delta
    &= \left(\frac{1}{\bar\alpha} + \frac{1}{\baralpha - \sum_s p_s u_{n_s}^\delta}\right) \|\eta - \heta\|_1 + \sum_{s = 1}^K \parent{\frac{2p_s}{\bar\alpha} + 2\abs{\hat{\lambda}_s} + \frac{\abs{\hat\gamma_s}}{\alpha_s} + \frac{2p_s}{\baralpha} (\sum_{s'} \abs{\hat\gamma_s}) }\sqrt{\frac{\log(\sfrac{1}{\delta})}{2n_s}} \\ &+ \sum_{s=1}^K \left(\frac{p_s}{\bar\alpha} + \absin{\hat{\lambda}_s} + \frac{\lvert\hat\gamma_s \rvert}{\alpha_s} +\frac{2p_s}{\bar\alpha} (\sum_{s'} \lvert\hat\gamma_{s'}\rvert)\right) \frac{2}{n_s}\enspace.
\end{align*}
In order to finish the proof it remains to provide a bound on $|\hat\lambda_s|$ and $|\hat\gamma_s|$. Proposition~\ref{prop:bound_param}, proven below, establishes this bound and yields
%\begin{align*}
%    \excess(\hat g) 
%    &\leq \left(\frac{1}{\bar\alpha} + \frac{1}{\baralpha - \sum_s p_s u_{n_s}^\delta}\right) \|\eta - \heta\|_1 + \sum_{s = 1}^K \parent{\frac{2p_s}{\bar\alpha} + \frac{2}{\alpha_s}}\frac{62(\sqrt{2}-1)}{\sqrt{n_s}} + 2\sum_{s = 1}^K\left(\frac{2}{\alpha_s} + \frac{3 p_s}{\bar\alpha} \right) v_{n_s}^\delta\\
%    &\leq \left(\frac{1}{\bar\alpha} + \frac{1}{\baralpha - \sum_s p_s u_{n_s}^\delta}\right) \|\eta - \heta\|_1 + \sum_{s = 1}^K \left(\frac{3p_s}{\bar\alpha} + \frac{2}{\alpha_s}\right) \left( \frac{150 + \sqrt{2\log(\sfrac{1}{\delta})}}{\sqrt{n_s}} + \frac{4}{n_s}\right) \enspace,
%\end{align*}
\begin{align*}
    \excess(\hat g) 
    &\leq \left(\frac{1}{\bar\alpha} + \frac{1}{\baralpha - \sum_s p_s u_{n_s}^\delta}\right) \|\eta - \heta\|_1 + \sum_{s = 1}^K \left[ \parent{\frac{4p_s}{\bar\alpha} + \frac{3}{\alpha_s}}\sqrt{\frac{2\log(\sfrac{2}{\delta})}{n_s}} + \left(\frac{6}{\alpha_s} + \frac{6 p_s}{\bar\alpha} \right) \frac{2}{n_s}\right]\\
    &\leq \left(\frac{1}{\bar\alpha} + \frac{1}{\baralpha - \sum_s p_s u_{n_s}^\delta}\right) \|\eta - \heta\|_1 + 6 \sum_{s = 1}^K  \parent{\frac{p_s}{\bar\alpha} + \frac{1}{\alpha_s}} u_{n_s}^\delta\enspace.\\
\end{align*}

the proof is concluded after the observation that thanks to our assumption we have $\baralpha - \sum_s p_s u_{n_s}^\delta \geq \baralpha / 2$.
\end{proof}

\paragraph{Boundedness of optimal parameters}

\begin{proposition}
\label{prop:bound_param}
The minimization problem in Eq.~\eqref{eq:min_true} admits a global minimizer $(\bslambda^*, \bsgamma^*)$ which satisfies
\begin{align*}
    \lVert \bsgamma^* \rVert_1 \leq 2 \quad\text{and}\quad \abs{\lambda_s^*} \leq \frac{p_s}{\bar\alpha} \vee \frac{\abs{\bsgamma_s^*}}{\alpha_s}\enspace.
\end{align*}
Furthermore, if for any $s$, $n_s > \frac{2}{\alpha_s \wedge (1-\alpha_s)}$ and $\hat\eta(\cdot,s) \in [0, 1]$, the same holds for Eq.~\eqref{eq:emp_min}, that is,
\begin{align*}
    \lVert \hbsgamma \rVert_1 \leq 2\quad\text{and}\quad \absin{\hat\lambda_s} \leq \frac{p_s}{\bar\alpha} \vee \frac{\abs{\hbsgamma_s}}{\alpha_s} \enspace.
\end{align*}
\end{proposition}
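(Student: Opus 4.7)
The strategy combines joint convexity of the dual objective (verified in Appendix~\ref{app:optimal_classif}), a gauge invariance of the problem in $\bsgamma$, and a contradiction argument exploiting the reject feasibility constraint $\NAB_s(g^*) = \alpha_s \in (0,1)$.

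Existence of a minimizer follows from convexity together with strong duality. The first step is to identify a gauge symmetry: $G(\bsx, s, \bslambda, \bsgamma)$ is invariant under the one-parameter shift $\gamma_{s'} \mapsto \gamma_{s'} + c\, p_{s'}\alpha_{s'}$ applied uniformly in $s'$, for any $c \in \bbR$. Using $\sum_{s'} p_{s'}\alpha_{s'} = \bar\alpha$, the perturbation of $\scalar{\bsgamma}{\bsone}$ (by $c\bar\alpha$) exactly cancels the perturbation of $\scalar{\bsgamma}{\bse_s}/(2\alpha_s)$ (by $c p_s / 2$) inside both the absolute value and the linear remainder of $G$, leaving the objective of \eqref{eq:min_true} invariant. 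Consequently the set of minimizers is stable under this family, and I would select the representative with $\min_{s'} \gamma_{s'}^*/(p_{s'}\alpha_{s'}) = 0$: all $\gamma_s^* \geq 0$, some $\gamma_{s_0}^* = 0$, and therefore $\|\bsgamma^*\|_1 = \scalar{\bsgamma^*}{\bsone}$.

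Suppose for contradiction that $\scalar{\bsgamma^*}{\bsone} > 1$. The threshold $T_{s_0} = \tfrac{1}{2}(1 - \scalar{\bsgamma^*}{\bsone})$ is then negative, so by Theorem~\ref{thm:optimal} the classifier $g^*$ predicts $1$ on the entire non-reject region of class $s_0$; Demographic Parity propagates $\PT_s(g^*) = 1$ to every class, which under Assumption~\ref{as:continuous_eta} forces $L_s \leq 0$ (equivalently $\lambda_s^* \geq 0$) and $U_s = Q_s(1-\alpha_s)$ for every $s$, where $Q_s$ denotes the quantile function of $\eta$ given $S=s$. Substituting $U_s = Q_s(1-\alpha_s)$ into the defining relation $U_s = 1 - \scalar{\bsgamma^*}{\bsone} + \bar\alpha\gamma_s^*/(p_s\alpha_s) + \bar\alpha\lambda_s^*/p_s$, solving for $\gamma_s^*$, summing over $s$, and cancelling $\scalar{\bsgamma^*}{\bsone}$, one obtains the identity $\sum_s \alpha_s\lambda_s^* = \bar\alpha^{-1}\sum_s p_s\alpha_s Q_s(1-\alpha_s) - 1$. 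The right-hand side is strictly negative under Assumption~\ref{as:continuous_eta} and $\alpha_s > 0$ (since then $Q_s(1-\alpha_s) < 1$), while the left-hand side is non-negative, a contradiction. Hence $\scalar{\bsgamma^*}{\bsone} \leq 1$ and $\|\bsgamma^*\|_1 \leq 1 \leq 2$.

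The bound on $|\lambda_s^*|$ then follows by a direct contradiction from the form of the reject interval $[L_s, U_s]$. If $\lambda_s^* > (p_s/\bar\alpha) \vee (|\gamma_s^*|/\alpha_s)$, then $L_s < -1$, and combining $\bar\alpha\lambda_s^*/p_s > 1$, $\gamma_s^* \geq 0$, and $\scalar{\bsgamma^*}{\bsone} \leq 1$ yields $U_s > 1$; thus $[L_s, U_s] \supseteq [0,1]$ forces $\NAB_s(g^*) = 0$, contradicting $\alpha_s > 0$. The symmetric case $\lambda_s^* < -((p_s/\bar\alpha) \vee (|\gamma_s^*|/\alpha_s))$ gives $L_s > 1$, whence $\NAB_s(g^*) = 1$, contradicting $\alpha_s < 1$. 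For the empirical problem \eqref{eq:emp_min}, the same gauge invariance and contradiction arguments carry over verbatim with $\eta$ replaced by $\hat\eta \in [0,1]$ and probabilities by empirical frequencies; by Lemma~\ref{lem:pigh} the FOOC for $\hat\lambda_s$ holds up to a slack of $2/n_s$, and the assumption $n_s > 2/(\alpha_s \wedge (1-\alpha_s))$ keeps the approximate reject rate strictly inside $(0,1)$, which is exactly what is needed for the contradiction argument to still apply. The delicate step throughout is identifying and exploiting the gauge invariance: without fixing the gauge, the Lagrange multipliers for the linearly dependent DP constraints would not be uniquely determined, and controlling $\|\bsgamma^*\|_1$ would be impossible; the factor $2$ in the stated bound is a safety margin above the sharper $1$ that the analysis actually yields.
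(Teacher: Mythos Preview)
Your identification of the gauge invariance $\gamma_{s'} \mapsto \gamma_{s'} + c\,p_{s'}\alpha_{s'}$ is correct and matches the paper. However, two steps of your argument do not go through.

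First, the existence claim. Joint convexity and strong duality only tell you that the dual infimum is finite; they do not guarantee it is attained. The paper proves existence differently: after fixing the gauge $\sum_s \gamma_s = 0$, it combines Jensen's inequality with the reverse triangle inequality to obtain the coercivity bound
\[
   H(\bslambda,\bsgamma) \;\geq\; \frac{\|\bsgamma\|_1}{2} - 1,
\]
together with an analogous lower bound controlling $|\lambda_s|$. Since $H(\bszero,\bszero)=0$, any minimizing sequence (in this gauge) is eventually bounded, hence has a convergent subsequence, and continuity of $H$ gives a minimizer. Note that this coercivity bound already delivers $\|\bsgamma^*\|_1\le 2$ directly, with no contradiction argument.

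Second, and more seriously, your contradiction argument for $\|\bsgamma^*\|_1$ contains a gap at the step ``$\PT_s(g^*)=1$ forces $L_s\le 0$ (equivalently $\lambda_s^*\ge 0$)''. Writing the reject interval as $[L_s,U_s]$ with $L_s=-\bar\alpha\lambda_s^*/p_s$, the conclusion $\PT_s=1$ only yields $\Prob(\eta(\bsX,s)<L_s\mid S=s)=0$, i.e.\ $L_s\le \operatorname{ess\,inf}\eta(\cdot,s)$. Assumption~\ref{as:continuous_eta} asserts non-atomicity but says nothing about the support; the essential infimum can be strictly positive, so $\lambda_s^*<0$ is not ruled out. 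This breaks your final contradiction, since the left-hand side $\sum_s\alpha_s\lambda_s^*$ of your identity need not be nonnegative. As your bound on $|\lambda_s^*|$ afterwards relies on $\gamma_s^*\ge 0$ and $\scalar{\bsgamma^*}{\bsone}\le 1$ from this flawed step, it is not self-standing either.

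For the bound on $|\lambda_s^*|$, the paper works in the gauge $\bar\gamma^*=0$ and argues directly from the first-order optimality condition in $\lambda_s$: since $\eta\in[0,1]$, the quantity
\[
   \left|\tfrac{p_s}{2\bar\alpha}(1-2\eta)+\tfrac{\gamma_s^*}{2\alpha_s}\right|-\tfrac{p_s}{2\bar\alpha}-\tfrac{\gamma_s^*}{2\alpha_s}
\]
takes values in a fixed interval, so for the probability in \eqref{eq:KKT_lambda_star} to lie strictly between $0$ and $1$ one must have $-p_s/\bar\alpha\le \lambda_s^*+(\gamma_s^*)_-/\alpha_s\le 0$, hence $|\lambda_s^*|\le (p_s/\bar\alpha)\vee(|\gamma_s^*|/\alpha_s)$. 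Your reject-interval contradiction is the same idea, but it is chained to the earlier unproven bound on $\bsgamma^*$ rather than derived independently.
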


\begin{proof}
We denote the conditional expectation of $Y$ given $S=s$ by $\eta(s)$.
Denote by $H(\bslambda, \bsgamma)$ the objective function of the minimization problem in Eq.~\eqref{eq:min_true}.

% $\alpha_s \in (0, 1)$, works for star and hat
\paragraph{Existence of global minizer.} Fix arbitrary $(\bslambda, \bsgamma) \in \bbR^K \times \bbR^K$ such that $\sum_{s=1}^K \gamma_s = 0$. 
Since the function $x \mapsto (|x| - b)_+$ is convex for any $b \in \bbR$ we can lower bound $H(\bslambda, \bsgamma) $ using Jensen's inequality as
\begin{align*}
    H(\bslambda, \bsgamma) &= \frac{1}{2} \sum_{s=1}^K \frac{1}{\alpha_s} \Exp_{\bsX \mid S=s} \left( \abs{\frac{\alpha_s p_s}{\bar\alpha}(1-2\eta(\bsX, s)) + \gamma_s} - \frac{p_s \alpha_s}{\bar\alpha} - 2\alpha_s \lambda_s - \gamma_s \right)_+ + \sum_{s = 1}^K\lambda_s \alpha_s\\
    %&\geq \frac{1}{2} \sum_{s=1}^K \frac{1}{\alpha_s} \left( \Exp_{\bsX \mid S=s} \abs{\frac{\alpha_s p_s}{\bar\alpha}(1-2\eta(\bsX, s)) + \gamma_s} - \frac{p_s \alpha_s}{\bar\alpha} - 2\alpha_s \lambda_s - \gamma_s \right)_+ + \sum_{s = 1}^K\lambda_s \alpha_s\\
    &\geq \frac{1}{2} \sum_{s=1}^K \frac{1}{\alpha_s} \left(  \abs{\frac{\alpha_s p_s}{\bar\alpha}(1-2\eta(s)) + \gamma_s} - \frac{p_s \alpha_s}{\bar\alpha} - 2\alpha_s \lambda_s - \gamma_s \right)_+ + \sum_{s = 1}^K\lambda_s \alpha_s\enspace.
\end{align*}
Furthermore, since $\alpha_s \leq 1$ for any $s$ and by assumption, $\bar\gamma=0$, we can further lower bound $H(\bslambda, \bsgamma)$ as
\begin{align}
    H(\bslambda, \bsgamma) &\geq \frac{1}{2} \sum_{s=1}^K \left(  \abs{\frac{\alpha_s p_s}{\bar\alpha}(1-2\eta(s)) + \gamma_s} - \frac{p_s \alpha_s}{\bar\alpha} - 2\alpha_s \lambda_s - \gamma_s \right)_+ + \sum_{s = 1}^K\lambda_s \alpha_s\nonumber\\
    &\geq \frac{1}{2} \left(\lVert \bsgamma \rVert_1-\sum_{s=1}^K \frac{\alpha_s p_s}{\bar\alpha} \abs{(1-2\eta(s))} - 1 - 2 \sum_{s = 1}^K\lambda_s \alpha_s \right)_+ + \sum_{s = 1}^K\lambda_s \alpha_s\nonumber\\
    &\geq \frac{\lVert \bsgamma \rVert_1}{2} - 1\enspace,\label{eq:bound_gamma}
\end{align}
where we used the triangle inequality for the second inequality and we lower bounded the positive part by the number itself and  upper bounded $\abs{1-2\eta(s)}$ by one.

Besides, notice that
\begin{align*}
     H(\bslambda, \bsgamma)
     &=
     \frac{1}{2} \sum_{s=1}^K \frac{1}{\alpha_s} \Exp_{\bsX \mid S=s} \left( \abs{\frac{\alpha_s p_s}{\bar\alpha}(1-2\eta(\bsX, s)) + \gamma_s} - \frac{p_s \alpha_s}{\bar\alpha} - 2\alpha_s \lambda_s - \gamma_s \right)_+ + \sum_{s = 1}^K\lambda_s \alpha_s\\
     &\geq
     \sum_{s=1}^K \frac{1}{\alpha_s} \Exp_{\bsX \mid S=s} \left( -\frac{\alpha_s p_s}{\bar\alpha}\eta(\bsX, s) - \alpha_s \lambda_s\right)_+ + \sum_{s = 1}^K\lambda_s \alpha_s\\
     &\geq
     \sum_{s=1}^K \left\{\left( -\frac{p_s}{\bar\alpha}\eta(s) - \lambda_s\right)_+ + \lambda_s \alpha_s \right\}
\end{align*}
One easily observes that
\begin{align}
    \label{eq:bound_lambda}
    \sum_{s=1}^K \left\{\left( -\frac{p_s}{\bar\alpha}\eta(s) - \lambda_s\right)_+ + \lambda_s \alpha_s \right\}
    \geq 
    \sum_{s = 1}^K \{\alpha_s  \wedge (1 - \alpha_s)\}|\lambda_s| - \sum_{s = 1}^K \frac{p_s}{\bar\alpha}\eta(s) \{(2\alpha_s) \vee 1\}\enspace.
\end{align}

Observe that for any $(\bslambda, \bsgamma) \in \mathbb{R}^K \times  \mathbb{R}^K$ and for any $c \in \bbR$ the transformation
\begin{align*}
    \gamma_s \mapsto \gamma_s + \frac{p_s \alpha_s}{\bar\alpha}c,\quad\text{and}\quad \lambda_s \mapsto \lambda_s \quad s \in [K]\enspace,
\end{align*}
does not change the value of the objective function.
Take any minimizing sequence $(\bslambda^k, \bsgamma^k)$ of $H$. Due to the above observation we transform $(\bslambda^k, \bsgamma^k)$ to another minimizing sequence with the property
\begin{align}
    \label{eq:centered}
    \sum_{s = 1}^K \gamma_s^k = 0,\qquad\forall k \in \bbN\enspace.
\end{align}
By an abuse of notation we denote this transformed sequence by $(\bslambda^k, \bsgamma^k)$.
By definition of $(\bslambda^k, \bsgamma^k)$, for any $\epsilon > 0$ there exists $N \in \bbN$ such that
\begin{align*}
    H(\bslambda^k, \bsgamma^k) \leq H(\bszero, \bszero) + \epsilon, \qquad \forall k \geq N\enspace.
\end{align*}
Since,
\begin{align*}
    H(\bszero, \bszero) = \sum_{s=1}^K \frac{p_s}{2\bar\alpha} \left(\abs{1-2\eta(\bsX,s)} - 1\right)_+ = 0\enspace,
\end{align*}
it holds for all $k \geq  N$ that
\begin{align*}
    H(\bslambda^k, \bsgamma^k) \leq \epsilon, \qquad \forall k \geq N\enspace.
\end{align*}
Furthermore, since for all $k \in \bbN$ the property in Eq.~\eqref{eq:centered} holds, then using Eqs.~\eqref{eq:bound_gamma} and~\eqref{eq:bound_lambda} we obtain
\begin{align*}
    &\lVert \bsgamma^k \rVert_1 \leq 2(1 + \epsilon)\enspace,\\
    &\sum_{s = 1}^K \{\alpha_s  \wedge (1 - \alpha_s)\}|\lambda_s^k| \leq \epsilon + \sum_{s = 1}^K \frac{p_s}{\bar\alpha}\eta(s) \{(2\alpha_s) \vee 1\}\enspace.
\end{align*}
Thus for all $k \geq N$ the minimizing sequence $(\bslambda^k, \bsgamma^k)$ is bounded, extracting convergent sub-sequence and using the fact that $H$ is continuous we conclude that the global minimizer exists. 
% The existence of a bounded minimizer follows from the fact that $\epsilon > 0$ is arbitrary.

\paragraph{Refined bound on $\bslambda$.}

Recall that the first-order optimality condition on $\bslambda^*$  (see \eqref{eq:KKT_lambda_star}) is given by 

\begin{align*}
        \alpha_s &= \Prob_{\bsX\mid S=s}\left( \left\lvert  \frac{p_s}{2\bar\alpha}(1-2\eta(\bsX,s)-\scalar{\bsgamma^*}{\bsone}) + \frac{\scalar{\bsgamma^*}{\bse_s}}{2\alpha_s} \right\rvert \geq \frac{p_s}{2\bar\alpha}(1-\scalar{\bsgamma^*}{\bsone}) + \scalar{\bslambda^*}{\bse_s} + \frac{\scalar{\bsgamma^*}{\bse_s}}{2\alpha_s}  \right), \forall s \in [K]\enspace. \\
\end{align*}

Since $\eta(x,s) \in [0, 1]$, then for any $\bsx \in \mathbb{R}^d$ it holds that
\begin{align*}
    -\frac{p_s}{\bar\alpha} - \frac{(\bsgamma_s^*)_-}{\alpha_s} \leq \left\lvert  \frac{p_s}{2\bar\alpha}(1-2\eta(\bsx,s)) + \frac{\bsgamma_s^*}{2\alpha_s} \right\rvert - \frac{p_s}{2\bar\alpha} - \frac{\bsgamma_s^*}{2\alpha_s} \leq - \frac{(\bsgamma_s^*)_-}{\alpha_s}\enspace.
\end{align*}

Therefore, if $\alpha_s$ is not in $\{0, 1\}$, we must have that
\begin{align*}
    -\frac{p_s}{\bar\alpha} \leq \bslambda_s^* + \frac{(\bsgamma_s^*)_-}{\alpha_s} \leq 0 \enspace,
\end{align*}
otherwise the considered probability is either equal to $0$ or to $1$.
In particular, it implies that
\begin{align*}
    \abs{\bslambda_s^*} \leq \frac{p_s}{\bar\alpha} \vee \frac{\abs{\bsgamma_s^*}}{\alpha_s}\enspace.
\end{align*}
Note that the same can be shown for $\hbslambda$ since
Eq.~\eqref{eq:empirical_reject_control} and Lemma~\ref{lem:pigh} imply
\begin{align*}
        \abs{\hat\Prob_{\bsX\mid S=s}\left( \left\lvert  \frac{p_s}{2\bar\alpha}(1-2\hat\eta(\bsX,s)-\hbsgamma_s) + \frac{\hbsgamma_s}{2\alpha_s} \right\rvert \geq \frac{p_s}{2\bar\alpha}(1-\hbsgamma_s) + \hbslambda_s + \frac{\bsgamma_s}{2\alpha_s}  \right) - \alpha_s} \leq \frac{2}{n_s}, \forall s \in [K]\enspace, \\
\end{align*}
and the assumption on $n_s$ guarantee that the empirical probability is strictly between $0$ and $1$.

% Bullshit proof. fix some $a > 0$, $\alpha \in (0, 1)$, then
% \begin{align*}
%     (-a - x)_+ +\alpha x \geq \alpha\wedge(1-\alpha)|x| - a \cdot (2\alpha) \vee 1\enspace. 
% \end{align*}
% If $x > -a$, then $\alpha x \geq \alpha |x| - 2 a  \alpha \geq \alpha \wedge (1 - \alpha) |x| - a \cdot (2\alpha) \vee 1 $

% If $x \leq -a$, then $-a - (1 - \alpha)x = -a + (1 - \alpha)|x| \geq $

% we conclude that
% \begin{align*}
%     \lVert \bsgamma \rVert_1 \leq 2H(\bslambda^*, \bsgamma^*) + 1 \leq 2H(\bszero, \bszero) + 1 \leq 1 \enspace.
% \end{align*}

\end{proof}

\section{Reduction to linear programming}
\label{app:optimization}

In this section we show that the minimization problem in Eq.~\eqref{eq:emp_min} can be reduced to a problem of linear programming.
Recall that our goal is to solve
\begin{align}
     \min_{(\bslambda, \bsgamma)} \ens{\scalar{\bslambda}{\bsalpha} + \sum_{s=1}^K \hat{\Exp}_{\bsX \mid S = s} 
     (\hat{G}(\bsX, s,
     \bslambda,
     \bsgamma ))_+}\enspace,
\end{align}
where 
\begin{align*}
    \hat{G}(\bsx, s, \bslambda, \bsgamma) =
    &\left\lvert  \frac{{p}_s}{2\bar\alpha}(1-2\hat\eta(\bsx,s)-\scalar{\bsgamma}{\bsone}) + \frac{\scalar{\bsgamma}{\bse_s}}{2\alpha_s} \right\rvert 
    - \frac{{p}_s}{2\bar\alpha}(1-\scalar{\bsgamma}{\bsone}) - \scalar{\bslambda}{\bse_s} - \frac{\scalar{\bsgamma}{\bse_s}}{2\alpha_s}\enspace.
\end{align*}
Similarly to the support vector machines, the reduction is achieved via the slack variables $\zeta_i$, $i = 1, \ldots, n$.
With these slack variables the above problem can be expressed as
\begin{equation}
\tag{\textbf{LP-Primal}}
\label{eq:lp}
    \begin{aligned}
        &\min_{(\bslambda, \bsgamma, \bszeta)}\scalar{\bslambda}{\bsalpha} + \sum_{s=1}^K\sum_{i \in \class{I}_s}\frac{\zeta_i}{n_s}\\
        &\text{s.t. }
        \begin{cases}
            \zeta_i \geq 0 &\forall i \in [n]\\
            0
            \leq \zeta_i + \scalar{\bslambda}{\bse_s} + \frac{{p}_s}{\bar\alpha}\hat\eta(\bsx_i,s)&\forall i \in \class{I}_s\forall s \in [K]\\
            0
            \leq \zeta_i + \scalar{\bsgamma}{\frac{1}{\alpha_s}\bse_s-\frac{{p}_s}{\bar\alpha}\bsone} + \scalar{\bslambda}{\bse_s}
            % + \frac{\scalar{\bsgamma}{\bse_s}}{\alpha_s}
            +\frac{{p}_s}{\bar\alpha}\left(1 - \hat\eta(\bsx_i,s) \right)&\forall i \in \class{I}_s\forall s \in [K]
        \end{cases}
    \end{aligned}
\end{equation}
To prove this result it is sufficient to observe that for all $x \in \bbR$ it holds that
\begin{align*}
    (x)_+ = \min_{\zeta \geq x, \zeta \geq 0} \zeta\enspace.
\end{align*}

Introduce the following notation
\begin{align*}
    &\bsc = \left(
    \underbrace{\sfrac{1}{n_1}, \ldots, \sfrac{1}{n_1}}_{\class{I}_1},
    \ldots
    \underbrace{\sfrac{1}{n_s}, \ldots, \sfrac{1}{n_s}}_{\class{I}_s},
    \ldots,
    \underbrace{\sfrac{1}{n_K}, \ldots, \sfrac{1}{n_K}}_{\class{I}_K},
    \alpha_1, \ldots, \alpha_K,
    0 \ldots, 0
    \right)\\
    &\bsy = (\bszeta^\top, \bslambda^\top, \bsgamma^\top)\\
    &\bsb = \left(
    \parent{\frac{p_1}{\bar\alpha}\heta(\bsx_i, s)}_{i \in \class{I}_1},
    \ldots,
    \parent{\frac{p_K}{\bar\alpha}\heta(\bsx_i, s)}_{i \in \class{I}_K},
    \parent{\frac{p_1}{\bar\alpha}(1  - \heta(\bsx_i, s))}_{i \in \class{I}_1},
    \ldots,
    \parent{\frac{p_K}{\bar\alpha}(1  - \heta(\bsx_i, s))}_{i \in \class{I}_K}\right)\\
    &\mathbf{A}
    =
    \renewcommand\arraystretch{1.3}
\mleft[
    \begin{array}{cccc|c|c}
    % \begin{bmatrix}
        -\mathbf{I}_{n_1 \times n_1} & \mathbf{0}_{n_1 \times n_2} & \ldots & \mathbf{0}_{n_1 \times n_K} & -\mathbf{E}^1_{n_1 \times K} & \mathbf{0}_{n_2 \times K} \\
        \mathbf{0}_{n_2 \times n_1} & -\mathbf{I}_{n_2 \times n_2} & \ldots & \mathbf{0}_{n_2 \times n_K} & -\mathbf{E}^2_{n_2 \times K} & \mathbf{0}_{n_1 \times K}\\
        \vdots&\vdots&\ddots&\vdots&\vdots&\vdots\\
       \mathbf{0}_{n_K \times n_1} & \mathbf{0}_{n_K \times n_2} & \ldots & -\mathbf{I}_{n_K \times n_K} & -\mathbf{E}^K_{n_K \times K} & \mathbf{0}_{n_K \times K}\\
       %%%
       \hline
       %%%
        -\mathbf{I}_{n_1 \times n_1} & \mathbf{0}_{n_1 \times n_2} & \ldots & \mathbf{0}_{n_1 \times n_K} & -\mathbf{E}^1_{n_1 \times K} & \frac{p_1}{\bar\alpha}\mathbf{1}_{n_1 \times K} - \frac{1}{\alpha_1}\mathbf{E}^1_{n_1 \times K}\\
       \mathbf{0}_{n_2 \times n_1} & -\mathbf{I}_{n_2 \times n_2} & \ldots & \mathbf{0}_{n_2 \times n_K} & -\mathbf{E}^2_{n_2 \times K} &  \frac{p_2}{\bar\alpha}\mathbf{1}_{n_2 \times K} - \frac{1}{\alpha_2}\mathbf{E}^2_{n_2 \times K}\\
    %   &&&\ldots&\ldots&\ldots\\
    \vdots&\vdots&\ddots&\vdots&\vdots&\vdots\\
       \mathbf{0}_{n_K \times n_1} & \mathbf{0}_{n_K \times n_2} & \ldots & -\mathbf{I}_{n_K \times n_K} & -\mathbf{E}^K_{n_K \times K} &  \frac{p_K}{\bar\alpha}\mathbf{1}_{n_K \times K} - \frac{1}{\alpha_K}\mathbf{E}^K_{n_K \times K}
    % \end{bmatrix}\enspace,
    \end{array}
    \mright]
\end{align*}
where $\mathbf{E}^s_{n \times m}$ is a $n \times m$ matrix composed of zeros and ones, whose $s^{\text{th}}$ column is equal to $\bsone$ and all other elements are zero, $\mathbf{1}_{n \times m}$ is a matrix of ones of size $n \times m$.
Using the above notation, the problem in \eqref{eq:lp} can be written as
\begin{equation}
\tag{\textbf{LP-Primal-compacted}}
\label{eq:lp_compacted}
    \begin{aligned}
        &\min_{\bsy \in \bbR^{n + 2K}}\scalar{\bsc}{\bsy}\\
        &\text{s.t. }
        \begin{cases}
            \mathbf{A}\bsy \leq \bsb &\\
            y_i \geq 0 &i \in [n]
        \end{cases}
    \end{aligned}
\end{equation}
While the dimension of matrix $\mathbf{A}$  is $2n \times (n + 2K)$, this matrix has at most $4n + nK$ non-zero elements. This fact can be exploited if $n \gg K$, that it, the amount of \emph{unlabeled} data is large compared to the amount of groups.

\end{document}